\documentclass{article}

\PassOptionsToPackage{numbers, compress}{natbib}

\usepackage[preprint]{neurips_2026}

\usepackage[utf8]{inputenc} % allow utf-8 input
\usepackage[T1]{fontenc}    % use 8-bit T1 fonts
\usepackage{hyperref}       % hyperlinks
\usepackage{url}            % simple URL typesetting
\usepackage{booktabs}       % professional-quality tables
\usepackage{amsfonts}       % blackboard math symbols
\usepackage{nicefrac}       % compact symbols for 1/2, etc.
\usepackage{microtype}      % microtypography
\usepackage{xcolor}         % colors
\usepackage{amsmath}
\usepackage{algorithm}
\usepackage{algorithmic}
\usepackage{multirow}
\usepackage{graphicx}
\usepackage{subcaption}
\usepackage{caption}
\usepackage[font=small]{caption}
\usepackage[export]{adjustbox}
\usepackage{wrapfig}

\usepackage{amsmath}
\usepackage{amssymb}
\usepackage{mathtools}
\usepackage{amsthm}

\theoremstyle{plain}
\newtheorem{theorem}{Theorem}[section]
\newtheorem{proposition}[theorem]{Proposition}

\theoremstyle{definition}

\theoremstyle{remark}

\title{Parallel Tempering Initial Sampling in \\Inference-Time Reward Alignment}

\author{
  Myeongjun Oh$^{1}$ \quad
  Gwangho Kim$^{2}$ \quad
  Sungyoon Lee$^{1,2}$ \\
  \\
  $^{1}$Department of Artificial Intelligence \\
  $^{2}$Department of Computer Science \\
  Hanyang University, Seoul, Korea \\
  \texttt{\{dhaudwns0526, ggh1999, sungyoonlee\}@hanyang.ac.kr}
}

\begin{document}

\maketitle

\begin{abstract}
Inference-time reward alignment steers pretrained diffusion and flow-based generative models to satisfy user-specified rewards without retraining. Recently, Sequential Monte Carlo (SMC) has emerged as a powerful framework for this task by iteratively filtering and propagating multiple particles. However, we show that standard SMC-based methods often suffer from poor performance because they initialize particles from a standard prior, whereas high-reward regions in complex reward landscapes are extremely rare. Further, we show that even recent reward-aware initial sampling approaches remain vulnerable to getting trapped in local modes, as complex reward landscapes are often multi-modal. To overcome these limitations, we propose PATHS (PArallel Tempering for High-complexity reward Sampling), a novel initialization method that couples multiple sampling chains through parallel tempering. PATHS maintains a ladder of reward-tempered chains and periodically performs Metropolis swaps, enabling efficient exploration across flattened reward landscapes, thereby mitigating the mode-trapping issues. Our analysis reveals that this mechanism substantially enhances the finite-budget exploration of rare, high-reward regions that are typically challenging to sample. Experiments on layout-to-image and quantity-aware generation show that PATHS achieves consistent gains in alignment quality, particularly on complex prompts.
\end{abstract}

\section{Introduction}
\label{introduction}

The frontier of generative modeling is shifting beyond scaling pre-training alone toward improving post-training and inference-time computation. In continuous-domain generative modeling, diffusion and flow models~\citep{sohl2015deep, ho2020denoising, song2021score, lipman2023flow, liu2022flow} are increasingly expected to satisfy user-specified rewards, including fine-grained text alignment, layout constraints, and structural plausibility~\citep{chung2023dps, yu2023freedom, feng2024layoutgpt}. \textit{Inference-time reward alignment} addresses this objective without modifying pretrained models~\citep{uehara2025inference}.

Recent approaches formulate inference-time reward alignment as Sequential Monte Carlo (SMC), where multiple particles are propagated and reweighted throughout the denoising trajectory~\citep{smc_wu2023practical, smc_kim2025inference, smc_kim2025testtime, smc_singhal2025general, smc_skreta2025feynman}. However, \citet{yoon2025psi} show that standard SMC methods often suffer from poor performance due to inadequate initialization from reward-agnostic priors. To address this, $\Psi$-Sampler proposes sampling from a reward-informed initial distribution using the preconditioned Crank--Nicolson Langevin (pCNL) algorithm~\citep{cotter2013mcmc}, substantially improving initialization quality. This focus on initialization is especially timely given the rise of recent few-step generative models~\citep{flux2024}, where early latent selection becomes increasingly critical.

Despite these advances, we show that both SMC-based methods and $\Psi$-Sampler can still fail under complex reward landscapes, particularly when (1) high-reward states are extremely \textbf{rare} within the prior distribution, or (2) the reward landscape is highly \textbf{multi-modal}. First, while prior work attributes SMC failure partly to vanishing guidance strength across denoising steps, we show that SMC can fail more severely in rare, high-complexity reward settings, where particles initialized from the prior often struggle to sufficiently cover reward-relevant regions (Section~\ref{sec:smc_fail}). Second, although $\Psi$-Sampler mitigates this initialization mismatch, we show that independent pCNL chains remain vulnerable to \textit{mode trapping} in multi-modal reward landscapes, often collapsing onto limited subsets of valid modes and failing to adequately explore high-reward regions (Section~\ref{sec:psi_fail}).

To address these limitations, we propose \textbf{PATHS} (\textbf{PA}rallel \textbf{T}empering for \textbf{H}igh-complexity reward \textbf{S}ampling), an inference-time initialization framework that leverages Parallel Tempering (Replica Exchange)~\citep{swendsen1986replica, geyer1991markov, earl2005parallel}. PATHS runs multiple pCNL chains across a temperature ladder: higher-temperature chains more freely traverse flattened reward landscapes and discover promising but otherwise difficult-to-reach reward modes, while the lowest-temperature chain targets the original reward-aware posterior. As illustrated in Figure~\ref{fig:swap_trajectory}, periodic Metropolis swap proposals allow high-reward states discovered by exploratory hot chains to be exchanged into colder chains, enabling the cold chain to inherit better reward-aligned configurations without independently overcoming severe local barriers. This cross-temperature exploration substantially improves initialization under rare and multi-modal reward landscapes.

We evaluate PATHS on two reward-alignment tasks characterized by inherently complex reward landscapes: \emph{layout-to-image generation}, where multiple object arrangements can satisfy target bounding-box constraints, and \emph{quantity-aware generation}, where many spatial compositions can realize the same target object count. Across both tasks, PATHS consistently outperforms prior SMC-based methods, including TDS~\citep{smc_wu2023practical} and DAS~\citep{smc_kim2025testtime}, as well as $\Psi$-Sampler~\citep{yoon2025psi}, with particularly strong gains in complex settings. These results highlight the importance of not only reward-aware initialization, but also robust cross-mode exploration for inference-time reward alignment under complex reward landscapes.

\begin{figure}[t]
    \centering
    {\small \textit{Prompt : "A \textcolor{green}{car} and \textcolor{orange}{airplane} above a \textcolor{blue}{horse} and on the right of a \textcolor{red}{dog}..."} \\}
    \includegraphics[width=0.9\linewidth]{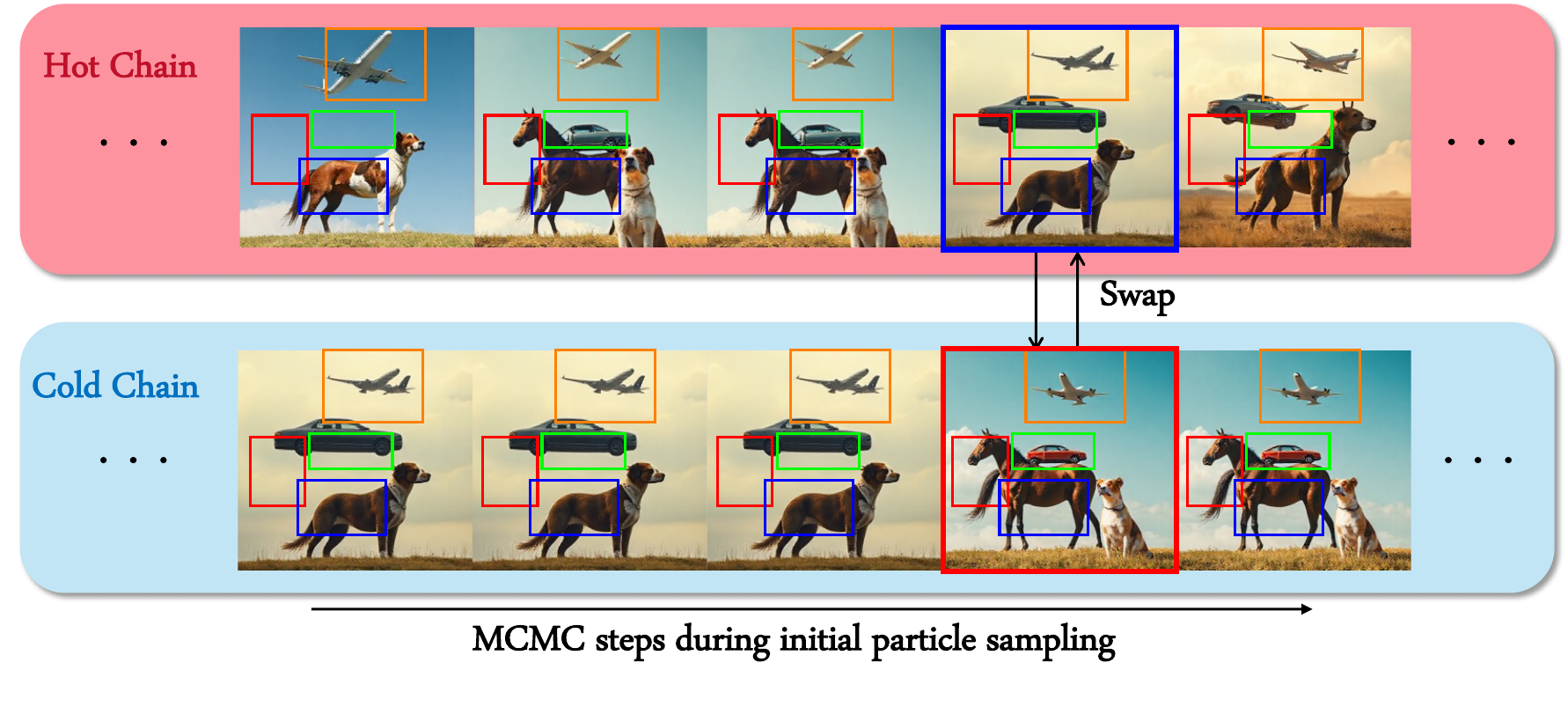}

\caption{\textbf{Replica-exchange swap in PATHS for initial noise sampling on a layout task.} 
We visualize the initial noise using Tweedie's formula across MCMC steps. 
\textbf{Hot chain:} While the hot chain explores diverse particles under a flattened reward landscape, it lacks the stability to settle on high-reward states, causing discovered particles to continuously shift. 
\textbf{Cold chain:} Conversely, the cold chain tends to get trapped in local optima and suffers from frequent rejections, failing to explore new regions. 
\textbf{Metropolis swap:} By periodically swapping states, the cold chain inherits high-reward particles discovered by the hot chain, \textit{effectively transmitting global exploration to a stable exploitation process}.}
    \label{fig:swap_trajectory} 
\end{figure}
\section{Related Work}

\subsection{Inference-time alignment for diffusion models}
A natural approach to aligning diffusion models is to steer pretrained models at sampling time using external reward or guidance signals without additional training. Classifier guidance \citep{dhariwal2021diffusion} requires access to classifiers trained on noisy inputs at intermediate diffusion steps. Several subsequent works \citep{chung2023dps, bansal2023universal, yu2023freedom, he2023manifold} circumvent this requirement by leveraging Tweedie’s formula \citep{efron2011tweedie} to approximate the guidance term. To address the inexactness and bias introduced by these approximations, more recent approaches have incorporated Sequential Monte Carlo (SMC) and inference-time scaling strategies for more accurate alignment \citep{smc_wu2023practical, smc_kim2025testtime, smc_singhal2025general, smc_skreta2025feynman, smc_kim2025inference}.

Another line of work in inference-time scaling focuses on searching for better initial noise, leveraging the observation that some noises consistently lead to higher-quality generations than others \citep{ma2025inference, ahn2024noise, qi2024not, xu2025good}. Crucially, \citet{yoon2025psi} argue that SMC-based methods can be fundamentally limited by poor initial noise sampling, and propose pCNL-based reward-informed initialization to address this issue. This perspective is particularly relevant for recent few-step or distilled diffusion models, where straighter generation trajectories and more reliable early-stage Tweedie estimates make reward-informed initialization increasingly tractable and effective. However, we show that naive pCNL can still fail on complex reward, and use Parallel Tempering to overcome this limitation.

Recently, \citet{he2025crepe} proposed CREPE, a replica-exchange framework for inference-time diffusion control. Unlike CREPE, which manages the full diffusion process, PATHS targets the initialization bottleneck in SMC-based alignment where finite particles struggle to cover sparse, multi-modal reward peaks. PATHS employs replica exchange on the initial noise posterior to generate superior starting particles for subsequent SMC, maintaining a matched reward-evaluation budget.

\subsection{Fine-tuning-based alignment for diffusion models}
Aligning pre-trained diffusion models through fine-tuning has been extensively studied. Early approaches primarily relied on supervised fine-tuning with reward weighting \citep{lee2023aligning, wu2023human}, while subsequent works incorporated reinforcement learning by formulating the denoising process as a Markov Decision Process \citep{black2024training, fan2023dpok}. More recently, diffusion models have adopted preference-based optimization such as DPO \citep{wallace2024diffusion, yang2024using} and GRPO-style methods \citep{liu2025flow, xue2025dancegrpo}. Beyond RL-based frameworks, some studies directly optimize models by backpropagating differentiable rewards \citep{xu2023imagereward, clark2024directly, prabhudesai2023aligning}, or decouple data collection from policy optimization to improve efficiency \citep{zheng2025diffusionnft}. However, these fine-tuning approaches require costly retraining whenever the reward objective changes, whereas inference-time methods offer greater flexibility by adapting without additional training.

% \subsection{Fine-tuning-based alignment for diffusion models}
% Aligning pre-trained diffusion models through fine-tuning has been extensively studied. Early approaches have primarily relied on supervised fine-tuning with reward weighting \citep{lee2023aligning, wu2023human}. Inspired by alignment advances in LLMs \citep{ouyang2022training}, subsequent work has incorporated reinforcement learning by formulating the denoising process as a Markov Decision Process (MDP) \citep{black2024training, fan2023dpok}. Following the success of DPO \citep{rafailov2023direct} and GRPO \citep{shao2024deepseekmath} in LLM alignment, diffusion models have similarly adopted DPO-based approaches \citep{wallace2024diffusion, yang2024using} and GRPO-style methods \citep{liu2025flow, xue2025dancegrpo}. Beyond RL-based frameworks, several studies directly optimize diffusion models by backpropagating differentiable rewards \citep{xu2023imagereward, clark2024directly, prabhudesai2023aligning}. More recently, \citet{zheng2025diffusionnft} propose leveraging the forward process to decouple data collection from policy optimization, enabling the use of arbitrary black-box solvers while improving training efficiency. However, these approaches still require costly retraining whenever the reward model changes, whereas inference-time alignment methods remain more flexible by adapting to new objectives without additional fine-tuning.
\section{Background}
\label{sec:preliminary}
%In this section, we briefly review the fundamentals of score-based generative models and the application of Sequential Monte Carlo (SMC) methods to reward alignment.

\subsection{Diffusion Models and Flow Models}
\label{sec:pre_score}
Flow-based generative models \citep{lipman2023flow, liu2022flow, albergo2025stochastic} learn a transport from a simple prior distribution $p_1 = \mathcal{N}(0, I)$ to a complex data distribution $p_0$ via a deterministic probability flow. This process is governed by an Ordinary Differential Equation (ODE):
\begin{equation*}
    \mathrm{d}x_t = u(x_t, t) \mathrm{d}t,
\end{equation*}
where the time-dependent vector field $u_t$ is parameterized by a neural network $u_\theta$. 

While these flow models are inherently deterministic, they share the same marginal densities $\{p_t\}_{t \in [0,1]}$ with a corresponding class of reverse-time Stochastic Differential Equations (SDEs), such as diffusion models \citep{sohl2015deep, ho2020denoising, song2021score}:
\begin{align*}
\label{eq:preliminaries:pretrained model SDE}
\mathrm{d}x_t = f(x_t, t) \mathrm{d}t + g(t) \mathrm{d}w, \quad f(x_t, t) = u(x_t, t) - \frac{g(t)^2}{2} \nabla \log p_t(x_t),
\end{align*}
where $x_1 \sim p_1$, $f(x_t, t)$ denotes the drift coefficient, $g(t)$ is the diffusion coefficient, and $w$ represents a standard Brownian motion. This equivalent SDE formulation is essential for SMC-based sampling, as it introduces the stochasticity required for particle-based exploration.
We refer to both diffusion and flow-based models as the score-based generative model.

\subsection{Sequential Monte Carlo and Reward Alignment}
\label{sec:pre_SMC}
%\citep{smc_kim2025inference, smc_kim2025testtime, smc_singhal2025general, smc_skreta2025feynman, smc_wu2023practical}
%\citep{doucet2001sequential}
\paragraph{Reward-tilted target.}
A common objective for aligning score-based generative models is the following KL-regularized reward maximization problem~\citep{uehara2024fine}:
\begin{equation}
    \max_p
    \mathbb{E}_{x \sim p}[r(x)]
    -
    \alpha
    D_{\mathrm{KL}}(p \| p_{\theta}),
\end{equation}
where \(p_{\theta}\) denotes the pretrained distribution, and the KL term is used for preventing reward over-optimization~\citep{gao2023scaling}. The solution is the exponentially tilted distribution
\begin{equation}
    \pi^*_0(x_0)
    =
    \frac{1}{Z}
    p_{\theta}(x_0)
    \exp\left(\frac{r(x_0)}{\alpha}\right),
    \label{eq:pi_0}
\end{equation}
where \(Z\) is the normalizing constant \citep{rafailov2023direct}. SMC-based methods target the corresponding trajectory-level target:
\begin{equation}
    p(x_{0:T})
    =
    \frac{1}{Z}
    p(x_T)
    \left[
        \prod\nolimits_{t=0}^{T-1}
        p_\theta(x_t \mid x_{t+1})
    \right]
    \exp\left(\frac{r(x_0)}{\alpha}\right)
    \label{eq:smc_tar}
\end{equation}
where $x_{0:T} = (x_0, \dots, x_T)$ is the discretization of $[0, 1]$ into $T$ steps.
The marginal at \(t=0\) recovers the reward-tilted target in Eq.~\eqref{eq:pi_0}.

\paragraph{Twisted intermediate targets and proposals \citep{smc_wu2023practical}.}
Naively applying importance sampling to Eq.~\eqref{eq:smc_tar} can be inefficient, since the number of particles required for accurate estimation may grow rapidly with the discrepancy between the proposal and target distributions~\citep{chatterjee2018sample}. Twisted diffusion sampling \citep{smc_wu2023practical} addresses this issue by introducing intermediate targets and guided proposals along the reverse diffusion trajectory.

Following TDS-style twisting constructions~\citep{smc_wu2023practical}, the intermediate target at time $t$ is defined as:
\begin{equation}
    \pi_t(x_t) := \frac{\gamma_t(x_t)}{Z_t}, \qquad \gamma_t(x_t) := p_t(x_t) \exp\left( \frac{r(\hat{x}_0(x_t))}{\alpha} \right),
    \label{eq:intermediate_target}
\end{equation}
where $Z_t$ is the normalizing constant, $p_t(x_t)$ is the marginal distribution induced by the pretrained diffusion model, and $\hat{x}_0(x_t)$ denotes the denoised prediction of $x_0$ obtained via Tweedie's formula~\citep{efron2011tweedie}. Since \(\hat{x}_0(x_0)=x_0\), the final target \(\pi_0\) matches Eq.~\eqref{eq:pi_0}. Notably, by incorporating a tempering schedule into the intermediate targets (i.e., multiplying $r$ by $\lambda_t$), this framework recovers the Diffusion Alignment as Sampling (DAS) \citep{smc_kim2025testtime} approach.

For the pretrained reverse transition $p_\theta(x_{t-1} \mid x_t) = \mathcal{N}(x_{t-1} ; \mu_\theta(x_t,t), \sigma_t^2I)$, the commonly used proposal~\citep{smc_wu2023practical, smc_kim2025testtime} is:
\begin{equation}
    m_{t-1}(x_{t-1} \mid x_t) := \mathcal{N} \left( x_{t-1} ; \mu_\theta(x_t,t) + \frac{\sigma_t^2}{\alpha} \nabla_{x_t} r(\hat{x}_0(x_t)), \sigma_t^2 I \right).
    \label{eq:reward_guided_proposal}
\end{equation}

At each step, particles are assigned incremental importance weights:
\begin{equation}
w_{t-1}(x_{t-1}, x_t) 
%= \frac{\gamma_{t-1}(x_{t-1})}{\gamma_t(x_t)} \frac{L_t(x_t \mid x_{t-1})}{m_{t-1}(x_{t-1} \mid x_t)} 
:= \frac{p_\theta(x_{t-1} \mid x_t) \exp \left( r(\hat{x}_0(x_{t-1})/{\alpha} \right)}{m_{t-1}(x_{t-1} \mid x_t) \exp \left( r(\hat{x}_0(x_t))/{\alpha} \right)}. \label{eq:incremental_weight}
\end{equation}

\paragraph{Sampling procedure.}
The resulting SMC sampler initializes particles from $\mathcal{N}(0,I)$, propagates them using the twisted proposal in Eq.~\eqref{eq:reward_guided_proposal}, assigns importance weights according to Eq.~\eqref{eq:incremental_weight}, and resamples particles according to their normalized weights. Repeating this procedure from \(t=T\) to \(t=0\) yields samples from the reward-tilted target \(\pi_0^*\).

\section{Why current methods fail in complex reward}
\label{section4}
\begin{figure*}[t]
    \centering
    \includegraphics[width=\textwidth]{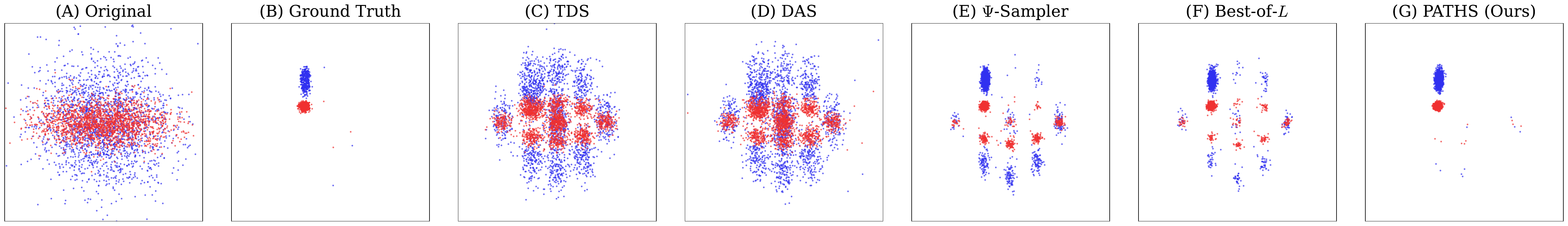}
    \caption{
        \textbf{Toy sampling method comparison.} 
        Each panel visualizes initial latent samples (blue) and their corresponding denoised samples (red). 
        The target density is dominated by a single primary mode with higher weight, while the remaining 8 modes have low weights, making them hardly visible in the ground truth distribution. 
        From left to right: 
        samples from
        (A) the pre-trained flow model; 
        (B) the target distribution defined by Eq.~\eqref{eq:pi_0} (red) and Eq.~\eqref{eq:pi_1} (blue); 
        (C) TDS \citep{smc_wu2023practical}, an SMC-based method; 
        (D) DAS \citep{smc_kim2025testtime}, another SMC-based method; 
        (E) $\psi$-Sampler \citep{yoon2025psi}; 
        (F) Best-of-$L$, which selects the highest-reward chain among $L=3$ independent chains; and 
        (G) results from our proposed \textbf{PATHS}.
    }
    \label{fig:synth}
\end{figure*}

In this section, visually motivated by the toy example in Fig.~\ref{fig:synth}, we examine the challenges of \textit{complex rewards}—landscapes characterized by target regions that are rare, sharp, and highly multi-modal. Within this context, we argue for two key limitations of inference-time reward alignment in diffusion models:
(1) why SMC-based methods (Fig.~\ref{fig:synth}C, D) can fail when initialization is poor, particularly because high-reward regions are too \textit{rare} to be adequately covered by standard priors (Section~\ref{sec:smc_fail}), and
(2) why $\Psi$-sampler and its multi-chain extensions (Fig.~\ref{fig:synth}E, F) can also struggle despite explicitly improving initialization, as local MCMC transitions are highly vulnerable to mode-trapping in \textit{multi-modal} landscapes (Section~\ref{sec:psi_fail}).
Addressing these sequential bottlenecks ultimately motivates our proposed PATHS (Fig.~\ref{fig:synth}G).

\subsection{Why Does Using Only SMC Fail? Why Initial Noise Sampling Matters}
\label{sec:smc_fail}
While \citet{yoon2025psi} emphasize the importance of initial noise sampling in addressing vanishing guidance strength across denoising steps, we argue that its role becomes even more critical in rare, high-reward settings.

SMC-based methods are asymptotically exact \citep{smc_wu2023practical}, but this guarantee applies only in the infinite-particle limit. In practical regimes with limited particles, approximation quality can degrade substantially. In particular, when the proposal and target distributions are highly mismatched, the number of particles required can scale exponentially with their KL divergence \citep{chatterjee2018sample}. Twisting partially alleviates this issue by improving intermediate targets and proposals \citep{smc_wu2023practical}, but we argue that a fundamental bottleneck remains at the very first step.

Specifically, SMC-based methods initialize particles from the prior distribution $p_1 = \mathcal{N}(0, I)$. Thus, performance critically depends on how well this prior aligns with the reward-aware target initialization distribution 
\begin{equation}
\pi_1^*(x_1) \propto p_1(x_1) \exp \left( r(\hat{x}_0(x_1))/ {\alpha} \right).    
\label{eq:pi_1}
\end{equation}
When high-reward regions occupy only a tiny fraction of prior mass—as is often the case under complex rewards that are sharp, multi-modal, or highly compositional (e.g., spatial layout constraints requiring a person to be generated on top of a car, as in Fig.~\ref{fig:qualitative_results})—the initial KL divergence becomes large. In such cases, even perfectly designed later-stage twisting cannot recover from poor initialization without an impractically large particle budget. We formalize this below.

\begin{proposition}[Large Initial KL under complex reward]
\label{prop:init}
Assume there exists a high-reward region $B \subset \mathcal{X}_0$ in the data space. Let $\nu$ be the pushforward measure of the prior $p_1$ under the Tweedie estimator $\hat{x}_0$, such that the probability of initial particles falling into the high-reward region is $\nu(B) = \mathbb{P}_{x_1 \sim p_1}(\hat{x}_0(x_1) \in B) = \varepsilon \ll 1$. Furthermore, assume there is a reward gap $\Delta > 0$ separating $B$ from its complement. Let $\pi_1^*$ denote the reward-aware initial distribution defined in Eq.~\eqref{eq:pi_1}. If $\Delta$ is sufficiently large, satisfying $\Delta/\alpha \ge \log(1/\varepsilon)+C$ for some constant $C>0$, then:
\begin{equation}
D_{\mathrm{KL}}(\pi_1^* \| p_1) = \Omega\left(\log \frac{1}{\varepsilon}\right).
\end{equation}
\end{proposition}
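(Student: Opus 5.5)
We will reduce the KL divergence to a two-set comparison. Let $A = \{x_1 : \hat{x}_0(x_1) \in B\}$, so that $p_1(A) = \nu(B) = \varepsilon$. We interpret the reward gap as meaning there is a threshold $r_0$ with $r(\hat{x}_0(x_1)) \ge r_0 + \Delta$ on $A$ and $r(\hat{x}_0(x_1)) \le r_0$ on $A^c$.

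\textbf{Step 1: most of the mass of $\pi_1^*$ sits on $A$.} Write $w(x_1) = \exp(r(\hat{x}_0(x_1))/\alpha)$. Then
\begin{equation*}
\pi_1^*(A) = \frac{\int_A w\,\mathrm{d}p_1}{\int_A w\,\mathrm{d}p_1 + \int_{A^c} w\,\mathrm{d}p_1} \ge \frac{\varepsilon e^{(r_0+\Delta)/\alpha}}{\varepsilon e^{(r_0+\Delta)/\alpha} + (1-\varepsilon)e^{r_0/\alpha}} \ge \frac{1}{1+e^{-C}} =: q_C .
\end{equation*}
Here we use that $t\mapsto t/(t+c)$ is increasing in $t$ and decreasing in $c$, and that $\varepsilon e^{\Delta/\alpha} \ge e^{C}$ by the hypothesis $\Delta/\alpha \ge \log(1/\varepsilon) + C$. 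So $\pi_1^*(A) \ge q_C > 1/2$, a constant independent of $\varepsilon$.

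\textbf{Step 2: data processing.} Apply the data-processing inequality for KL to the measurable map $x_1 \mapsto \mathbf{1}_A(x_1)$, or equivalently use the log-sum inequality on the partition $\{A, A^c\}$. This gives
\begin{equation*}
D_{\mathrm{KL}}(\pi_1^*\|p_1) \ge \mathrm{kl}\bigl(\pi_1^*(A)\,\|\,\varepsilon\bigr), \qquad \mathrm{kl}(q\|\varepsilon) = q\log\tfrac{q}{\varepsilon} + (1-q)\log\tfrac{1-q}{1-\varepsilon}.
\end{equation*}

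\textbf{Step 3: lower-bound the binary KL.} Use the standard bound $\mathrm{kl}(q\|\varepsilon) \ge q\log(1/\varepsilon) - h(q)$, where $h(q) = -q\log q - (1-q)\log(1-q) \le \log 2$ is the binary entropy. This follows by expanding $\mathrm{kl}$ and dropping the term $-(1-q)\log(1-\varepsilon) \ge 0$. Since $\varepsilon \ll 1$, we have $\mathrm{kl}(q\|\varepsilon)$ increasing in $q$ on $q \ge \varepsilon$. Hence, with $q = \pi_1^*(A) \ge q_C$,
\begin{equation*}
D_{\mathrm{KL}}(\pi_1^*\|p_1) \ge q_C \log(1/\varepsilon) - \log 2 \ge \tfrac12\log(1/\varepsilon) - \log 2 .
\end{equation*}
This is $\Omega(\log(1/\varepsilon))$ as $\varepsilon\to 0$, with constant at least $1/2$.

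\textbf{Main obstacle.} The delicate point is Step 1: pinning down what "reward gap separating $B$ from its complement" means, since the statement leaves it informal. I would make the threshold formulation above explicit. If only $\inf_B r - \sup_{B^c} r \ge \Delta$ is given, that is exactly this formulation with $r_0 = \sup_{B^c} r$. The constant $r_0$ cancels in the ratio, so no boundedness of $r$ is needed beyond finiteness of the normalizer $Z$, which we assume.
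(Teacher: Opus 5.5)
Your proposal is correct and follows the paper's proof essentially step for step: the same preimage $A=\hat{x}_0^{-1}(B)$, the same bound $\pi_1^*(A)\ge \varepsilon/\bigl(\varepsilon+(1-\varepsilon)e^{-\Delta/\alpha}\bigr)$, data processing onto the partition $\{A,A^c\}$, and the same Bernoulli-KL expansion that drops $-(1-q)\log(1-\varepsilon)\ge 0$. The only cosmetic difference is the parametrization: the paper writes $C=\log\frac{1-\eta}{\eta}$ and obtains $(1-\eta)\log(1/\varepsilon)-h(\eta)$, which is your bound with $q_C=1-\eta$; you bound the entropy term by $\log 2$ instead, which also makes your monotonicity step unnecessary.
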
See Appendix~\ref{appendix:proof} for a proof. Since particle complexity scales exponentially with KL divergence \citep{chatterjee2018sample}, even asymptotically exact SMC becomes practically ineffective unless initialization is fundamentally improved. 

Crucially, this bottleneck persists even when advanced tempering schedules are employed. For instance, while a tempering approach like DAS \citep{smc_kim2025testtime} theoretically improves sample efficiency, its practical effectiveness remains severely constrained. Due to the finite number of sampling steps and the intractability of finding an optimal tempering path, even tempered SMC cannot fully bridge the vast KL divergence inherent to structurally complex rewards. Consequently, as shown in Fig.~\ref{fig:synth}D, these methods continue to waste particles in suboptimal regions and struggle to consistently discover the global optimum within a limited budget. This explains why improving the initialization itself is far more critical than simply smoothing the target distribution over time.

\subsection{Why Does $\Psi$-sampler Fail? Mode Trapping Problem}
\label{sec:psi_fail}

We argue that $\Psi$-sampler \citep{yoon2025psi} can fail even if it aims to sample initial noise from the reward-aware posterior $\pi_1^*$, especially under complex reward landscapes. To approximate $\pi_1^*$, $\Psi$-sampler employs the Preconditioned Crank--Nicolson Langevin (pCNL) algorithm \citep{cotter2013mcmc}. Specifically, for a current state $x$ and step size $h > 0$, the pCNL proposal $x'$ is generated as:
\begin{equation}
    x' = \rho x + \sqrt{1 - \rho^2} \left( \xi + \frac{\sqrt{h}}{2} \frac{\nabla r(\hat{x}_0(x))}{\alpha} \right), \quad \xi \sim \mathcal{N}(0, I),
    \label{eq:pcnl_proposal_main}
\end{equation}
where $\rho = (1 - h/4)/(1 + h/4)$. 
While pCNL is dimension-robust and effective for high-dimensional latent spaces, it remains fundamentally a \textit{local} MCMC method. As visually demonstrated in Fig.~\ref{fig:synth}E, this restricted exploration leads to a critical mode-trapping problem: the Markov chain's trajectory is largely dictated by its initialization, often exhibiting metastability within suboptimal regions. In complex, sharp reward landscapes, escaping a local mode (e.g., correctly matching only two out of four bounding boxes in a layout task) requires large-scale proposals that are inherently incompatible with the moderate step-size $h$ necessary to maintain stable acceptance rates. Consequently, any attempt to traverse high-energy barriers results in vanishingly low acceptance probabilities, as the target density $\pi_1^*$ drops precipitously outside the narrow reward peaks. This effectively confines the chain to the initial basin of attraction, preventing the discovery of the global optimum regardless of the sampling budget.

One might consider the Best-of-$L$ strategy (Fig.~\ref{fig:synth}F)—running $L$ independent $\Psi$-sampler chains and selecting the one with the highest reward—as a potential remedy. However, this fails to fundamentally resolve the exploration bottleneck. Such a brute-force approach relies on the "lucky" chance that at least one initialization lands near the global optimum—a probability that vanishes rapidly as the reward landscape grows complex. Thus, overcoming this bottleneck necessitates a more systematic approach.
\section{PATHS: Parallel Tempering for High-complexity Reward Sampling}
\label{sec:paths}

To address the mode-trapping and exploration bottlenecks discussed in Section~\ref{sec:psi_fail}, we propose \textbf{PATHS} (\textbf{PA}rallel \textbf{T}empering for \textbf{H}igh-complexity reward \textbf{S}ampling). Unlike $\Psi$-sampler, which relies on a single local Markov chain, PATHS employs a systematic global exploration strategy by constructing a sequence of tempered target distributions.

\subsection{PATHS: Parallel-Tempering pCNL}

PATHS leverages the efficiency of pCNL for high-dimensional spaces while overcoming its locality through a parallel tempering (PT) framework \citep{swendsen1986replica}. The core idea is to interpolate between the tractable prior $\pi_1$ (where exploration is easy) and the sharp, complex reward-aware posterior $\pi_1^*$ (where the global optimum resides).

\paragraph{Tempered ladder.} PATHS maintains $L$ chains targeting a sequence of tempered distributions:
\begin{equation}
\pi^*_{1, \ell}(x) \propto p_1(x) \exp \! \left(\frac{r(\hat{x}_0(x_1))}{\alpha \cdot T_\ell}\right), \quad \ell = 1, \dots, L, \label{eq:tempered_posterior}
\end{equation}
with a geometric temperature ladder $T_1 = 1 < T_2 < \cdots < T_L$. This setup assigns distinct roles to each chain (as illustrated in Fig.~\ref{fig:PT_mechanism}):

\begin{itemize}
    \item \textbf{Low-temperature (Cold) Chains} (Small $\ell$): These chains, including the target chain where $T_1=1$, are dedicated to high-fidelity sampling from the reward-aware posterior $\pi^*_1$. While they effectively refine samples within precise modes, they are highly susceptible to \textit{mode-trapping} within the rugged reward landscape, often leading to frequent rejections and an inability to escape local optima.
    
    \item \textbf{High-temperature (Hot) Chains} (Large $\ell$): By employing elevated temperatures, these chains effectively \textit{flatten the reward landscape} to facilitate global exploration. However, due to this reduced sensitivity to the reward signal, they lack the stability to remain on any single high-reward mode; the discovered states continuously shift rather than being refined, necessitating a swap to transfer these findings to a more stable chain.
\end{itemize}

We adopt a geometric ladder $T_\ell = T_{\mathrm{ratio}}^{\ell - 1}$ 
following the standard parallel-tempering practice, which provides 
approximately uniform overlap between adjacent 
chains~\citep{predescu2004incomplete}.

\begin{figure}
  \centering
  % figure
  \begin{minipage}[b]{0.48\textwidth}
    \centering
    \includegraphics[height=5.2cm, width=\linewidth, keepaspectratio]{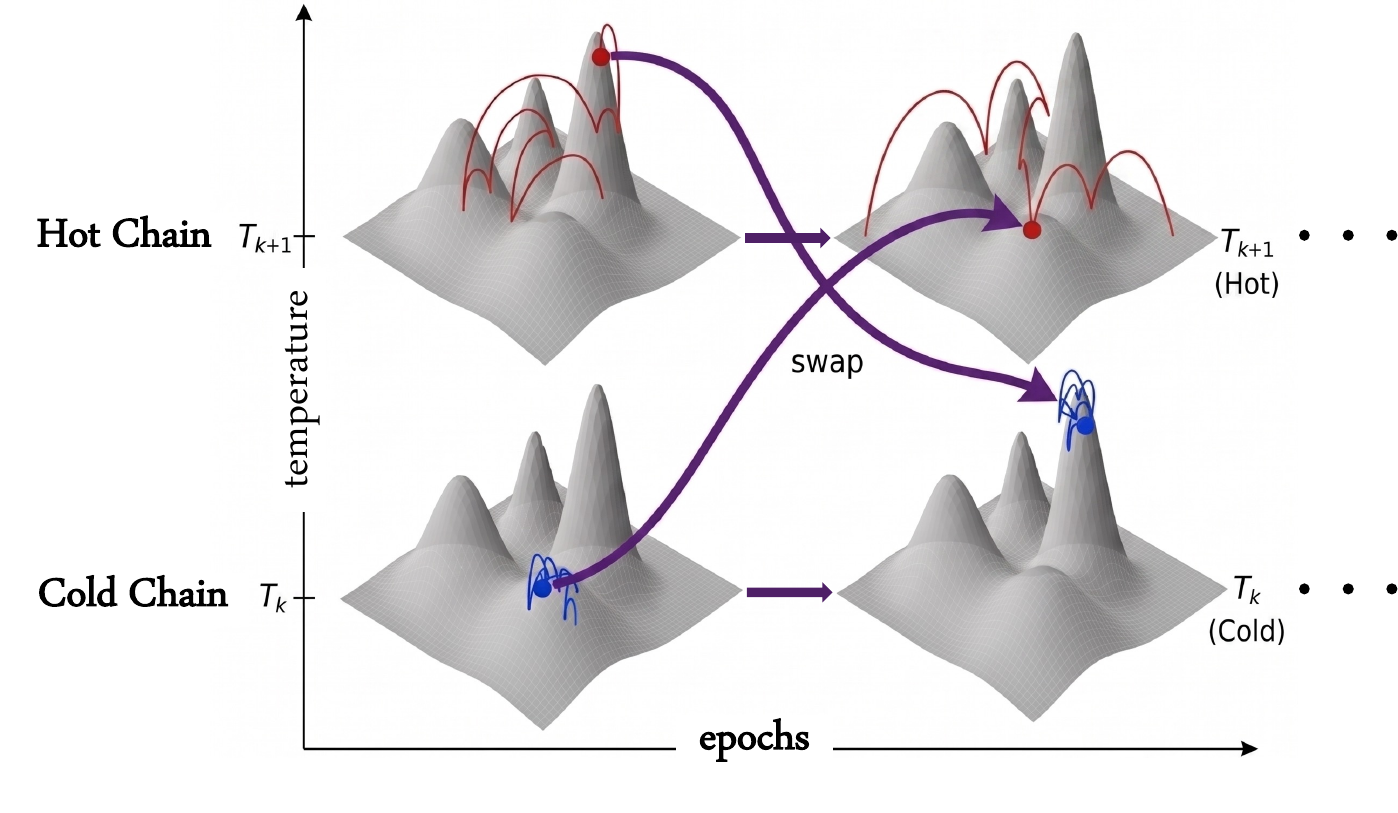}
    \captionsetup{width=0.95\linewidth} 
    \caption{Visualization of the Parallel Tempering mechanism. Hot chains explore flattened reward landscapes, while accepted moves propagate states.}
    \label{fig:PT_mechanism}
  \end{minipage}\hfill
  % figure 
  \begin{minipage}[b]{0.48\textwidth}
    \centering
    \includegraphics[height=3.5cm, width=\linewidth, keepaspectratio]{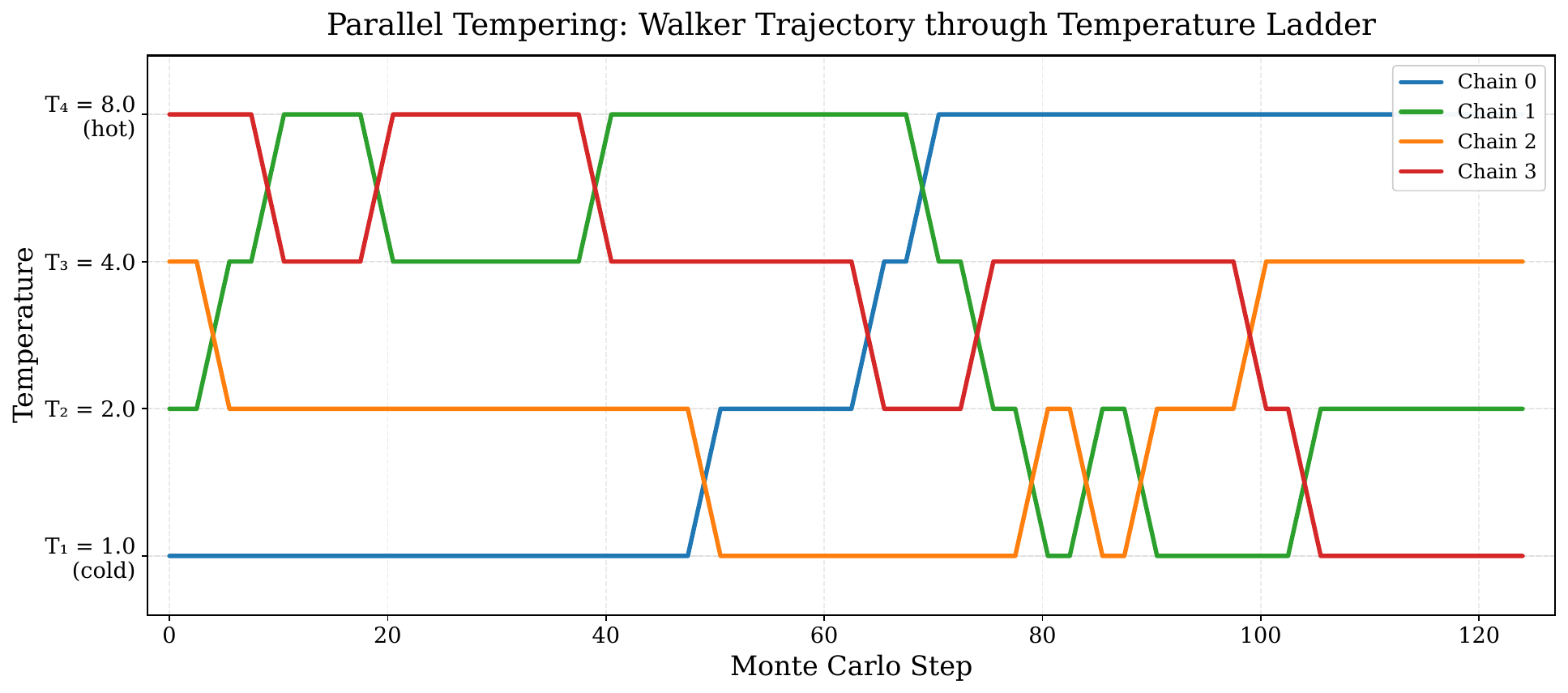}
    \captionsetup{width=0.95\linewidth}
    \caption{Visualization of replica-exchange swap trajectories 
across the temperature ladder $T \in \{1,2,4,8\}$, recorded 
from an actual PATHS run on a single layout-to-image instance. 
Swap proposals are attempted every $m=5$ steps; accepted swaps 
allow chains to move between temperature levels, enabling 
information exchange.}
    \label{fig:PT_trajectory}
  \end{minipage}
\end{figure}

\paragraph{Replica Exchange (Swap).} 
PATHS periodically attempts to swap states between adjacent chains $\ell$ and $\ell + 1$ every $m$ steps (see Fig.~\ref{fig:PT_trajectory} for trajectory visualization). The exchange is governed by the Metropolis-Hastings acceptance ratio. For two states $x_\ell$ and $x_{\ell+1}$ in adjacent chains, the swap probability is defined as:
\begin{equation}
A(x_\ell, x_{\ell+1}) = \min \left( 1, \frac{\pi^*_{\ell}(x_{\ell+1})\pi^*_{\ell+1}(x_{\ell})}{\pi^*_{\ell}(x_{\ell})\pi^*_{\ell+1}(x_{\ell+1})} \right)
\end{equation}
Substituting the tempered distribution from Eq.~\eqref{eq:tempered_posterior}, the prior terms $p_1(x)$ cancel out, and the ratio simplifies to:
\begin{equation}
A(x_\ell, x_{\ell+1}) = \min \left( 1, \exp \left[ \left( \frac{1}{T_\ell} - \frac{1}{T_{\ell+1}} \right) \left( \frac{r(\hat{x}_0(x_{\ell+1}))}{\alpha} - \frac{r(\hat{x}_0(x_\ell))}{\alpha} \right) \right] \right)
\end{equation}

This criterion establishes a \textit{reward-driven sorting} mechanism across the temperature ladder. Samples with higher rewards are preferentially transitioned to colder chains for local refinement near the target modes, while lower-reward samples are moved to warmer chains. In these warmer chains, the flattened energy landscape facilitates more effective mode-hopping and global exploration, ultimately concentrating the particle population in the high-density regions of the target distribution $\pi_1^*$.

After the parallel tempering process concludes, we collect $K$ samples specifically from the coldest chain ($T_1 = 1$), which directly targets the reward-aware posterior $\pi_1^*$ defined in Eq.~\ref{eq:pi_1}. We then proceed with the final sampling process using SMC as described in Section~\ref{sec:pre_SMC}. As visually demonstrated in Fig.~\ref{fig:synth}G, while other methods struggle with mode-trapping, PATHS is the only approach that consistently escapes suboptimal local modes with closely matching the ground truth. The detailed procedure is provided in Algorithm~\ref{alg:paths} in Appendix~\ref{app:paths_algorithm}.

\section{Experiments}
\label{sec:experiments}

We evaluate PATHS on two compositional reward-alignment tasks:
layout-to-image generation and quantity-aware sampling. Both tasks induce
multi-modal reward landscapes: multiple object configurations can satisfy
the same spatial layout, and many spatial arrangements can realize the
same target count. We therefore use these tasks to test whether
parallel-tempering initialization improves finite-budget exploration
beyond independent posterior-initialization baselines.

\subsection{Experimental Setup}
\label{sec:experiment_setup}

\paragraph{Tasks.}
In the layout-to-image task, the goal is to generate images that place
specified objects in target bounding boxes while preserving text-image
alignment and image quality. In the quantity-aware task, the goal is to
generate a specified number of target objects. For both tasks, we report
results on the full benchmark and on difficulty-stratified simple and
complex subsets. The complex subsets contain prompts with more demanding
compositional structure, such as multiple objects, spatial relations, or
larger target counts. Further benchmark and subset details are provided in Appendix~\ref{app:experiment_details}.

\paragraph{Baselines and compute.}
We compare PATHS against prior-initialized reward-guided SMC baselines,
including TDS~\citep{smc_wu2023practical} and DAS~\citep{smc_kim2025testtime},
and posterior-initialization baselines, including
\(\Psi\)-Sampler~\citep{yoon2025psi} and Best-of-4. \(\Psi\)-Sampler
uses independent same-temperature pCNL chains, while Best-of-4 runs four
independent cold pCNL chains and forwards the highest-reward chain to the
SMC stage. We use Best-of-4 because PATHS also uses four parallel chains,
making the comparison compute-matched. All experiments use
FLUX.1-schnell~\citep{flux2024} and a total budget of \(1000\)
reward-model evaluations, split evenly between initial sampling and the
downstream SMC stage. Thus, improvements over \(\Psi\)-Sampler and
Best-of-4 reflect the effect of tempering and online replica exchange
rather than additional reward-model evaluations. Full hyperparameters,
temperature ladders, and SMC details are provided in
Appendix~\ref{app:experiment_details}.

\paragraph{Evaluation metrics.}
For layout-to-image generation, we report the seen GroundingDINO-based
layout reward, held-out mIoU, ImageReward, and VQAScore. For
quantity-aware sampling, we report the seen T2I-Count error together with
ImageReward and VQAScore. Metrics used directly for guidance are marked
with \(\dagger\), while the remaining metrics are used only for
evaluation.

\subsection{Experiment Results}
\label{sec:experiment_results}

\paragraph{Qualitative Results}\label{sec}

Figure~\ref{fig:qualitative_results} provides representative qualitative comparisons for both tasks. PATHS better satisfies complex layout and count constraints while preserving visual fidelity. Additional qualitative examples, including prompt-level comparisons and failure cases, are provided in Appendix~\ref{qualitative_results}.

\begin{figure}[t]
  \centering
  \setlength{\tabcolsep}{1.5pt} 
  
  \begin{tabular}{c ccccc}
    %\toprule
    % (Prior / Posterior)
    & \multicolumn{2}{c}{\small \textbf{Sampling from Prior}} 
    & \multicolumn{3}{c}{\small \textbf{Sampling from Posterior}} \\
    \cmidrule(lr){2-3} \cmidrule(lr){4-6}
    
    & \footnotesize TDS & \footnotesize DAS & \footnotesize $\psi$-Sampler & \footnotesize Best-of-4 & \footnotesize \textbf{PATHS (Ours)} \\
    %\midrule
    
    % ==========================================
    % 1. Layout-to-Image Task
    % ==========================================
    \adjustbox{valign=m}{\rotatebox{90}{\scriptsize \textbf{Layout-to-Image}}} &
    \includegraphics[width=0.19\textwidth, valign=m]{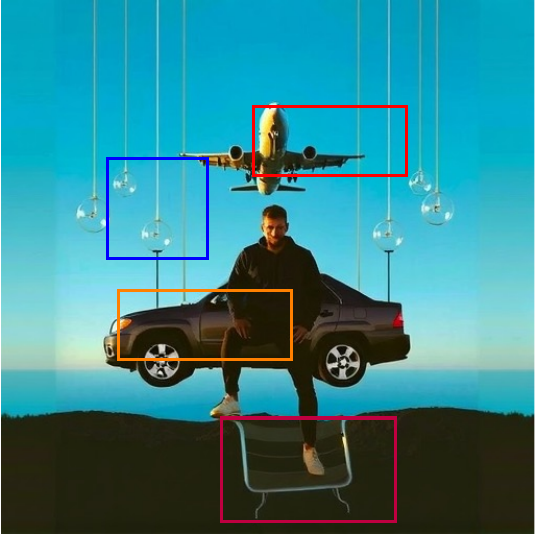} &
    \includegraphics[width=0.19\textwidth, valign=m]{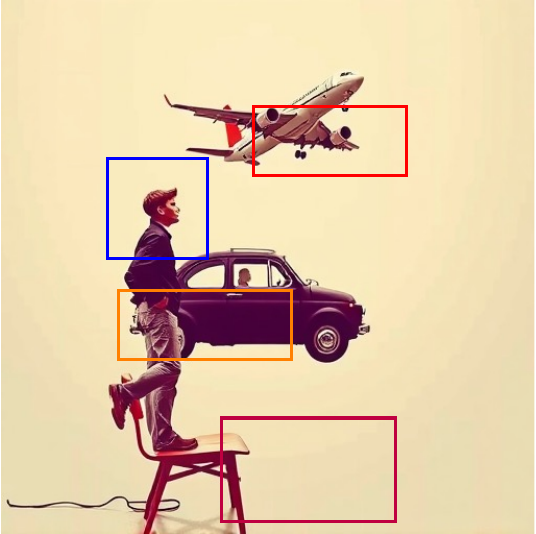} &
    \includegraphics[width=0.19\textwidth, valign=m]{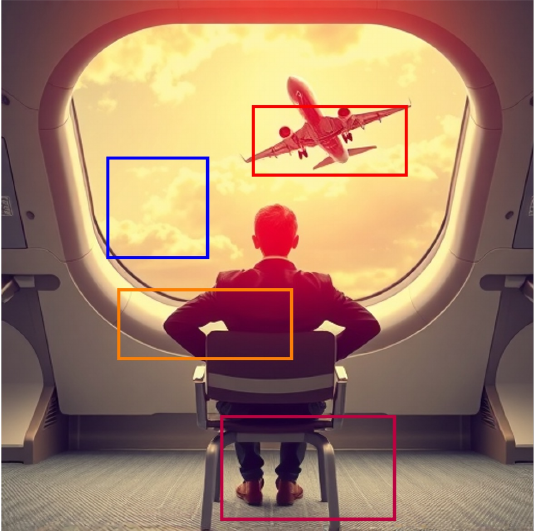} &
    \includegraphics[width=0.19\textwidth, valign=m]{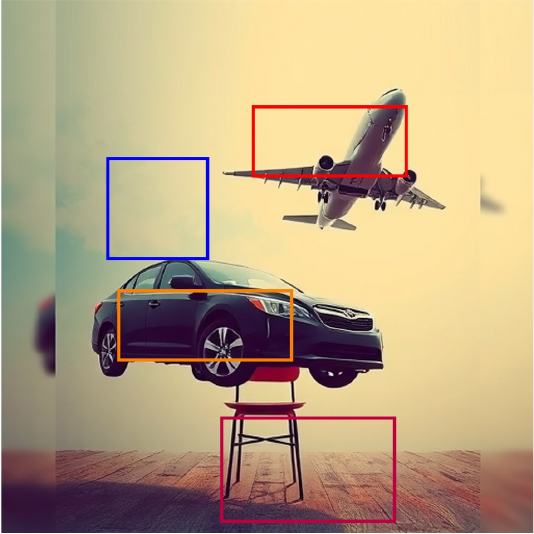} &
    \includegraphics[width=0.19\textwidth, valign=m]{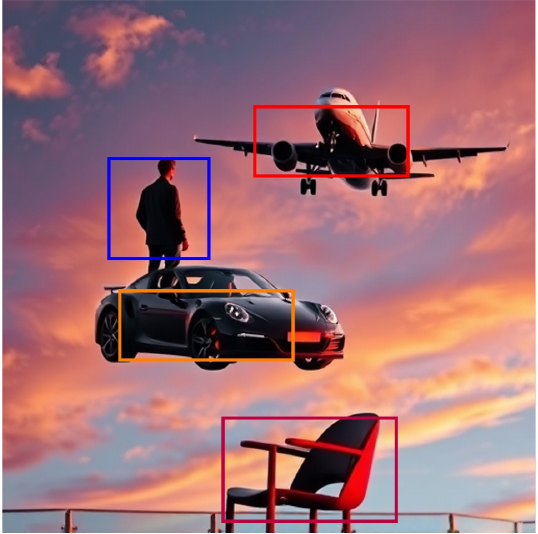} \\
        
    % Layout-to-Image 
      & \multicolumn{5}{c}{\textit{``A \textcolor{blue}{person} and \textcolor{red}{airplane} over a \textcolor{orange}{car} and under the \textcolor{purple}{chair}....''}} \\
      \addlinespace[10pt] 
    
    %\midrule
    
    % ==========================================
    % 2. Quantity-Aware Task (total 37)   
    % ==========================================
    \adjustbox{valign=m}{\rotatebox{90}{\scriptsize \textbf{Quantity-Aware}}} &
    \includegraphics[width=0.19\textwidth, valign=m]{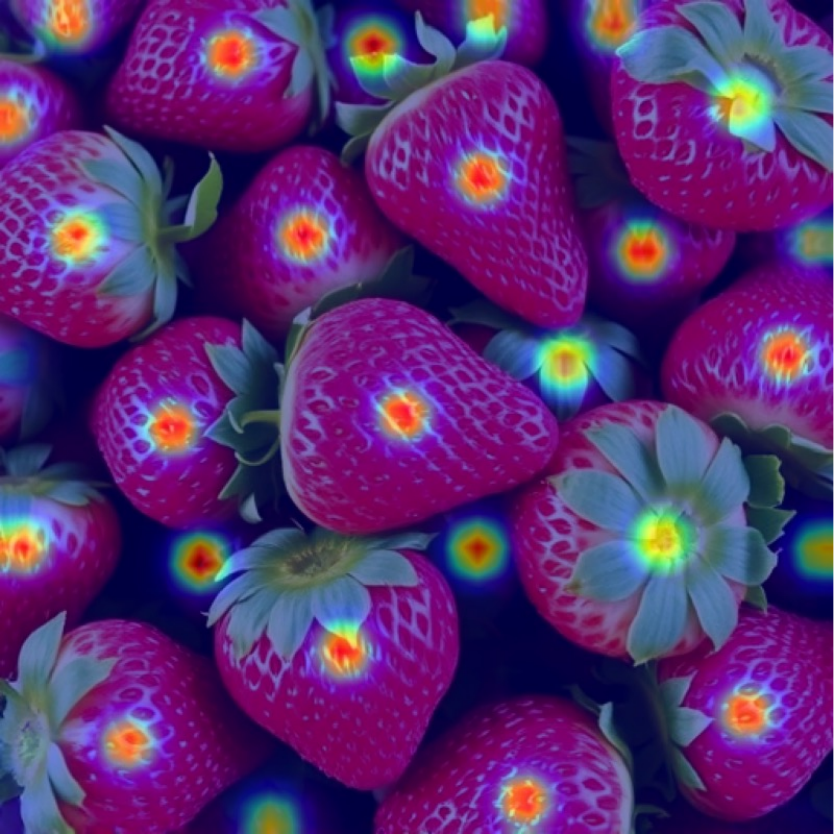} & % 24. delta: 13
    \includegraphics[width=0.19\textwidth, valign=m]{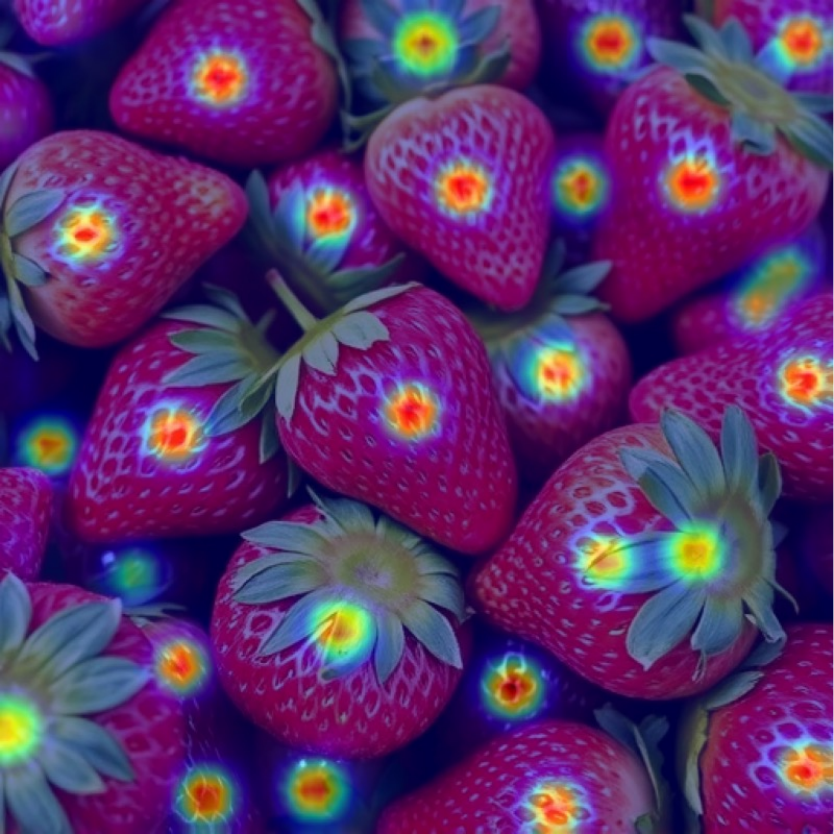} & % 25. delta: 12
    \includegraphics[width=0.19\textwidth, valign=m]{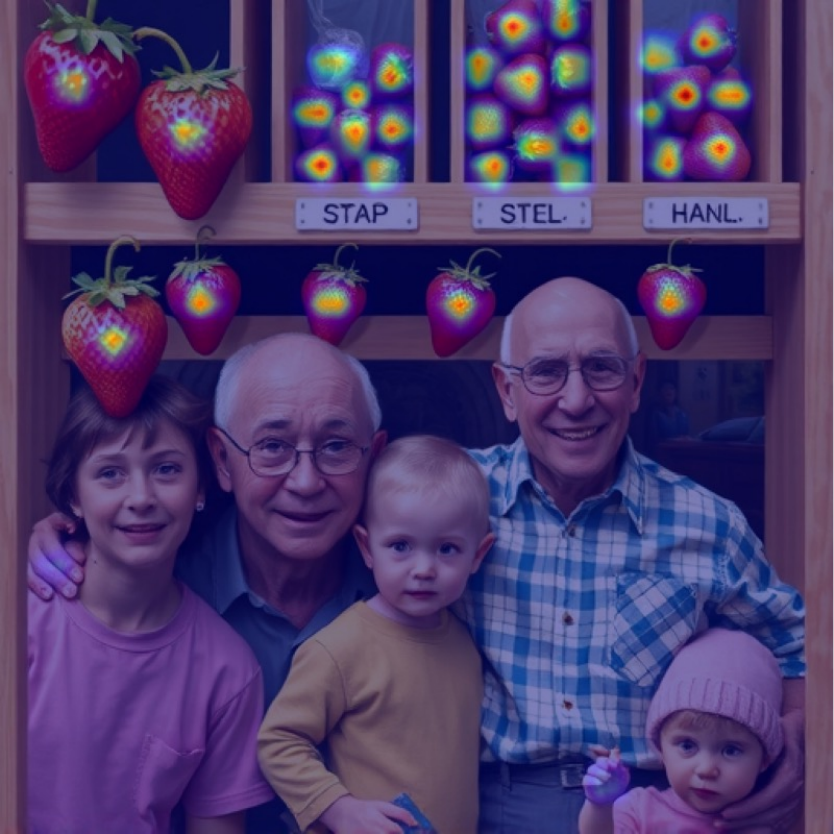} & % 33. delta: 4
    \includegraphics[width=0.19\textwidth, valign=m]{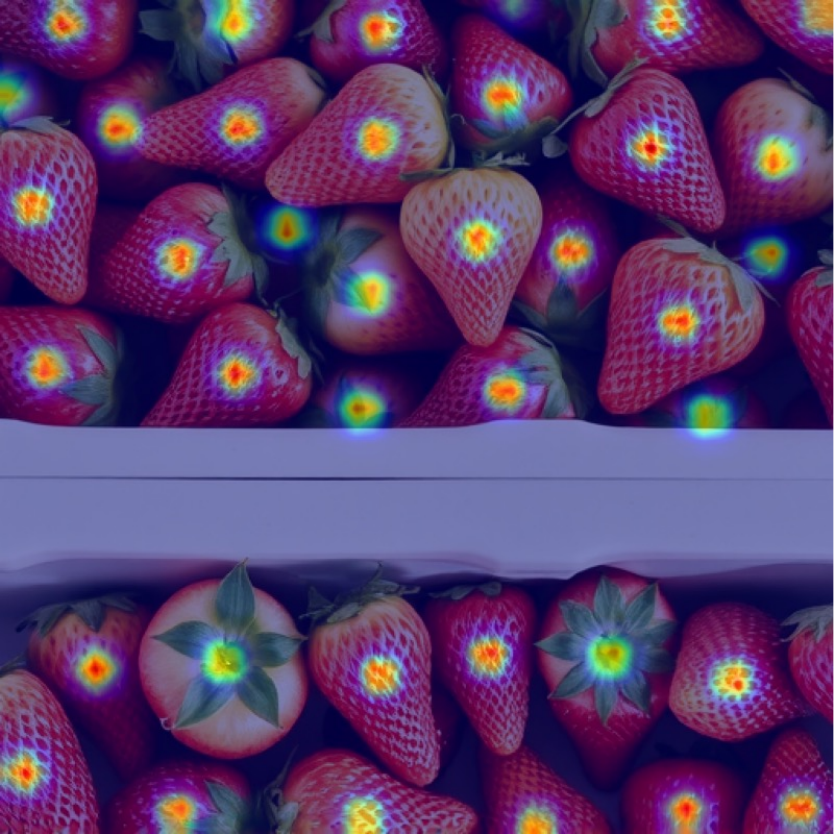} & % 35. delta:2 
    \includegraphics[width=0.19\textwidth, valign=m]{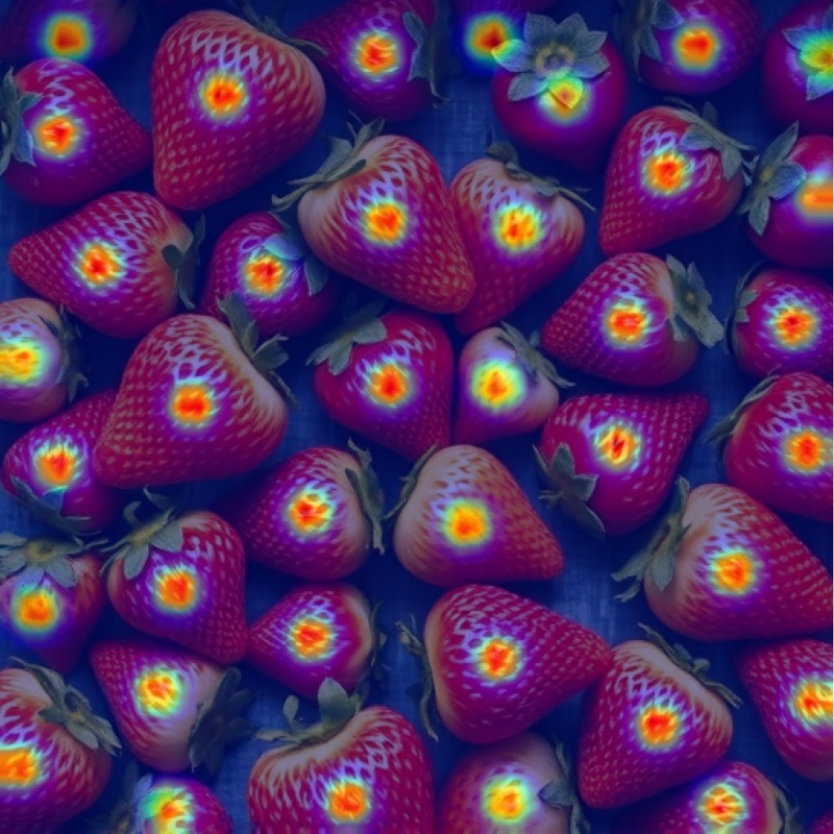} \\ % 37. delta: 0
    & 24 \small($\Delta$13) & 25 \small($\Delta$12) & 33 \small($\Delta$4) & 35 \small($\Delta$2) & 37 \small(\textcolor{blue}{$\Delta$0)} \\
    
    % Quantity-Aware 
    & \multicolumn{5}{c}{ ``\textit{\textcolor{red}{37} strawberries}''} \\
    
    %\bottomrule
  \end{tabular}
  
  %\vspace{1mm} 
  \caption{Qualitative comparison on Layout-to-Image and Quantity-Aware tasks. We compare sampling from prior (TDS, DAS) and sampling from posterior ($\psi$-Sampler, Best-of-4, \textbf{PATHS}). PATHS effectively aligns with complex geometric layouts and exact object counts without degrading image quality.}
  \label{fig:qualitative_results}
\end{figure}
\begin{table}[t]
  \caption{
  Quantitative comparison on \textbf{Layout-to-Image} and
  \textbf{Quantity-Aware Sampling}. We report results on the full benchmark
  and difficulty-stratified \textit{Complex} and \textit{Simple} subsets.
  Metrics marked with $\dagger$ are used as seen rewards during reward-guided
  sampling, while the remaining metrics are held-out evaluations.
  Best results are shown in bold and second-best results are underlined.
  }
  \label{tab:table_total}
  \centering
  \setlength{\tabcolsep}{3pt}
  \footnotesize
  %\scriptsize
  \begin{tabular}{c c c ccccc}
    \toprule
    \multirow{2}{*}{Task} & \multirow{2}{*}{Subset} & \multirow{2}{*}{Metric}
      & \multicolumn{2}{c}{Sampling from Prior}
      & \multicolumn{3}{c}{Sampling from Posterior} \\
    \cmidrule(lr){4-5} \cmidrule(lr){6-8}
    & & &
    TDS~\citep{smc_wu2023practical}
    & DAS~\citep{smc_kim2025testtime}
    & $\psi$-Sampler~\citep{yoon2025psi}
    & Best-of-4
    & \textbf{PATHS} \\
    \midrule

    \multirow{12}{*}{\shortstack[c]{Layout-\\to-Image}}
    & \multirow{4}{*}{\shortstack[c]{Overall\\($N=50$)}}
    & GroundingDINO$^\dagger$~\citep{liu2024grounding} $\uparrow$
        & 0.352 & 0.297 & 0.348 & \underline{0.363} & \textbf{0.376} \\
    \cmidrule(lr){3-8}
    & & mIoU~\citep{hou2024salience} $\uparrow$
        & 0.348 & 0.299 & 0.325 & \underline{0.355} & \textbf{0.366} \\
    & & ImageReward~\citep{xu2023imagereward} $\uparrow$
        & 0.827 & \underline{0.872} & 0.705 & 0.826 & \textbf{0.949} \\
    & & VQA~\citep{lin2024evaluating} $\uparrow$
        & 0.766 & \underline{0.774} & 0.738 & 0.771 & \textbf{0.797} \\
    \cmidrule(lr){2-8}

    & \multirow{4}{*}{\shortstack[c]{Complex\\($N=20$)}}
    & GroundingDINO$^\dagger$~\citep{liu2024grounding} $\uparrow$
        & 0.261 & 0.226 & 0.239 & \underline{0.265} & \textbf{0.266} \\
    \cmidrule(lr){3-8}
    & & mIoU~\citep{hou2024salience} $\uparrow$
        & \textbf{0.287} & 0.265 & 0.266 & 0.273 & \underline{0.274} \\
    & & ImageReward~\citep{xu2023imagereward} $\uparrow$
        & \underline{1.034} & 0.861 & 0.625 & 0.781 & \textbf{1.411} \\
    & & VQA~\citep{lin2024evaluating} $\uparrow$
        & \underline{0.618} & 0.608 & 0.580 & 0.588 & \textbf{0.706} \\
    \cmidrule(lr){2-8}

    & \multirow{4}{*}{\shortstack[c]{Simple\\($N=30$)}}
    & GroundingDINO$^\dagger$~\citep{liu2024grounding} $\uparrow$
        & 0.412 & 0.344 & 0.420 & \underline{0.428} & \textbf{0.450} \\
    \cmidrule(lr){3-8}
    & & mIoU~\citep{hou2024salience} $\uparrow$
        & 0.390 & 0.321 & 0.365 & \underline{0.411} & \textbf{0.428} \\
    & & ImageReward~\citep{xu2023imagereward} $\uparrow$
        & 0.689 & \textbf{0.879} & 0.764 & \underline{0.856} & 0.820 \\
    & & VQA~\citep{lin2024evaluating} $\uparrow$
        & 0.864 & \underline{0.885} & 0.844 & \textbf{0.893} & 0.858 \\
    \midrule

    \multirow{9}{*}{\shortstack[c]{Quantity-\\Aware\\Sampling}}
    & \multirow{3}{*}{\shortstack[c]{Overall\\($N=60$)}}
    & T2I-Count$^\dagger$~\citep{qian2025t2icount} $\downarrow$
        & 2.083 & 1.786 & 2.412 & \underline{1.613} & \textbf{1.344} \\
    \cmidrule(lr){3-8}
    & & ImageReward~\citep{xu2023imagereward} $\uparrow$
        & -0.243 & -0.331 & 0.048 & \underline{0.095} & \textbf{0.244} \\
    & & VQAScore~\citep{lin2024evaluating} $\uparrow$
        & 0.549 & 0.553 & 0.645 & \underline{0.652} & \textbf{0.669} \\
    \cmidrule(lr){2-8}

    & \multirow{3}{*}{\shortstack[c]{Complex\\($N=30$)}}
    & T2I-Count$^\dagger$~\citep{qian2025t2icount} $\downarrow$
        & 3.250 & 3.108 & 4.181 & \underline{2.463} & \textbf{1.725} \\
    \cmidrule(lr){3-8}
    & & ImageReward~\citep{xu2023imagereward} $\uparrow$
        & -0.300 & -0.325 & 0.104 & \underline{0.170} & \textbf{0.202} \\
    & & VQAScore~\citep{lin2024evaluating} $\uparrow$
        & 0.546 & 0.555 & \underline{0.616} & 0.608 & \textbf{0.657} \\
    \cmidrule(lr){2-8}

    & \multirow{3}{*}{\shortstack[c]{Simple\\($N=30$)}}
    & T2I-Count$^\dagger$~\citep{qian2025t2icount} $\downarrow$
        & 1.305 & 0.904 & \textbf{0.644} & \underline{0.763} & 0.963 \\
    \cmidrule(lr){3-8}
    & & ImageReward~\citep{xu2023imagereward} $\uparrow$
        & -0.159 & -0.339 & -0.007 & \underline{0.019} & \textbf{0.286} \\
    & & VQAScore~\citep{lin2024evaluating} $\uparrow$
        & 0.555 & 0.550 & 0.673 & \textbf{0.696} & \underline{0.681} \\
    \bottomrule
  \end{tabular}
\end{table}

\paragraph{Layout-to-Image Generation}
\label{sec:layout_results}

Table~\ref{tab:table_total} reports quantitative results on the
layout-to-image task. PATHS improves performance most clearly on the
complex subset, where prompts contain multiple objects and spatial
relations. Compared with independent posterior-initialization baselines,
PATHS improves held-out image quality and text-image alignment while
remaining competitive on layout metrics. The comparison with Best-of-4 is
particularly important: selecting the best independent chain after
sampling cannot replace the online information exchange induced by
replica-exchange swaps.

\paragraph{Quantity-Aware Sampling}
\label{sec:quantity_results}

Table~\ref{tab:table_total} reports quantitative results on
quantity-aware sampling. The gains are strongest on the complex subset,
where target counts are harder to realize under a fixed inference-time
budget. PATHS substantially reduces count error compared with
\(\Psi\)-Sampler and Best-of-4 while also improving held-out preference
and text-image alignment metrics. This indicates that the benefit is not
simply due to running multiple independent chains, but to tempering and
online inter-chain communication.

\paragraph{Takeaways.}
Across both tasks, the largest gains appear in the complex subsets, where
the reward landscapes contain more separated high-reward modes induced by
multi-object layouts or difficult counting constraints. This supports the
central hypothesis of PATHS: under a matched reward-model evaluation
budget, replacing independent same-temperature chains with a tempered,
swap-coupled ensemble improves finite-budget exploration. The gains on
held-out metrics further suggest that PATHS does not merely optimize the
seen reward, but provides better initial particles for the downstream SMC
stage.
\section{Conclusion}
\label{sec:conclusion}

We introduced PATHS, an inference-time initialization framework that improves reward alignment in diffusion models under rare and multi-modal reward landscapes. By leveraging Parallel Tempering, PATHS enables more effective exploration of complex reward-aware posteriors and produces stronger initial particles for downstream SMC-based alignment. Across layout-to-image and quantity-aware generation tasks, PATHS consistently outperforms prior SMC-based methods and $\Psi$-Sampler, demonstrating that robust initialization and cross-mode exploration are critical for inference-time reward alignment in high-complexity settings.

% \begin{ack}
% Use unnumbered first level headings for the acknowledgments. All acknowledgments
% go at the end of the paper before the list of references. Moreover, you are required to declare
% funding (financial activities supporting the submitted work) and competing interests (related financial activities outside the submitted work).
% More information about this disclosure can be found at: \url{https://neurips.cc/Conferences/2026/PaperInformation/FundingDisclosure}.

% Do {\bf not} include this section in the anonymized submission, only in the final paper. You can use the \texttt{ack} environment provided in the style file to automatically hide this section in the anonymized submission.
% \end{ack}

%%%%%%%%%%%%%%%%%%% reference %%%%%%%%%%%%%%%%%%%%%%%%%%%%%%%%%%%%%%%%%
%\newpage
\bibliographystyle{plainnat}
\bibliography{reference}

%%%%%%%%%%%%%%%%%%%%%%%%%%%%%%%%%%%%%%%%%%%%%%%%%%%%%%%%%%%%%%%%%%%%%%%%
\newpage

\appendix
\newpage

\section{Temperature-Scaled pCNL Transition}
\label{app:pcnl_temperature_scaled_transition}

We describe the within-temperature pCNL transition used in PATHS and
show how the temperature scaling enters both the proposal drift and the
Metropolis-Hastings correction. For each temperature level \(T_\ell\), the
tempered initial posterior is
\begin{equation}
\pi^*_{1,\ell}(x)
=
\frac{1}{Z_\ell}
p_1(x)
\exp\left(
\frac{r(\hat{x}_0(x_1))}{\alpha T_\ell}
\right),
\label{eq:app_tempered_initial_posterior}
\end{equation}
where \(p_1(x)=\mathcal{N}(\mathbf{0},\mathbf{I})\),
\(\hat{x}_0(x_1)\) is the one-step denoised estimate from
\(x\), \(r(\hat{x}_0(x_1))\) is the reward, \(\alpha\)
controls the reward strength, and \(T_\ell\) is the temperature. The cold
chain has \(T_1=1\) and therefore targets the original reward-aware
posterior, while larger \(T_\ell\) values flatten the reward contribution
and define hotter chains.

Let
\begin{equation}
\Phi_\ell(x)
=
\frac{r(\hat{x})}{\alpha T_\ell}
\label{eq:app_temperature_scaled_reward_potential}
\end{equation}
denote the temperature-scaled reward potential, and let
\begin{equation}
\mathbf{g}_\ell(x)
=
\nabla_{x}\Phi_\ell(x)
=
\frac{1}{\alpha T_\ell}
\nabla_{x} r(\hat{x}_0(x_1))
\label{eq:app_temperature_scaled_reward_gradient}
\end{equation}
be the corresponding temperature-scaled reward gradient. Thus, hotter
chains use weaker reward gradients, which allows them to explore a flatter
reward landscape.

Given the current state \(x_k^{(\ell)}\) of chain \(\ell\) at
MCMC iteration \(k\), the pCNL proposal is
\begin{equation}
x' =
\rho_\ell x_k^{(\ell)}
+
\sqrt{1-\rho_\ell^2}
\left(
\boldsymbol{\xi}
+
\frac{\sqrt{s_\ell}}{2}
\mathbf{g}_\ell(x_k^{(\ell)})
\right),
\qquad
\boldsymbol{\xi}\sim\mathcal{N}(\mathbf{0},\mathbf{I}),
\label{eq:app_pcnl_proposal}
\end{equation}
where \(s_\ell>0\) is the Langevin step size and
\(\rho_\ell\in(0,1)\) controls the pCN persistence. Equivalently, this
proposal has Gaussian transition density
\begin{equation}
q_\ell(x'\mid x)
=
\mathcal{N}
\left(
x';
\,
\rho_\ell x
+
\frac{\sqrt{1-\rho_\ell^2}\sqrt{s_\ell}}{2}
\mathbf{g}_\ell(x),
\,
(1-\rho_\ell^2)\mathbf{I}
\right).
\label{eq:app_pcnl_proposal_density}
\end{equation}

The proposed state is accepted with the Metropolis-Hastings probability
\begin{equation}
a_\ell(x,x')
=
\min\left\{
1,\,
\frac{
\pi^*_{1,\ell}(x')
q_\ell(x\mid x')
}{
\pi^*_{1,\ell}(x)
q_\ell(x'\mid x)
}
\right\}.
\label{eq:app_pcnl_mh_probability}
\end{equation}
Substituting Eq.~\eqref{eq:app_tempered_initial_posterior}, the log
acceptance ratio becomes
\begin{align}
\log A_\ell(x,x')
&=
\Phi_\ell(x')
-
\Phi_\ell(x)
+
\log p_1(x')
-
\log p_1(x)
+
\log q_\ell(x\mid x')
-
\log q_\ell(x'\mid x)
\nonumber \\
&=
\frac{
r(\hat{x}'_0(x_1))
-
r(\hat{x}_0(x_1))
}{\alpha T_\ell}
+
\log p_1(x')
-
\log p_1(x)
+
\log q_\ell(x\mid x')
-
\log q_\ell(x'\mid x).
\label{eq:app_pcnl_log_acceptance_general}
\end{align}
Since \(p_1=\mathcal{N}(\mathbf{0},\mathbf{I})\), the prior contribution is
\begin{equation}
\log p_1(x')
-
\log p_1(x)
=
-\frac{1}{2}\|x'\|_2^2
+
\frac{1}{2}\|x\|_2^2.
\label{eq:app_pcnl_prior_ratio}
\end{equation}
The proposal-density correction is
\begin{align}
\log q_\ell(x\mid x')
-
\log q_\ell(x'\mid x)
&=
-\frac{1}{2(1-\rho_\ell^2)}
\left\|
x
-
\rho_\ell x'
-
\frac{\sqrt{1-\rho_\ell^2}\sqrt{s_\ell}}{2}
\mathbf{g}_\ell(x')
\right\|_2^2
\nonumber \\
&\quad
+
\frac{1}{2(1-\rho_\ell^2)}
\left\|
x'
-
\rho_\ell x
-
\frac{\sqrt{1-\rho_\ell^2}\sqrt{s_\ell}}{2}
\mathbf{g}_\ell(x)
\right\|_2^2 .
\label{eq:app_pcnl_proposal_ratio}
\end{align}
Combining Eqs.~\eqref{eq:app_pcnl_log_acceptance_general},
\eqref{eq:app_pcnl_prior_ratio}, and
\eqref{eq:app_pcnl_proposal_ratio} yields the temperature-scaled
Metropolis-Hastings correction used by each PATHS replica.

This correction is important for two reasons. First, the reward difference
is divided by \(\alpha T_\ell\), so hotter chains are less sharply biased
toward local reward peaks. Second, the proposal drift
\(\mathbf{g}_\ell(x)\) is also scaled by \(1/(\alpha T_\ell)\),
so hotter chains take weaker reward-gradient-guided moves and can explore
the prior geometry more broadly. The Metropolis-Hastings correction then
ensures that, despite this biased proposal, the Markov kernel at
temperature \(T_\ell\) leaves the intended tempered target
\(\pi^*_{1,\ell}\) invariant.

As a useful special case, if the Langevin drift is removed by setting
\(\mathbf{g}_\ell(x)=\mathbf{0}\), the proposal reduces to the
standard pCN proposal
\begin{equation}
x'
=
\rho_\ell x
+
\sqrt{1-\rho_\ell^2}\boldsymbol{\xi},
\qquad
\boldsymbol{\xi}\sim\mathcal{N}(\mathbf{0},\mathbf{I}).
\label{eq:app_pcn_proposal}
\end{equation}
This proposal is reversible with respect to the Gaussian prior
\(p_1\), so the prior and proposal-density terms cancel in the
Metropolis-Hastings ratio. The acceptance probability then simplifies to
\begin{equation}
a_\ell^{\mathrm{pCN}}(x,x')
=
\min\left\{
1,\,
\exp\left(
\frac{
r(\hat{x}'_0(x_1))
-
r(\hat{x}_0(x_1))
}{\alpha T_\ell}
\right)
\right\}.
\label{eq:app_pcn_acceptance}
\end{equation}
The pCNL transition used in PATHS can therefore be viewed as a
reward-gradient-informed extension of this prior-preserving pCN update,
with an additional Metropolis-Hastings correction that preserves the
temperature-specific target in Eq.~\eqref{eq:app_tempered_initial_posterior}.
\section{Derivation of the Replica-Exchange Swap Acceptance Rule}
\label{app:swap_acceptance_derivation}

We derive the Metropolis acceptance probability used for replica-exchange
moves in PATHS. PATHS uses an increasing temperature ladder
\[
1=T_1<T_2<\cdots<T_L,
\]
where \(T_1=1\) denotes the cold chain and larger \(T_\ell\) values
denote hotter chains. For each temperature level \(T_\ell\), the tempered
initial posterior is defined as
\begin{equation}
\pi^*_{1,\ell}(x)
=
\frac{1}{Z_\ell}
p_1(x)
\exp\left(
\frac{r(\hat{x}_0(x_1))}{\alpha T_\ell}
\right),
\label{eq:appendix_tempered_target}
\end{equation}
where \(p_1(x)=\mathcal{N}(\mathbf{0},\mathbf{I})\), \(Z_\ell\)
is the normalizing constant, \(r(\hat{x}_0(x_1))\) is the reward
computed on the denoised estimate, and \(\alpha\) controls the strength
of reward tempering. Larger \(T_\ell\) values flatten the reward
contribution and therefore define hotter, more exploratory chains.

The joint target distribution of the parallel tempering system is
\begin{equation}
\pi_{\mathrm{pt}}(x^{(1)},\ldots,x^{(L)})
=
\prod_{\ell=1}^{L}
\pi^*_{1,\ell}(x^{(\ell)}).
\label{eq:appendix_pt_joint_target}
\end{equation}

Consider swapping the states of two neighboring temperature levels
\(i\) and \(j=i+1\), where \(T_i<T_j\). Let
\(x^{(i)}\) and \(x^{(j)}\) denote the current states at
temperatures \(T_i\) and \(T_j\), respectively. The proposed swap is
\[
(x^{(i)},x^{(j)})
\mapsto
(x^{(j)},x^{(i)}).
\]
Since the swap proposal is symmetric, the Metropolis-Hastings acceptance
ratio is given by the ratio of the joint target density after and before
the swap:
\begin{equation}
A_{\mathrm{swap}}
=
\frac{
\pi^*_{1,i}(x^{(j)})
\pi^*_{1,j}(x^{(i)})
}{
\pi^*_{1,i}(x^{(i)})
\pi^*_{1,j}(x^{(j)})
}.
\label{eq:appendix_swap_ratio}
\end{equation}

Substituting Eq.~\eqref{eq:appendix_tempered_target} into
Eq.~\eqref{eq:appendix_swap_ratio}, we obtain
\begin{align}
\log A_{\mathrm{swap}}
&=
\log \pi^*_{1,i}(x^{(j)})
+
\log \pi^*_{1,j}(x^{(i)})
-
\log \pi^*_{1,i}(x^{(i)})
-
\log \pi^*_{1,j}(x^{(j)}) \nonumber \\
&=
\left[
\log p_1(x^{(j)})
+
\frac{r(\hat{x}^{(j)}_0(x_1))}{\alpha T_i}
-
\log Z_i
\right]
+
\left[
\log p_1(x^{(i)})
+
\frac{r(\hat{x}^{(i)}_0(x_1))}{\alpha T_j}
-
\log Z_j
\right] \nonumber \\
&\quad -
\left[
\log p_1(x^{(i)})
+
\frac{r(\hat{x}^{(i)}_0(x_1))}{\alpha T_i}
-
\log Z_i
\right]
-
\left[
\log p_1(x^{(j)})
+
\frac{r(\hat{x}^{(j)}_0(x_1))}{\alpha T_j}
-
\log Z_j
\right].
\label{eq:appendix_swap_expansion}
\end{align}
The Gaussian prior terms \(\log p_1(\cdot)\) cancel because all
temperature levels share the same prior reference measure. The
normalizing constants \(\log Z_i\) and \(\log Z_j\) also cancel because
the swap only exchanges states between two fixed temperature levels.
Therefore,
\begin{equation}
\log A_{\mathrm{swap}}
=
\left(
\frac{1}{T_i}
-
\frac{1}{T_j}
\right)
\frac{
r(\hat{x}^{(j)}_0(x_1))
-
r(\hat{x}^{(i)}_0(x_1))
}{\alpha}.
\label{eq:appendix_swap_log_acceptance}
\end{equation}
The swap is accepted with probability
\begin{equation}
a_{\mathrm{swap}}
=
\min\left\{
1,\,
\exp\left(\log A_{\mathrm{swap}}\right)
\right\}.
\label{eq:appendix_swap_probability}
\end{equation}

This expression has an intuitive interpretation. Since \(T_i<T_j\), the
coefficient
\[
\frac{1}{T_i}-\frac{1}{T_j}
\]
is positive. Therefore, if the hotter chain \(j\) has discovered a
higher-reward state than the colder chain \(i\), namely if
\[
r(\hat{x}^{(j)}_0(x_1))
>
r(\hat{x}^{(i)}_0(x_1)),
\]
then \(\log A_{\mathrm{swap}}>0\), and the swap is accepted with
probability one. This allows high-reward states discovered by hotter
chains to move down toward colder temperatures. Conversely, swaps that
move lower-reward states into colder chains may still be accepted with
probability \(\exp(\log A_{\mathrm{swap}})\), which is necessary for
detailed balance and prevents the algorithm from degenerating into greedy
reward maximization.

In our implementation, swaps are applied particle-wise. If
\(x^{(\ell)}_c\) denotes the \(c\)-th particle at temperature
level \(\ell\), then the particle-wise log acceptance ratio is
\begin{equation}
\log A^{(c)}_{\mathrm{swap}}
=
\left(
\frac{1}{T_i}
-
\frac{1}{T_j}
\right)
\frac{
r(\hat{x}^{(j)}_{0,c}(x_1))
-
r(\hat{x}^{(i)}_{0,c}(x_1))
}{\alpha}.
\label{eq:appendix_particlewise_swap_log_acceptance}
\end{equation}
This is the expression used in Eq.~\eqref{eq:appendix_swap_log_acceptance}. If an
entire block of \(C\) particles were swapped jointly instead, the log
acceptance ratio would be the sum of the particle-wise log ratios:
\begin{equation}
\log A^{\mathrm{block}}_{\mathrm{swap}}
=
\sum_{c=1}^{C}
\log A^{(c)}_{\mathrm{swap}}.
\label{eq:appendix_block_swap_log_acceptance}
\end{equation}

Finally, the swap transition preserves the joint tempered distribution
\(\pi_{\mathrm{pt}}\). Because the adjacent-replica swap proposal is
symmetric, the Metropolis rule satisfies detailed balance:
\begin{equation}
\pi_{\mathrm{pt}}(x)q(x,x')
a(x,x')
=
\pi_{\mathrm{pt}}(x')q(x',x)
a(x',x),
\label{eq:appendix_swap_detailed_balance}
\end{equation}
where \(x'\) denotes the swapped state. Hence, the
replica-exchange kernel leaves \(\pi_{\mathrm{pt}}\) invariant. Since
the within-temperature pCNL kernels also preserve their respective
tempered targets, the composition of pCNL updates and replica-exchange
swaps preserves the full joint target distribution in
Eq.~\eqref{eq:appendix_pt_joint_target}.

\section{Algorithm Details for PATHS}
\label{app:paths_algorithm}

Here we provide the detailed pseudocode for PATHS. The procedure is
summarized in Algorithm~\ref{alg:paths}. The implementation uses the
compact per-particle pCNL log-density contribution \(\Lambda_{\ell,c}\)
defined below, which is equivalent to the temperature-scaled
Metropolis--Hastings correction in
Appendix~\ref{app:pcnl_temperature_scaled_transition}. Replica-exchange
swaps are implemented using an even--odd schedule: at successive swap
intervals, PATHS alternates between the disjoint adjacent pairs
\(\{(1,2),(3,4),\ldots\}\) and
\(\{(2,3),(4,5),\ldots\}\). This allows multiple non-overlapping swaps to
be proposed in parallel while ensuring that every adjacent temperature
pair is attempted regularly.

\begin{algorithm}[H]
%\scriptsize
\caption{PATHS: Parallel Tempering pCNL for Initial Particle Sampling}
\label{alg:paths}
\begin{algorithmic}[1]
\REQUIRE Initial cold-chain particles
\(\mathbf{X}^{(1)}_0\in\mathbb{R}^{C\times D}\);
reward \(r\); reward scale \(\alpha\); pCNL step sizes
\(\{h_\ell\}_{\ell=1}^L\); temperatures
\(1=T_1<T_2<\cdots<T_L\); swap interval
\(\Delta_{\mathrm{swap}}\); burn-in \(B\); post-burn-in iterations \(M\);
number of SMC initial particles \(N_{\mathrm{init}}\)
\ENSURE Initial particle set \(\mathcal{S}_{\mathrm{init}}\)

\STATE \textbf{// Initialization}
\STATE \(\mathbf{X}^{(1)} \leftarrow \mathbf{X}^{(1)}_0\)
\FOR{\(\ell=2,\ldots,L\)}
    \STATE Sample \(\mathbf{X}^{(\ell)}\sim p_1^{\otimes C}\)
\ENDFOR
\STATE Evaluate rewards \(\mathbf{r}^{(\ell)}\) and gradients
\(\mathbf{G}^{(\ell)}\) for all \(\ell\)
\STATE \(\mathcal{S}\leftarrow\emptyset\)

\FOR{\(k=1,\ldots,B+M\)}
    \STATE \textbf{// Within-temperature pCNL updates}
    \FOR{\(\ell=1,\ldots,L\)}
        \STATE Compute temperature-scaled rewards and gradients:
        \(\tilde{\mathbf{r}}^{(\ell)}=\mathbf{r}^{(\ell)}/(\alpha T_\ell)\),
        \(\tilde{\mathbf{G}}^{(\ell)}=\mathbf{G}^{(\ell)}/(\alpha T_\ell)\)
        \STATE Propose \(\mathbf{X}'^{(\ell)}\) using the
        temperature-scaled pCNL proposal in Eq.~\eqref{eq:paths_pcnl_proposal}
        \STATE Evaluate proposed rewards \(\mathbf{r}'^{(\ell)}\) and
        gradients \(\mathbf{G}'^{(\ell)}\)
        \STATE Compute
        \(\log\mathbf{A}^{(\ell)}_{\mathrm{pCNL}}\)
        using Eq.~\eqref{eq:pcnl_acceptance_particlewise}
        \STATE Accept or reject each particle independently and update
        \((\mathbf{X}^{(\ell)},\mathbf{r}^{(\ell)},\mathbf{G}^{(\ell)})\)
    \ENDFOR

    \STATE \textbf{// Replica-exchange swaps with even--odd schedule}
    \IF{\(k \bmod \Delta_{\mathrm{swap}}=0\) \AND \(L>1\)}
        \STATE \(q\leftarrow k/\Delta_{\mathrm{swap}}\)
        \STATE \(\mathcal{P}_k \leftarrow
        \{(2m-1,2m)\}_{m=1}^{\lfloor L/2\rfloor}\)
        if \(q\) is odd, otherwise
        \(\mathcal{P}_k \leftarrow
        \{(2m,2m+1)\}_{m=1}^{\lfloor (L-1)/2\rfloor}\)
        \FOR{\((i,j)\in\mathcal{P}_k\)}
            \STATE Compute particle-wise swap log-ratio
            \[
            \log \mathbf{A}^{(i,j)}_{\mathrm{swap}}
            =
            \left(
            \frac{1}{T_i}
            -
            \frac{1}{T_j}
            \right)
            \frac{\mathbf{r}^{(j)}-\mathbf{r}^{(i)}}{\alpha}
            \]
            \STATE Accept or reject each particle-wise swap independently
        \ENDFOR
    \ENDIF

    \STATE \textbf{// Cold-chain sample collection}
    \IF{\(k>B\)}
        \STATE Append \(\mathbf{X}^{(1)}\) to \(\mathcal{S}\)
    \ENDIF
\ENDFOR

\STATE Apply chain-stratified thinning to \(\mathcal{S}\) and select
\(N_{\mathrm{init}}\) cold-chain particles
\STATE \textbf{return} \(\mathcal{S}_{\mathrm{init}}\)
\end{algorithmic}
\end{algorithm}

\begin{equation}
\mathbf{X}'^{(\ell)}
=
\rho_\ell \mathbf{X}^{(\ell)}
+
\sqrt{1-\rho_\ell^2}
\left(
\boldsymbol{\Xi}^{(\ell)}
+
\frac{\sqrt{h_\ell}}{2}
\tilde{\mathbf{G}}^{(\ell)}
\right),
\qquad
\boldsymbol{\Xi}^{(\ell)}
\sim
\mathcal{N}(\mathbf{0},\mathbf{I}_D)^{\otimes C}.
\label{eq:paths_pcnl_proposal}
\end{equation}

For each particle \(c\), the pCNL Metropolis--Hastings correction uses
the following log-density contribution. For a proposal from
\(\mathbf{x}_c\) to \(\mathbf{y}_c\) at temperature level \(\ell\), define
\begin{equation}
\Lambda_{\ell,c}
\left(
\mathbf{y}_c\mid\mathbf{x}_c;
\tilde{r}(\mathbf{y}_c),
\tilde{\mathbf{g}}(\mathbf{y}_c)
\right)
=
\tilde{r}(\mathbf{y}_c)
-
\frac{h_\ell}{8}
\left\|
\tilde{\mathbf{g}}(\mathbf{y}_c)
\right\|^2
+
\frac{\sqrt{h_\ell}}{2}
\left\langle
\frac{\mathbf{x}_c-\rho_\ell\mathbf{y}_c}
{\sqrt{1-\rho_\ell^2}},
\tilde{\mathbf{g}}(\mathbf{y}_c)
\right\rangle,
\label{eq:log_q_particlewise}
\end{equation}
where
\[
\tilde{r}(\mathbf{y}_c)
=
\frac{
r(\hat{\mathbf{y}}_{0|1,c})
}{\alpha T_\ell},
\qquad
\tilde{\mathbf{g}}(\mathbf{y}_c)
=
\frac{
\nabla_{\mathbf{x}_1} r(\hat{\mathbf{y}}_{0|1,c})
}{\alpha T_\ell}.
\]
The particle-wise pCNL acceptance probability is
\begin{equation}
A_{\mathrm{pCNL},c}^{(\ell)}
=
\min\left\{
1,
\exp\left[
\Lambda_{\ell,c}
\left(
\mathbf{x}'^{(\ell)}_c
\mid
\mathbf{x}^{(\ell)}_c;
\tilde{r}'^{(\ell)}_c,
\tilde{\mathbf{g}}'^{(\ell)}_c
\right)
-
\Lambda_{\ell,c}
\left(
\mathbf{x}^{(\ell)}_c
\mid
\mathbf{x}'^{(\ell)}_c;
\tilde{r}^{(\ell)}_c,
\tilde{\mathbf{g}}^{(\ell)}_c
\right)
\right]
\right\}.
\label{eq:pcnl_acceptance_particlewise}
\end{equation}
Equivalently, Eq.~\eqref{eq:pcnl_acceptance_particlewise} is applied
independently across the \(C\) particles in the vectorized implementation.
\section{Invariant Distribution of PATHS}
\label{app:path_invariance}

We show that the PATHS transition preserves the joint tempered target
distribution. For one particle per temperature level, define
\begin{equation}
\Pi(x^{(1)},\ldots,x^{(L)})
=
\prod_{\ell=1}^{L}
\pi^*_{1,\ell}(x^{(\ell)}).
\label{eq:app_paths_joint_target_short}
\end{equation}

Let \(\mathcal{K}_\ell\) denote the Metropolis-corrected pCNL kernel at
temperature \(T_\ell\). As described in
Appendix~\ref{app:pcnl_temperature_scaled_transition}, this kernel
targets \(\pi^*_{1,\ell}\) and satisfies detailed balance:
\begin{equation}
\pi^*_{1,\ell}(dx)\,
\mathcal{K}_\ell(x,dx')
=
\pi^*_{1,\ell}(dx')\,
\mathcal{K}_\ell(x',dx).
\label{eq:app_pcnl_db_short}
\end{equation}
Therefore, the product within-temperature kernel
\begin{equation}
\mathcal{K}_{\mathrm{within}}
=
\prod_{\ell=1}^{L}
\mathcal{K}_\ell
\label{eq:app_within_kernel_short}
\end{equation}
preserves the product target \(\Pi\).

Next, consider the replica-exchange swap kernel
\(\mathcal{K}_{\mathrm{swap}}\). As derived in
Appendix~\ref{app:swap_acceptance_derivation}, the adjacent-temperature
swap proposal is accepted with the Metropolis probability
\begin{equation}
a_{\mathrm{swap}}
=
\min\left\{
1,\,
\frac{
\Pi(S_{ij}x)
}{
\Pi(x)
}
\right\},
\label{eq:app_swap_acceptance_short}
\end{equation}
where \(S_{ij}x\) denotes the state obtained by swapping
replicas \(i\) and \(j\). Since the swap proposal is symmetric, this
Metropolis rule satisfies detailed balance with respect to \(\Pi\).
Hence, \(\mathcal{K}_{\mathrm{swap}}\) also preserves \(\Pi\).

A full PATHS update consists of within-temperature pCNL transitions,
optionally followed by a replica-exchange swap:
\begin{equation}
\mathcal{K}_{\mathrm{PATHS}}
=
\mathcal{K}_{\mathrm{swap}}
\circ
\mathcal{K}_{\mathrm{within}}.
\label{eq:app_paths_kernel_short}
\end{equation}
Since both \(\mathcal{K}_{\mathrm{within}}\) and
\(\mathcal{K}_{\mathrm{swap}}\) preserve \(\Pi\), their composition also
preserves \(\Pi\). Therefore,
\begin{equation}
\int
\Pi(dx)\,
\mathcal{K}_{\mathrm{PATHS}}(x,dx')
=
\Pi(dx').
\label{eq:app_paths_invariance_short}
\end{equation}

If swaps are performed every \(\Delta_{\mathrm{swap}}\) iterations, the
kernel becomes
\begin{equation}
\mathcal{K}_{\mathrm{PATHS},\Delta}
=
\mathcal{K}_{\mathrm{swap}}
\circ
\mathcal{K}_{\mathrm{within}}^{\Delta_{\mathrm{swap}}},
\label{eq:app_paths_kernel_delta_short}
\end{equation}
which also preserves \(\Pi\), since powers of an invariant kernel remain
invariant.

In the particle-wise implementation with \(C\) particles per temperature,
the joint target becomes
\begin{equation}
\Pi_C
=
\prod_{c=1}^{C}
\prod_{\ell=1}^{L}
\pi^*_{1,\ell}
\left(
x^{(\ell,c)}
\right).
\label{eq:app_particlewise_target_short}
\end{equation}
Because both pCNL updates and swap moves are applied particle-wise using
the same Metropolis corrections, the same argument shows that
\(\Pi_C\) is invariant under the full PATHS transition.
\section{Proof of Proposition~\ref{prop:init}}
\label{appendix:proof}

We restate the proposition and provide a proof.

\paragraph{Statement}
Assume that there exists a measurable high-reward region 
$B \subset \mathcal{X}_0$ in the data space. Let 
$T(x_1) = \hat{x}_0(x_1)$ denote the Tweedie estimator, and let 
$\nu = T_{\#}p_1$ be the pushforward measure of the prior $p_1$ under $T$.
Suppose that
\[
    \nu(B)
    =
    \mathbb{P}_{x_1 \sim p_1}\bigl(T(x_1) \in B\bigr)
    =
    \varepsilon
    \ll 1.
\]
Furthermore, assume that there is a reward gap $\Delta > 0$ separating $B$
from its complement, namely
\[
    \inf_{x_0 \in B} r(x_0)
    -
    \sup_{x_0 \in B^c} r(x_0)
    \ge \Delta.
\]
Let $\pi_1^*$ denote the reward-aware initial distribution defined in
Eq.~\eqref{eq:pi_1}, i.e.,
\[
    \pi_1^*(x_1)
    =
    \frac{
        p_1(x_1)\exp\left(r(T(x_1))/\alpha\right)
    }{
        \int_{\mathcal{X}_1}
        p_1(z)\exp\left(r(T(z))/\alpha\right)\,dz
    }.
\]
If, for some fixed $\eta \in (0,1)$,
\[
    \frac{\Delta}{\alpha}
    \ge
    \log\frac{1}{\varepsilon}
    +
    \log\frac{1-\eta}{\eta},
\]
then
\[
    D_{\mathrm{KL}}(\pi_1^* \| p_1)
    \ge
    (1-\eta)\log\frac{1}{\varepsilon} - h(\eta),
\]
where
\[
    h(\eta)
    =
    -\eta\log\eta - (1-\eta)\log(1-\eta)
\]
is the binary entropy function. In particular, for any fixed
$\eta \in (0,1)$,
\[
    D_{\mathrm{KL}}(\pi_1^* \| p_1)
    =
    \Omega\left(\log\frac{1}{\varepsilon}\right).
\]

\begin{proof}
Let
\[
    T(x_1) = \hat{x}_0(x_1)
\]
be the Tweedie estimator, and define the pre-image of the high-reward region
$B$ by
\[
    A = T^{-1}(B)
    =
    \{x_1 \in \mathcal{X}_1 : T(x_1) \in B\}.
\]
By the definition of the pushforward measure $\nu = T_{\#}p_1$, we have
\[
    p_1(A)
    =
    \nu(B)
    =
    \varepsilon.
\]

Let $\beta = 1/\alpha$. The probability of $A$ under the reward-aware
initial distribution $\pi_1^*$ is
\begin{align*}
    \pi_1^*(A)
    &=
    \frac{
        \int_A p_1(x_1)\exp\left(\beta r(T(x_1))\right)\,dx_1
    }{
        \int_{\mathcal{X}_1}
        p_1(x_1)\exp\left(\beta r(T(x_1))\right)\,dx_1
    } \\
    &=
    \frac{
        \int_B \exp\left(\beta r(x_0)\right)\,d\nu(x_0)
    }{
        \int_B \exp\left(\beta r(x_0)\right)\,d\nu(x_0)
        +
        \int_{B^c} \exp\left(\beta r(x_0)\right)\,d\nu(x_0)
    }.
\end{align*}

Define
\[
    R_+ = \inf_{x_0 \in B} r(x_0),
    \qquad
    R_- = \sup_{x_0 \in B^c} r(x_0).
\]
By assumption,
\[
    R_+ - R_- \ge \Delta.
\]
Therefore,
\begin{align*}
    \int_B \exp\left(\beta r(x_0)\right)\,d\nu(x_0)
    &\ge
    \nu(B)e^{\beta R_+}
    =
    \varepsilon e^{\beta R_+}, \\
    \int_{B^c} \exp\left(\beta r(x_0)\right)\,d\nu(x_0)
    &\le
    \nu(B^c)e^{\beta R_-}
    =
    (1-\varepsilon)e^{\beta R_-}.
\end{align*}
Substituting these bounds gives
\begin{align*}
    \pi_1^*(A)
    &\ge
    \frac{
        \varepsilon e^{\beta R_+}
    }{
        \varepsilon e^{\beta R_+}
        +
        (1-\varepsilon)e^{\beta R_-}
    } \\
    &=
    \frac{
        \varepsilon
    }{
        \varepsilon
        +
        (1-\varepsilon)e^{-\beta(R_+-R_-)}
    } \\
    &\ge
    \frac{
        \varepsilon
    }{
        \varepsilon
        +
        (1-\varepsilon)e^{-\beta\Delta}
    }
    =
    q_\Delta.
\end{align*}

We now show that $q_\Delta$ is close to one under the assumed scaling of
$\beta\Delta$. Since
\[
    \beta\Delta
    =
    \frac{\Delta}{\alpha}
    \ge
    \log\frac{1}{\varepsilon}
    +
    \log\frac{1-\eta}{\eta},
\]
we have
\[
    e^{-\beta\Delta}
    \le
    \varepsilon \frac{\eta}{1-\eta}.
\]
Hence
\begin{align*}
    q_\Delta
    &=
    \frac{
        \varepsilon
    }{
        \varepsilon
        +
        (1-\varepsilon)e^{-\beta\Delta}
    } \\
    &\ge
    \frac{
        \varepsilon
    }{
        \varepsilon
        +
        (1-\varepsilon)\varepsilon\frac{\eta}{1-\eta}
    } \\
    &=
    \frac{
        1
    }{
        1
        +
        (1-\varepsilon)\frac{\eta}{1-\eta}
    } \\
    &\ge
    1-\eta.
\end{align*}
Therefore,
\[
    \pi_1^*(A) \ge q_\Delta \ge 1-\eta.
\]

Next, apply the data processing inequality for KL divergence under the
binary partition $\{A,A^c\}$. This yields
\[
    D_{\mathrm{KL}}(\pi_1^* \| p_1)
    \ge
    d_{\mathrm{Bern}}\left(\pi_1^*(A) \| p_1(A)\right),
\]
where
\[
    d_{\mathrm{Bern}}(q\|\varepsilon)
    =
    q\log\frac{q}{\varepsilon}
    +
    (1-q)\log\frac{1-q}{1-\varepsilon}.
\]
Since $p_1(A)=\varepsilon$ and $\pi_1^*(A)\ge 1-\eta$, we obtain
\[
    D_{\mathrm{KL}}(\pi_1^* \| p_1)
    \ge
    d_{\mathrm{Bern}}(1-\eta\|\varepsilon),
\]
using the fact that $d_{\mathrm{Bern}}(q\|\varepsilon)$ is increasing in
$q$ for $q>\varepsilon$.

It remains to lower bound this Bernoulli KL divergence. We have
\begin{align*}
    d_{\mathrm{Bern}}(1-\eta\|\varepsilon)
    &=
    (1-\eta)\log\frac{1-\eta}{\varepsilon}
    +
    \eta\log\frac{\eta}{1-\varepsilon} \\
    &=
    (1-\eta)\log\frac{1}{\varepsilon}
    +
    (1-\eta)\log(1-\eta)
    +
    \eta\log\eta
    -
    \eta\log(1-\varepsilon) \\
    &\ge
    (1-\eta)\log\frac{1}{\varepsilon}
    -
    h(\eta),
\end{align*}
where
\[
    h(\eta)
    =
    -\eta\log\eta - (1-\eta)\log(1-\eta),
\]
and we used $-\eta\log(1-\varepsilon)\ge 0$.

Thus,
\[
    D_{\mathrm{KL}}(\pi_1^* \| p_1)
    \ge
    (1-\eta)\log\frac{1}{\varepsilon}
    -
    h(\eta).
\]
For any fixed $\eta \in (0,1)$, the dominant term as
$\varepsilon \to 0$ is $(1-\eta)\log(1/\varepsilon)$. Therefore,
\[
    D_{\mathrm{KL}}(\pi_1^* \| p_1)
    =
    \Omega\left(\log\frac{1}{\varepsilon}\right).
\]
\end{proof}
\section{Experimental Details}
\label{app:experiment_details}

\paragraph{Base model and stochastic sampling.}
All experiments use FLUX.1-schnell~\citep{flux2024} as the pretrained
score-based generative model. Although FLUX.1-schnell is implemented as a
flow-based model, we use its stochastic sampling formulation for
SMC-based inference-time reward alignment. We keep the downstream
reward-guided SMC denoising pipeline fixed across methods and modify only
the initialization stage.

\paragraph{Inference-time budget.}
We use a total inference-time budget of \(1000\) reward-model function
evaluations for all methods and all tasks. For posterior-initialization
methods, \(500\) evaluations are allocated to initial particle sampling
and \(500\) evaluations are allocated to the downstream SMC stage. The
SMC stage is initialized with \(20\) particles produced by the
initialization method and is run for \(25\) denoising steps. Since reward
evaluation is performed once per particle per denoising step, this
corresponds to \(20\times25=500\) reward-model evaluations in the SMC
stage. Prior-initialized SMC baselines allocate the full budget to the
SMC stage, corresponding to \(40\) particles over \(25\) denoising steps.
Thus, all methods are compared under the same total reward-model
evaluation budget.

\paragraph{Benchmarks.}
We evaluate PATHS on two compositional reward-alignment tasks:
layout-to-image generation and quantity-aware sampling. Both tasks induce
multi-modal reward landscapes: multiple object configurations can satisfy
the same spatial layout, and many spatial arrangements can realize the
same target object count. We therefore report results not only on the
full benchmark, but also on difficulty-stratified simple and complex
subsets.

\paragraph{Layout-to-image benchmark.}
We evaluate layout-to-image generation on the prompt set in
\texttt{selected\_prompts\_data.json}. Each prompt specifies object
phrases, target bounding boxes, and spatial relations. We split the
benchmark by the number of objects: prompts with four objects are assigned
to the complex subset, while prompts with three or fewer objects are
assigned to the simple subset. The complex subset contains \(10\) prompts
and the simple subset contains \(15\) prompts. For each prompt, we
generate two images per method, yielding \(25\times2=50\) images in total
for evaluation, with \(20\) complex images and \(30\) simple images.

\paragraph{Quantity-aware benchmark.}
We evaluate quantity-aware sampling on the prompt set in
\texttt{quantity\_aware\_selected\_20.json}. The benchmark contains
\(20\) prompts, each specifying a target object category and count. We
split the prompts by target count: prompts with \(n_{\mathrm{gt}}\ge25\)
are assigned to the complex subset, while prompts with
\(n_{\mathrm{gt}}<25\) are assigned to the simple subset. For each
prompt, we generate three images per method, yielding
\(20\times3=60\) images in total for evaluation. This gives \(30\)
images in the complex subset and \(30\) images in the simple subset.

\paragraph{Reward and evaluation models.}
For layout-to-image generation, the seen reward is computed using
GroundingDINO~\citep{liu2024grounding}. Given a generated image, target
object phrases, and target bounding boxes, GroundingDINO predicts object
boxes for each phrase. We compute the layout reward by measuring the
agreement between predicted boxes and target boxes. We additionally
report held-out mIoU using a different detector, Salience
DETR~\citep{hou2024salience}, which is never used during sampling.

For quantity-aware sampling, the seen reward is computed using
T2I-Count~\citep{qian2025t2icount}. Given a generated image and target
object category, T2I-Count outputs a density map, and the predicted count
is obtained by summing the density map. The reward used during sampling
is the negative smooth L1 loss in Eq.~\eqref{eq:quantity_reward}. In the
quantitative table, we report the corresponding positive T2I-Count error,
where lower values indicate better count alignment.

For both tasks, we further evaluate generated images using
ImageReward~\citep{xu2023imagereward} and
VQAScore~\citep{lin2024evaluating}. ImageReward measures
preference-based image quality, while VQAScore evaluates text-image
alignment using a visual question answering model. These metrics are not
used for guidance and are reported only as held-out evaluations.

\paragraph{Quantity-aware reward.}
For quantity-aware sampling, let \(n_{\mathrm{pred}}\) denote the
predicted count obtained by summing the T2I-Count density map, and let
\(n_{\mathrm{gt}}\) denote the target object count. During sampling, we
define the reward as the negative smooth L1 loss:
\begin{equation}
r_{\mathrm{count}}
=
-\mathrm{SmoothL1}(n_{\mathrm{pred}}, n_{\mathrm{gt}})
=
\begin{cases}
-\frac{1}{2}(n_{\mathrm{pred}}-n_{\mathrm{gt}})^2,
& |n_{\mathrm{pred}}-n_{\mathrm{gt}}| < 1,\\[3pt]
-|n_{\mathrm{pred}}-n_{\mathrm{gt}}| + \frac{1}{2},
& |n_{\mathrm{pred}}-n_{\mathrm{gt}}| \ge 1.
\end{cases}
\label{eq:quantity_reward}
\end{equation}
Thus, maximizing the reward is equivalent to minimizing the smooth
counting discrepancy.

\paragraph{PATHS hyperparameters.}
PATHS uses \(L=4\) temperature levels and \(C=1\) particle per
temperature level. Each temperature chain is run with burn-in \(B=65\)
and post-burn-in MCMC iterations \(M=60\). Replica-exchange swaps are
proposed every \(\Delta_{\mathrm{swap}}=5\) MCMC iterations. Unless
otherwise specified, hot replicas are initialized independently from the
Gaussian reference distribution
\(p_1=\mathcal{N}(\mathbf{0},\mathbf{I})\), as described in
Algorithm~\ref{alg:paths}. The layout-to-image task uses the temperature
ladder
\[
T\in\{1,2,4,8\},
\]
while the quantity-aware task uses the wider ladder
\[
T\in\{1,4,16,64\}.
\]

\begin{table}[t]
\centering
\caption{
Main PATHS hyperparameters used in the experiments.
}
\label{tab:paths_hyperparams}
\begin{tabular}{lcc}
\toprule
Hyperparameter & Layout-to-Image & Quantity-Aware \\
\midrule
Number of temperatures \(L\) & \(4\) & \(4\) \\
Particles per temperature \(C\) & \(1\) & \(1\) \\
Burn-in \(B\) & \(65\) & \(65\) \\
Post-burn-in iterations \(M\) & \(60\) & \(60\) \\
Swap interval \(\Delta_{\mathrm{swap}}\) & \(5\) & \(5\) \\
Temperature ladder & \(\{1,2,4,8\}\) & \(\{1,4,16,64\}\) \\
pCNL step size \(h_\ell\) & \(0.5\) for all \(\ell\) & \(0.5\) for all \(\ell\) \\
pCN persistence \(\rho_\ell\) & \(0.7778\) for all \(\ell\) & \(0.7778\) for all \(\ell\) \\
Reward scale \(\alpha_{\mathrm{MCMC}}\) & \(0.02\) & \(0.03\) \\
SMC reward scale \(\alpha\) & \(0.02\) & \(0.03\) \\
MCMC gradient clipping & None & \(10^{-4}\) \\
SMC gradient clipping & None & \(2.5\times10^{-4}\) \\
\bottomrule
\end{tabular}
\end{table}

We distinguish between the reward scale used during initial MCMC sampling,
\(\alpha_{\mathrm{MCMC}}\), and the reward scale used during downstream
SMC denoising, \(\alpha_{\mathrm{SMC}}\). In our experiments, these two
values are set equal within each task: \(0.02\) for layout-to-image
generation and \(0.03\) for quantity-aware sampling. The pCNL step size is
\(h_\ell=0.5\) for all temperatures, corresponding to
\(\rho_\ell=(1-h_\ell/4)/(1+h_\ell/4)=0.7778\). For quantity-aware
sampling, we apply gradient clipping with thresholds \(10^{-4}\) during
MCMC initialization and \(2.5\times10^{-4}\) during SMC denoising; no
gradient clipping is used for layout-to-image generation.

\paragraph{Even--odd replica-exchange schedule.}
Rather than sampling a single adjacent pair uniformly at random, we use
an even--odd swap schedule. At every swap interval
\(\Delta_{\mathrm{swap}}\), we alternate between two sets of disjoint
adjacent temperature pairs:
\[
\mathcal{P}_{\mathrm{odd}}=\{(1,2),(3,4),\ldots\},
\qquad
\mathcal{P}_{\mathrm{even}}=\{(2,3),(4,5),\ldots\}.
\]
For \(L=4\), this corresponds to alternating between swaps on
\((T_1,T_2)\) and \((T_3,T_4)\), followed by swaps on \((T_2,T_3)\).
Because pairs within each set are disjoint, the swap proposals can be
applied in parallel without any chain participating in two swaps at the
same step. Each proposed adjacent swap is accepted independently using
the Metropolis replica-exchange rule in Eq.~\eqref{eq:appendix_swap_probability}.
This schedule ensures that all neighboring temperature pairs are
regularly attempted while preserving the standard detailed-balance
correction for each swap move.

\begin{figure}[t]
    \centering
    \includegraphics[width=0.92\linewidth]{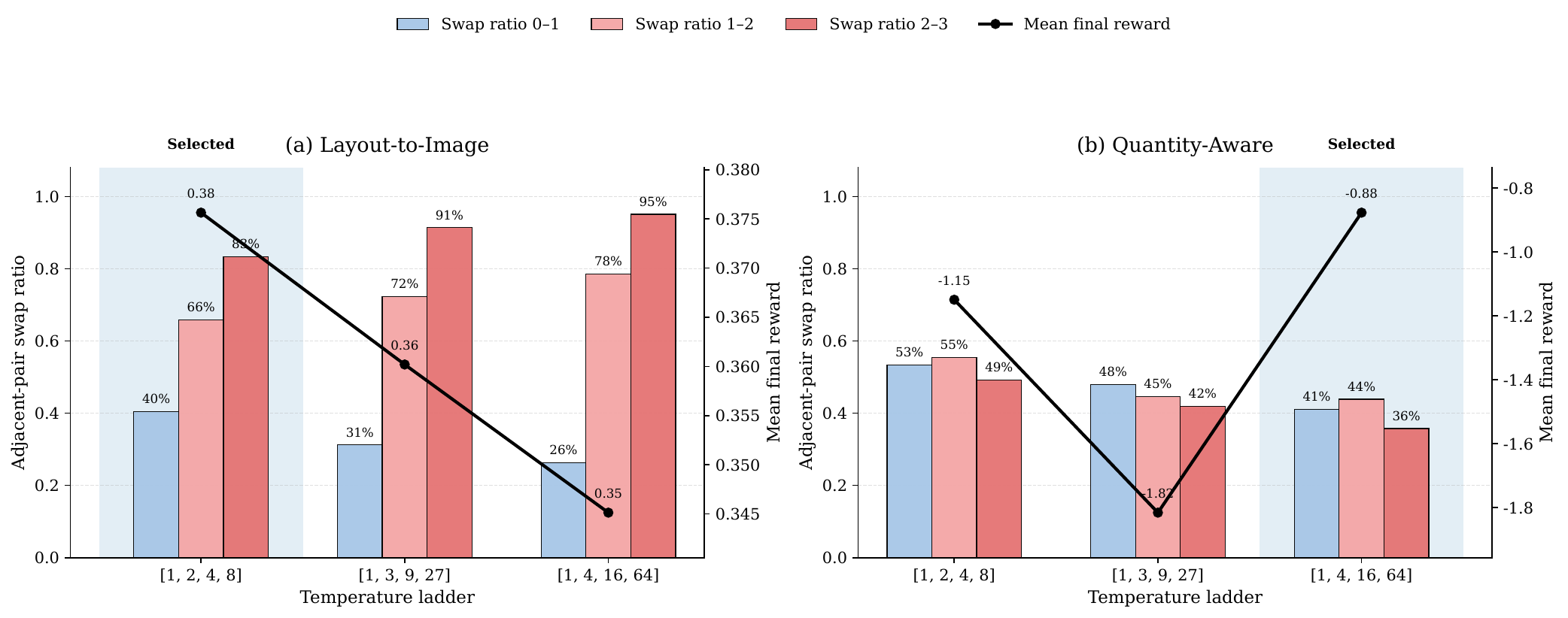}
    \caption{
    Temperature ladder sensitivity of PATHS. Bars show adjacent-pair
    replica-exchange swap ratios between neighboring temperatures
    \((T_0,T_1)\), \((T_1,T_2)\), and \((T_2,T_3)\), while the black line
    shows the mean final reward. The shaded ladders are selected for the
    main experiments. For layout-to-image generation, the milder ladder
    \([1,2,4,8]\) achieves the highest mean final reward. For
    quantity-aware sampling, the wider ladder \([1,4,16,64]\) achieves
    the highest mean final reward and maintains moderate swap ratios
    across adjacent temperature pairs. These results illustrate the
    exploration--communication trade-off in choosing the temperature
    ladder.
    }
    \label{fig:ladder_sensitivity}
\end{figure}

\paragraph{Temperature ladder sensitivity.}
We analyze the effect of the temperature ladder using the sensitivity
study shown in Figure~\ref{fig:ladder_sensitivity}. Parallel tempering
requires sufficient overlap between adjacent tempered distributions so
that replica-exchange moves can transmit information across the ladder,
and geometric temperature ladders are a standard practical choice for
encouraging such overlap~\citep{earl2005parallel,predescu2004incomplete}.
However, the appropriate ladder is task-dependent: overly narrow ladders
may not sufficiently flatten the reward landscape for effective
exploration, while overly wide ladders may weaken the coupling between
hot-chain exploration and cold-chain reward quality.

Figure~\ref{fig:ladder_sensitivity} illustrates this
exploration--communication trade-off by reporting both mean final reward
and adjacent-pair swap ratios for several geometric ladders. For
layout-to-image generation, the milder ladder \([1,2,4,8]\) provides the
best empirical trade-off among the tested settings. For quantity-aware
sampling, the wider ladder \([1,4,16,64]\) performs better, suggesting
that difficult counting constraints benefit from stronger reward
flattening while still maintaining non-negligible swap rates across
adjacent temperature pairs. We therefore use these task-level calibrated
ladders for the main quantitative and qualitative experiments.

We emphasize that this analysis is intended as a task-level sensitivity
study rather than a fully independent validation-based hyperparameter
selection protocol. A more rigorous evaluation would select temperature
ladders on a held-out validation prompt set or adapt them per prompt; we
leave such validation-based or adaptive ladder selection to future work.

\paragraph{\(\Psi\)-Sampler and Best-of-4.}
For \(\Psi\)-Sampler~\citep{yoon2025psi}, we use independent
same-temperature pCNL chains targeting the cold posterior
\(\pi_1^*\). The chains use the same total initialization budget as
PATHS, and collected samples are pooled and thinned before being passed
to the SMC stage. Best-of-4 uses four independent pCNL chains with the
same per-chain budget as PATHS, but all chains use the same temperature
\(T=1\). After burn-in and thinning, each chain is scored by the average
reward of its collected particles, and only the highest-scoring chain is
forwarded to the SMC stage. We use \(K=4\) for all tasks to match the
number of temperature levels in PATHS.

\paragraph{Compute matching.}
PATHS and Best-of-4 both use four parallel MCMC chains with \(C=1\),
\(B=65\), and \(M=60\). The key distinction is that Best-of-4 runs four
independent cold chains, whereas PATHS runs four tempered chains coupled
by replica-exchange swaps. Therefore, the comparison evaluates the effect
of temperature-based exploration and online inter-chain communication
under a matched initialization budget. Replica-exchange swaps do not
require additional reward-model evaluations, since the swap acceptance
ratio reuses the rewards already computed during the pCNL updates.

\paragraph{Qualitative examples.}
The main paper includes representative qualitative comparisons in
Figure~\ref{fig:qualitative_results}. Additional qualitative examples for
layout-to-image and quantity-aware sampling are provided in
Appendix~\ref{qualitative_results}.

\section{Synthetic Experiment Details}
\label{app:synthetic}
All methods are evaluated with the same total NFE budget of $3{,}000$.
Each method is independently repeated $B = 2{,}000$ times,
and the sample with the highest reward is selected from each run.
For PATHS, we use three parallel chains at temperatures
$T \in \{1.0,\,2.0,\,4.0\}$ with a temperature ratio of $2.0$.
% ---------------------------------------------------------
\subsection*{Base Diffusion Model}

We use a pre-trained Rectified Flow model~\citep{liu2022flow}
whose \emph{source distribution} $\pi_0$ is an isotropic
two-dimensional Gaussian,
\begin{equation}
  \pi_0 = \mathcal{N}\!\left(\mathbf{0},\,\mathbf{I}_2\right),
\end{equation}
and whose \emph{training target distribution} $\pi_1$ is an
anisotropic Gaussian,
\begin{equation}
  \pi_1 = \mathcal{N}\!\!\left(
    \begin{pmatrix}0\\0\end{pmatrix},\,
    \begin{pmatrix}1 & 0\\0 & 0.05\end{pmatrix}
  \right).
\end{equation}
At inference, denoising trajectories are computed with the
Euler sampler.

% ---------------------------------------------------------
\subsection*{Reward Function}

We define a multi-modal reward function
$r:\mathbb{R}^2\!\to\!\mathbb{R}$ in the \emph{output} space
($\mathbf{x}_1$) of the flow.
The reward is defined as a weighted sum of $K=9$ Gaussian bumps,
\begin{equation}
  r(x_1)
  \;=\;
  \frac{1}{\alpha}
  \sum_{k=1}^{K}
  w_k \exp\!\left(-10\,\|x_1 - {\mu}_k\|^2\right),
  \quad \alpha = 0.05,
\end{equation}
where $\boldsymbol{\mu}_k$ are fixed mode centres spread across
$[-2, 2]^2$ and the weights $w_k \in {0.4, 0.5, 1.0}$ are chosen to produce modes of varying heights, with one dominant mode at weight $1.0$, making the landscape intentionally multi-modal.

\section{Limitations}
\label{sec:limitations}

\paragraph{Reward complexity dependence.}
The advantage of PATHS arises specifically from the multi-modal structure of the reward landscape. PATHS' replica-exchange mechanism is designed to bridge isolated high-reward modes by exchanging information between hot and cold chains, and consequently delivers the largest gains when the reward landscape contains many disconnected high-reward basins separated by low-reward barriers. The layout-to-image and quantity-aware tasks studied in Section~\ref{sec:experiments} exhibit precisely this structure. In the layout task, a prompt such as \textit{``A person and an airplane over a car and under the chair...''} admits many distinct geometric configurations that satisfy the bounding-box constraints — different placements of each of the four objects, different camera viewpoints, and different scene compositions — each forming a separate mode in the reward-aware posterior. Similarly, in the quantity-aware task, generating \textit{``82 blueberries''} is compatible with a wide range of spatial arrangements, occlusion patterns, and viewpoints, and the reward drops sharply once the count deviates even slightly from the target (e.g., generating 80 instead of 82 blueberries already incurs a substantial penalty under the smooth-$L_1$ count reward in Eq.~\eqref{eq:quantity_reward}), creating sharply peaked, well-separated modes. In both cases, independent chain pCNL methods such as $\Psi$-Sampler tend to commit to a single basin and fail to discover competing high-reward configurations, whereas PATHS' tempered ladder enables systematic exploration across these modes.

\begin{table}[htbp]
  \caption{Quantitative comparison on the \textbf{Aesthetic Score Optimization} task. We evaluate the performance across the \textit{Overall} dataset ($N=60$), as well as two difficulty-based subsets: \textit{Complex} ($N=30$) and \textit{Simple} ($N=30$). Metrics marked with $\dagger$ are used as seen reward during reward-guided sampling, where others are held-out reward.}
  \label{tab:aesthetic_score_results_1}
  \centering
  % 열(Column) 사이의 기본 간격을 줄입니다
  \setlength{\tabcolsep}{4pt}
  % 글자 크기를 한 단계 줄입니다 (8pt)
  \footnotesize 
  \begin{tabular}{lc ccc}
    \toprule
    \multirow{2}{*}{Subset} & \multirow{2}{*}{Metrics} & \multicolumn{3}{c}{Methods} \\
    \cmidrule(lr){3-5}
    & & $\psi$-Sampler \citep{yoon2025psi} & Best-of-4 & \textbf{PATHS (Ours)} \\
    \midrule
    
    % Overall (N=60) 
    \multirow{3}{*}{\shortstack[c]{Overall\\($N=60$)}} 
    & Aesthetic$^\dagger$ \citep{schuhmann2022laionaesthetics} $\uparrow$   & \underline{6.9684} & 6.9487 & \textbf{6.9785} \\
    \cmidrule(lr){2-5}
    & ImageReward \citep{xu2023imagereward} $\uparrow$ & 1.0710 & \underline{1.0980} & \textbf{1.1036} \\
    & VQAScore \citep{lin2024evaluating} $\uparrow$    & \underline{0.8571} & \textbf{0.8579} & 0.8469 \\
    \midrule
    
    % Complex (N=30) 
    \multirow{3}{*}{\shortstack[c]{Complex\\($N=30$)}} 
    & Aesthetic$^\dagger$ \citep{schuhmann2022laionaesthetics} $\uparrow$   & 6.7708 & \underline{6.7746} & \textbf{6.7977} \\
    \cmidrule(lr){2-5}
    & ImageReward \citep{xu2023imagereward} $\uparrow$ & \underline{1.2245} & \textbf{1.2319} & 1.2174 \\
    & VQAScore \citep{lin2024evaluating} $\uparrow$    & \textbf{0.7537} & \underline{0.7536} & 0.7375 \\
    \midrule
    
    % Simple (N=30) 
    \multirow{3}{*}{\shortstack[c]{Simple\\($N=30$)}} 
    & Aesthetic$^\dagger$ \citep{schuhmann2022laionaesthetics} $\uparrow$   & \textbf{7.1659} & 7.1228 & \underline{7.1593} \\
    \cmidrule(lr){2-5}
    & ImageReward \citep{xu2023imagereward} $\uparrow$ & 0.9175 & \underline{0.9642} & \textbf{0.9898} \\
    & VQAScore \citep{lin2024evaluating} $\uparrow$    & \underline{0.9604} & \textbf{0.9622} & 0.9563 \\
    \bottomrule
  \end{tabular}
\end{table}

\paragraph{Reduced gains under low-complexity rewards.}
To illustrate the boundary of PATHS' advantage, we evaluate it on the aesthetic-preference generation task, where the reward is defined by the LAION Aesthetic Predictor~\cite{schuhmann2022laionaesthetics}. Quantitative results are reported in Table~\ref{tab:aesthetic_score_results_1} and qualitative comparisons in Figure~\ref{fig:aesthetic}. In contrast to layout/quantity tasks, PATHS' improvement over $\Psi$-Sampler and Best-of-4 is marginal — for instance, on the overall split, PATHS achieves an aesthetic score of 6.9785 versus 6.9684 for $\Psi$-Sampler, and the differences in held-out metrics are within noise. We attribute this attenuation to two factors that together make the aesthetic-preference reward landscape significantly less complex than the compositional rewards considered in our main experiments.

First, the aesthetic reward is defined purely on visual quality: the LAION Aesthetic Predictor \citep{schuhmann2022laionaesthetics} scores an image based on lighting, composition, color harmony, and similar low-level visual factors, without taking the text prompt into account. As a result, a wide variety of unrelated images can achieve comparably high scores, and the high-reward region in the latent space is broad and well-connected rather than partitioned into narrow, prompt-specific modes. This is in sharp contrast to compositional rewards such as GroundingDINO \citep{liu2024grounding}-based layout matching or T2ICount \citep{qian2025t2icount}-based counting, which couple the prompt to the image and produce sharp, prompt-dependent reward peaks. Second, the absence of a hard structural constraint (such as ``four objects in specified bounding boxes'' or ``exactly 82 instances'') removes the all-or-nothing failure modes that make compositional rewards multi-modal in the first place. When the reward landscape is smooth and largely uni-modal, independent-chain methods such as $\Psi$-Sampler can already locate high-reward regions effectively, and the mode-bridging benefit of replica exchange becomes negligible.

\paragraph{Implications and outlook.}
This boundary is not a defect of PATHS but a direct consequence of its design rationale: PATHS provides a systematic mechanism for escaping local modes and is therefore most useful precisely when such modes exist. Practitioners can use this principle as a guide — PATHS is the appropriate choice when the reward couples the prompt to fine-grained structural properties (object placement, count, relations, or compositional constraints), whereas simpler independent-chain posterior initialization suffices for smooth, structurally-unconstrained rewards. We further note that the trend in inference-time alignment is moving toward increasingly complex and specific guidance signals: layout-conditioned generation, controllable scene composition, instruction-following with multi-object spatial relations, and fine-grained counting are all active research directions that demand exactly the kind of multi-modal, compositional rewards on which PATHS excels. As generative models are deployed in settings that require precise, structured user control rather than generic visual quality, the regime in which PATHS provides the largest benefit is likely to become the dominant rather than the exceptional case. 

At the same time, stronger controllability in text-to-image generation may amplify existing risks of misleading or harmful synthetic content, especially when precise spatial or compositional constraints are used for deceptive purposes. To mitigate release-related risks, we do not introduce or release a new generative model checkpoint or sensitive dataset; PATHS is an inference-time sampling procedure evaluated on synthetic prompts and existing models, and any released code or prompts should be used consistently with the terms and safety policies of the underlying generative model.

\begin{figure}
  \begin{center}
    % 페이지 하단 잘림을 방지하기 위해 간격을 타이트하게 유지합니다.
    \setlength{\tabcolsep}{1.5pt} 
    \renewcommand{\arraystretch}{0.4} 
    
    \begin{tabular}{c c ccccc}

      % 모델별 헤더
      & & \footnotesize TDS & \footnotesize DAS & \footnotesize $\psi$-Sampler & \footnotesize Best-of-4 & \footnotesize \textbf{PATHS (Ours)} \\
      \midrule
      
      % --- Complex Group (2 Rows) ---
      % 2개의 이미지 행 + 2개의 캡션 행 = 총 4개 행을 병합
      \multirow{4}{*}{\raisebox{-0.5cm}[0pt][0pt]{\rotatebox{90}{\textbf{Aesthetic}}}} & 
      \multirow{4}{*}{\raisebox{-0.8cm}[0pt][0pt]{\rotatebox{90}{\small \textbf{Simple prompt}}}} &
      \includegraphics[width=0.17\textwidth]{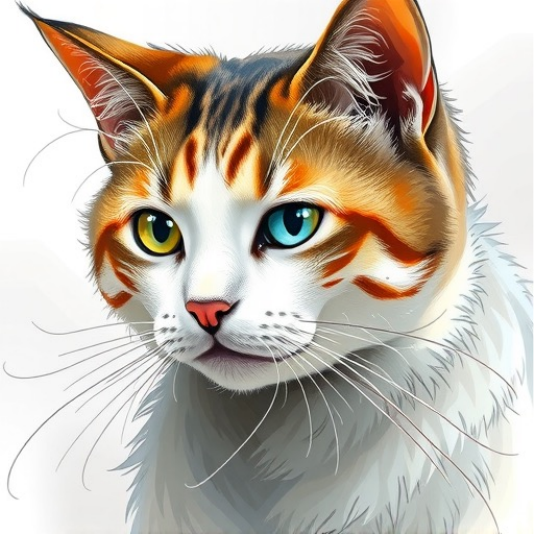} &
      \includegraphics[width=0.17\textwidth]{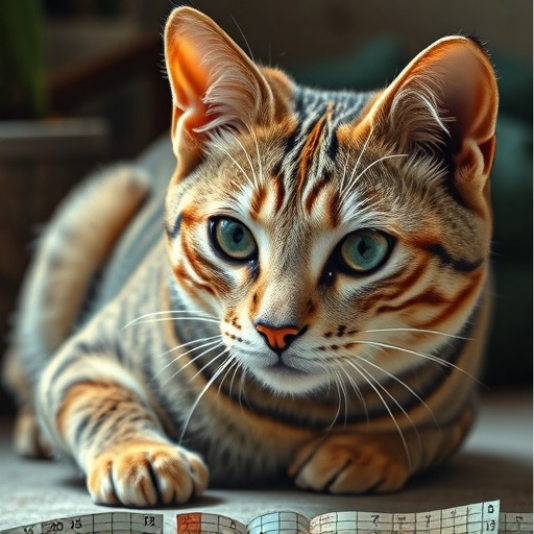} &
      \includegraphics[width=0.17\textwidth]{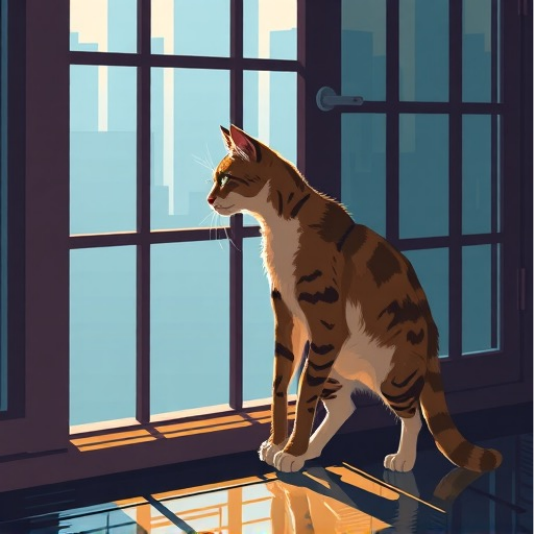} &
      \includegraphics[width=0.17\textwidth]{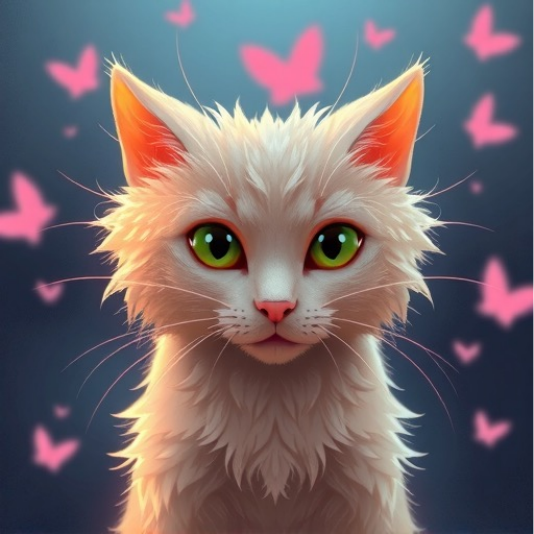} &
      \includegraphics[width=0.17\textwidth]{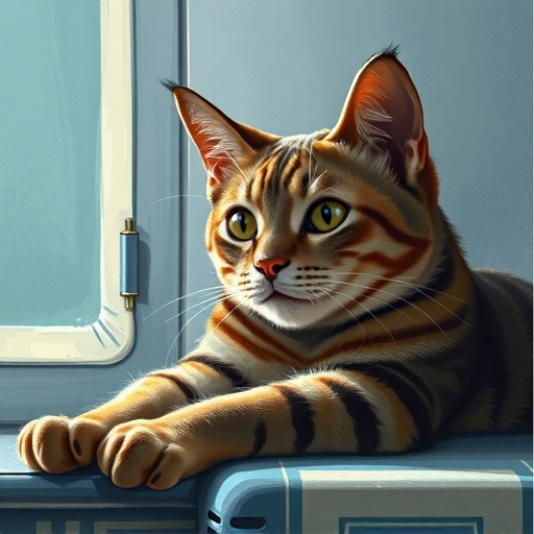} \\

        % 2. ▼ 각 이미지 밑에 들어갈 개별 글자 줄 (새로 추가) ▼
      & & 
      \scriptsize 7.05 & 
      \scriptsize 7.01 & 
      \scriptsize 7.22 & 
      \scriptsize 7.06 & 
      \scriptsize 7.44 \\
      & & \multicolumn{5}{c}{\scriptsize \textit{``Cat''}} \\
      \addlinespace[3pt]
      
      & & \includegraphics[width=0.17\textwidth]{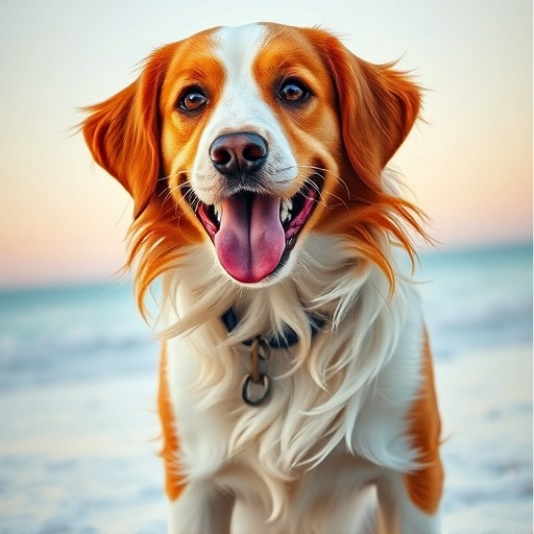} &
      \includegraphics[width=0.17\textwidth]{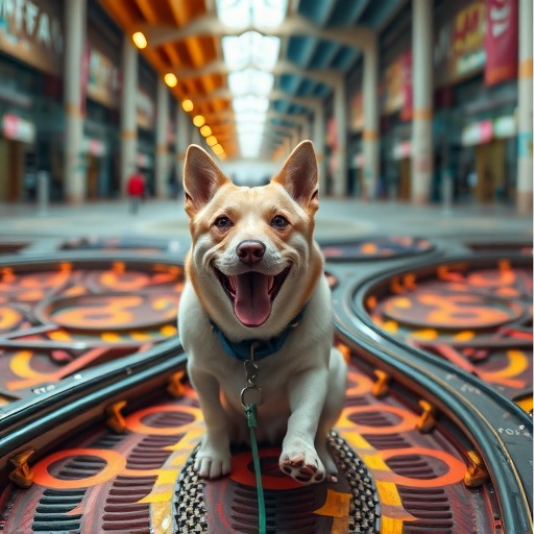} &
      \includegraphics[width=0.17\textwidth]{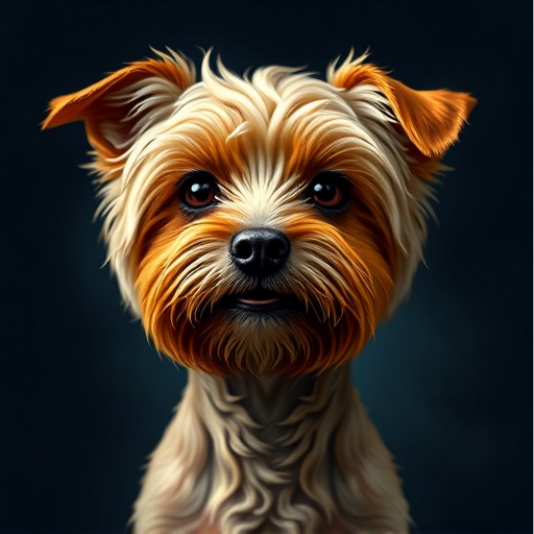} &
      \includegraphics[width=0.17\textwidth]{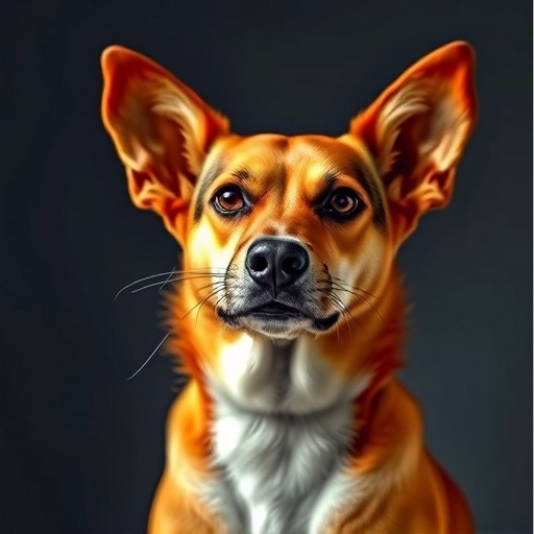} &
      \includegraphics[width=0.17\textwidth]{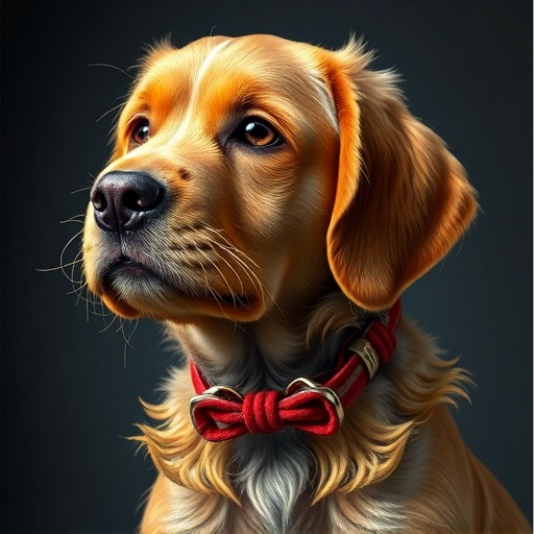} \\

      & & 
      \scriptsize 6.86 & 
      \scriptsize 6.91 & 
      \scriptsize 7.43 & 
      \scriptsize 7.02 & 
      \scriptsize 7.80 \\
      & & \multicolumn{5}{c}{\scriptsize \textit{``Dog''}} \\
      
      \midrule % 구분선
      
      % --- Simple Group (2 Rows) ---
      \multirow{4}{*}{\raisebox{-0.5cm}[0pt][0pt]{\rotatebox{90}{\textbf{Aesthetic}}}} & 
      \multirow{4}{*}{\raisebox{-1.0cm}[0pt][0pt]{\rotatebox{90}{\small \textbf{Complex prompt}}}} &
      \includegraphics[width=0.17\textwidth]{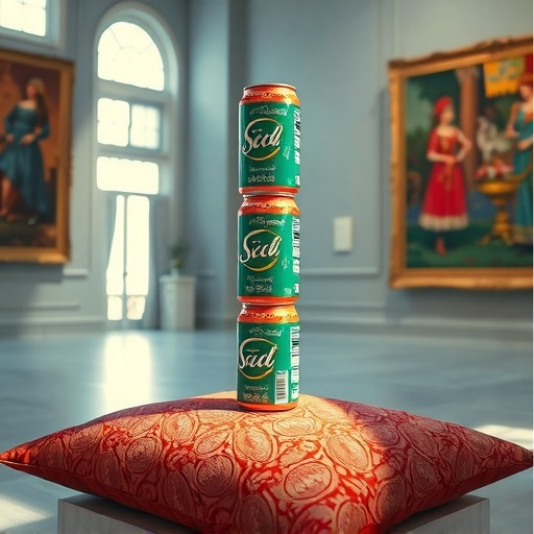} &
      \includegraphics[width=0.17\textwidth]{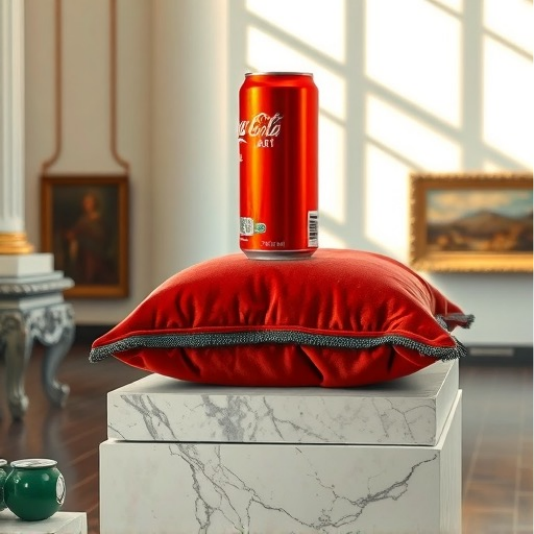} &
      \includegraphics[width=0.17\textwidth]{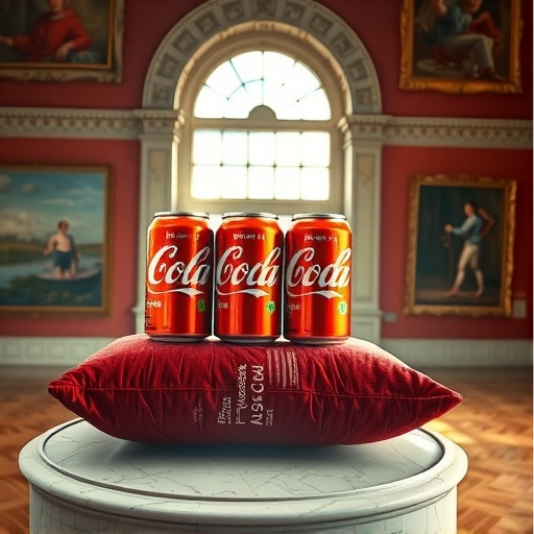} &
      \includegraphics[width=0.17\textwidth]{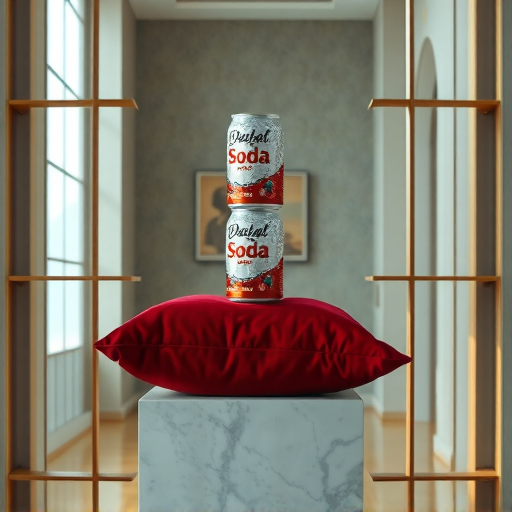} &
      \includegraphics[width=0.17\textwidth]{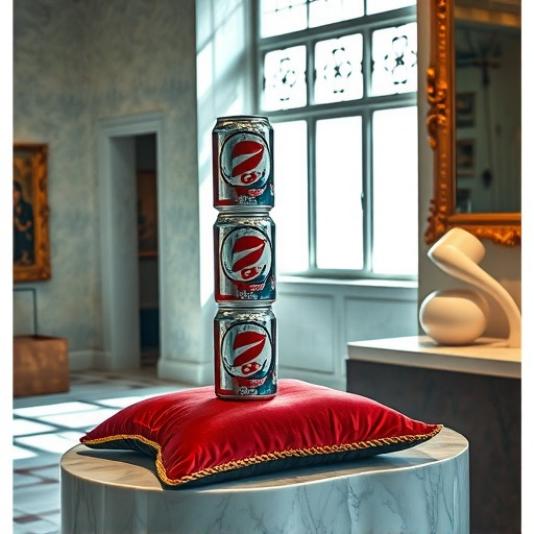} \\

      & & 
      \scriptsize 6.65 & 
      \scriptsize 6.25 & 
      \scriptsize 6.60 & 
      \scriptsize 6.08 & 
      \scriptsize 6.34 \\
      & & \multicolumn{5}{p{0.85\textwidth}}{\centering \scriptsize \textit{``Exactly three crushed, rusty soda cans stacked vertically on top of a luxurious red velvet pillow. The pillow rests on a marble pedestal in a pristine, sunlit art gallery.''}} \\
      \addlinespace[3pt]

      & & \includegraphics[width=0.17\textwidth]{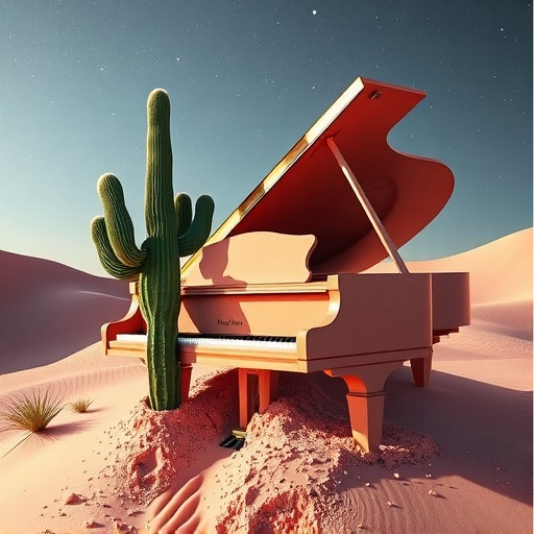} &
      \includegraphics[width=0.17\textwidth]{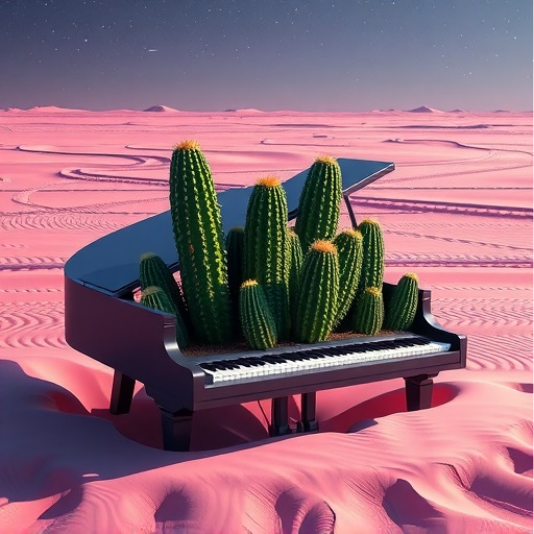} &
      \includegraphics[width=0.17\textwidth]{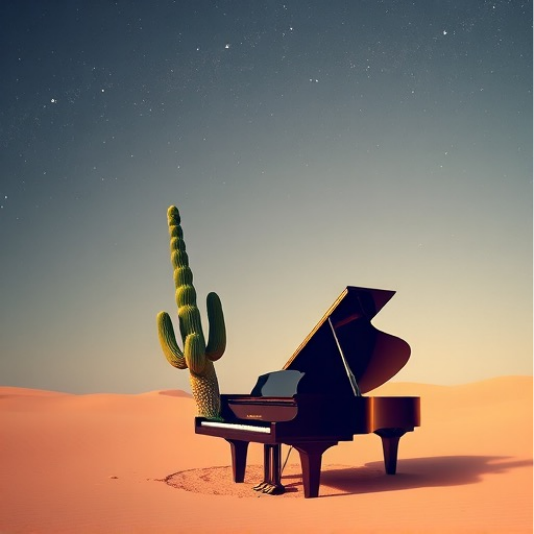} &
      \includegraphics[width=0.17\textwidth]{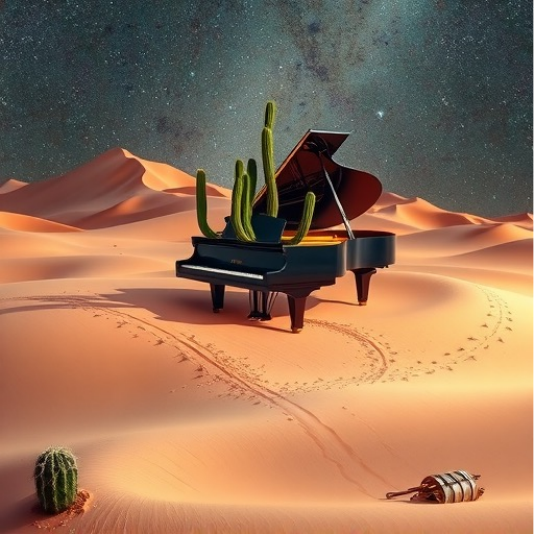} &
      \includegraphics[width=0.17\textwidth]{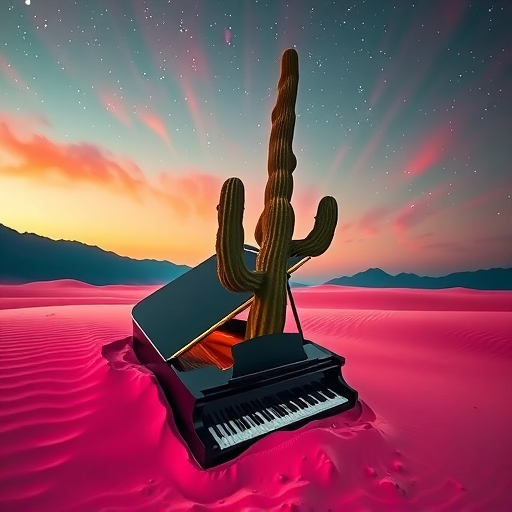} \\

      & & 
      \scriptsize 7.16 & 
      \scriptsize 7.01 & 
      \scriptsize 7.21 & 
      \scriptsize 7.15 & 
      \scriptsize 7.05 \\
      & & \multicolumn{5}{p{0.85\textwidth}}{\centering \scriptsize \textit{``A grand piano sinking into a desert of pink sand. A giant green cactus is growing directly out of the piano's keyboard, under a starry night sky.''}} \\
      \bottomrule

    \end{tabular}
    
    \caption{\textbf{Qualitative comparison on the aesthetic-preference generation task.}
We compare TDS \citep{smc_wu2023practical}, DAS \citep{smc_kim2025testtime}, $\Psi$-Sampler \citep{yoon2025psi}, Best-of-4, and PATHS on prompts of varying descriptive complexity. \textbf{Top two rows (Simple prompts: ``Cat'', ``Dog'')}: short prompts impose minimal structural constraints, leaving the reward landscape dominated by smooth visual-quality preferences. The qualitative differences across methods are subtle, and all methods produce visually plausible outputs. \textbf{Bottom two rows (Complex prompts)}: although these prompts are linguistically detailed (e.g., ``Exactly three crushed, rusty soda cans stacked vertically on top of a luxurious red velvet pillow...''), the LAION Aesthetic Predictor \citep{schuhmann2022laionaesthetics} used as the reward does not evaluate prompt alignment and only scores generic visual quality. Consequently, the reward landscape remains largely uni-modal even for these prompts. This figure illustrates the regime in which PATHS' replica-exchange mechanism offers limited benefit: when the reward landscape is smooth and structurally unconstrained.}
    \label{fig:aesthetic}
  \end{center}
\end{figure}

\newpage
\section{Additional Qualitative Results}
\label{qualitative_results}

\subsection{Replica-exchange swap trajectories}
\label{app:swap_trajectories}

Beyond the qualitative comparisons of final outputs, we visualize 
the inner mechanism of PATHS during initial particle sampling. 
Figures~\ref{fig:swap_trajectories_1}--\ref{fig:swap_trajectories_2} show 
\emph{actual} replica-exchange swap events recorded from PATHS 
runs on individual layout-to-image instances, with each row 
displaying the one-step Tweedie estimates $\hat{x}_{0|1}$ of the 
hot chain (top, $T_L>1$) and the cold chain (bottom, $T_1=1$) 
across consecutive MCMC iterations. The thick-bordered images 
mark the moment of an accepted Metropolis swap.

A consistent pattern emerges across all four examples:

\begin{itemize}
\item \textbf{Cold chain alone gets trapped.} Before the swap, the 
cold chain refines a layout that satisfies only a subset of the 
target boxes (e.g., missing the \emph{car} in 
Figure~\ref{fig:swap_trajectories_2}(Top), or the \emph{horse} in 
Figure~\ref{fig:swap_trajectories_2}(Bottom)). Local pCNL updates struggle to 
recover the missing object because adding it requires a large 
configurational change that the sharp posterior $\pi^*_1$ heavily 
penalizes during proposal.

\item \textbf{Hot chain explores broadly.} Operating under a 
flattened reward landscape, the hot chain visits diverse object 
configurations and eventually produces a state in which all target 
boxes are correctly populated.

\item \textbf{Swap transfers the discovery.} When the hot chain's 
state has higher reward than the cold chain's current state, the 
Metropolis swap is accepted and the well-composed latent is moved 
directly into the cold chain. Because both chains' rewards have 
already been computed for their own pCNL updates, the swap 
\emph{reuses cached reward evaluations} and incurs no additional 
reward-model overhead.

\item \textbf{Cold chain refines from the new starting point.} 
After the swap, the cold chain continues local pCNL refinement 
from the inherited state and produces small, targeted corrections 
(e.g., sharper object boundaries, slightly improved spatial 
alignment) rather than re-exploring the layout from scratch.
\end{itemize}

These trajectories make explicit the mechanism that drives the 
quantitative gains reported in the Table~\ref{tab:table_total}: PATHS does not 
merely run more chains; it actively transfers exploration done at 
high temperature into refinement at low temperature.

\subsection{Prompt-level Qualitative Comparisons}
\label{app:prompt_comparisons}

We provide additional per-prompt qualitative comparisons against 
all baselines considered in the main paper, covering both the 
layout-to-image generation and quantity-aware sampling tasks. 
Figure~\ref{fig:qualitative_appendix} presents layout-to-image 
examples, where target bounding boxes are overlaid on each generated 
image so that successful placement of the requested objects can be 
verified visually. Figure~\ref{fig:qualitative_appendix_quantity} presents 
quantity-aware examples, where the predicted object count and its 
absolute deviation from the target count are reported below each image.

These per-prompt examples illustrate the same pattern as the 
aggregate quantitative results: PATHS more reliably satisfies 
the full set of spatial and counting constraints, whereas 
prior-initialized SMC baselines (TDS, DAS) tend to ignore parts 
of the prompt entirely, and independent posterior-initialization 
baselines ($\Psi$-Sampler, Best-of-4) often partially recover 
 the required structure but still miss specific objects or counts. 
Together, Figures~\ref{fig:qualitative_appendix} 
and~\ref{fig:qualitative_appendix_quantity} show that PATHS' gains are 
most visible on complex prompts requiring coordinated object placement 
or accurate high-count generation. Appendix~\ref{app:swap_trajectories} 
additionally visualizes the internal swap dynamics that produce these 
final outputs.

% ============================================================
% Figure A.1 — First two trajectories
% ============================================================
\begin{figure}[b]
    \centering
    
    % --- Trajectory 1 ---
    \footnotesize \textit{Prompt: "A \textcolor{green}{car} and \textcolor{blue}{cat} beneath an \textcolor{orange}{airplane} and below a \textcolor{red}{banana}..."} \\
    \includegraphics[width=0.95\linewidth]{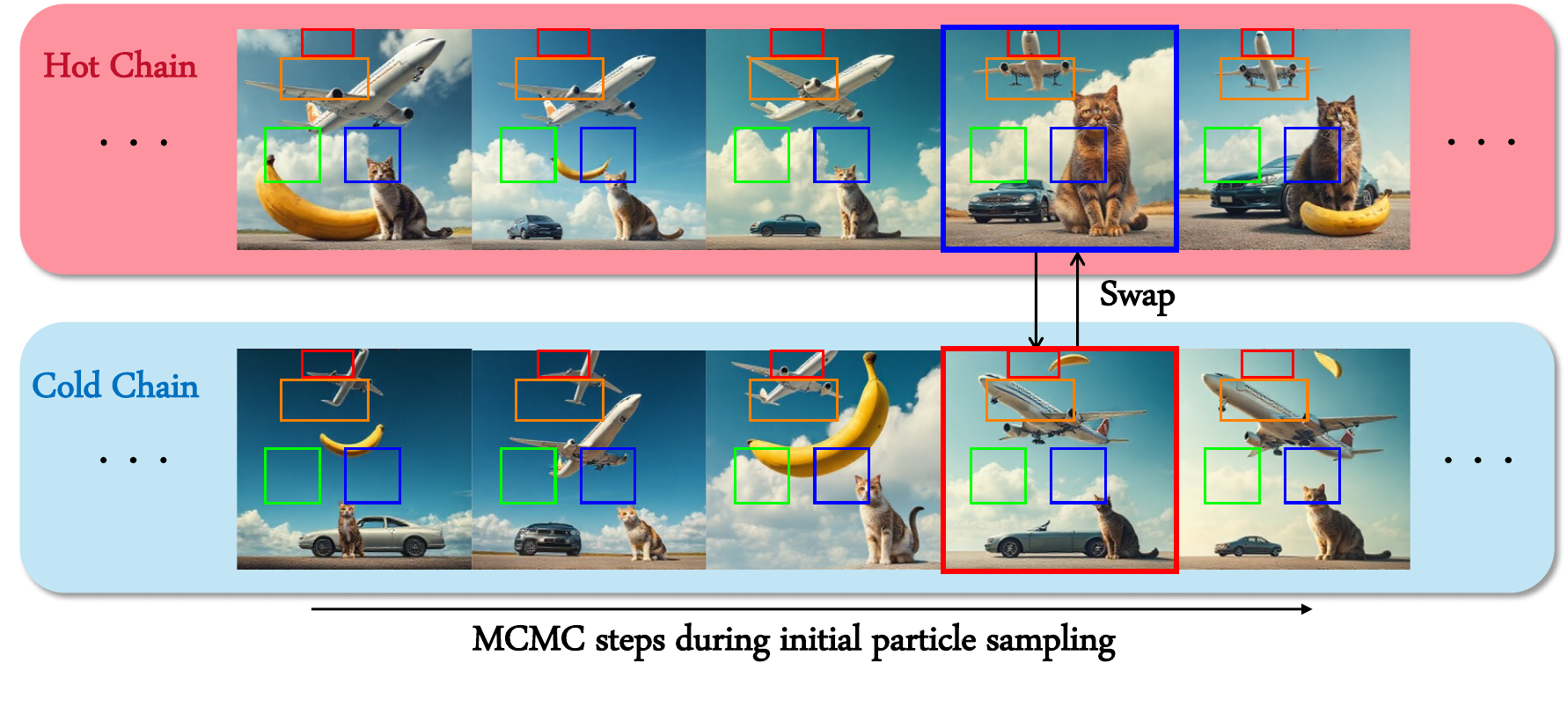}
    \vspace{0.5em}
    
    % --- Trajectory 2 ---
    \footnotesize \textit{Prompt: "A \textcolor{orange}{person} and a \textcolor{green}{dog} in the middle of \textcolor{red}{banana} and \textcolor{blue}{cat}..."} \\
    \includegraphics[width=0.95\linewidth]{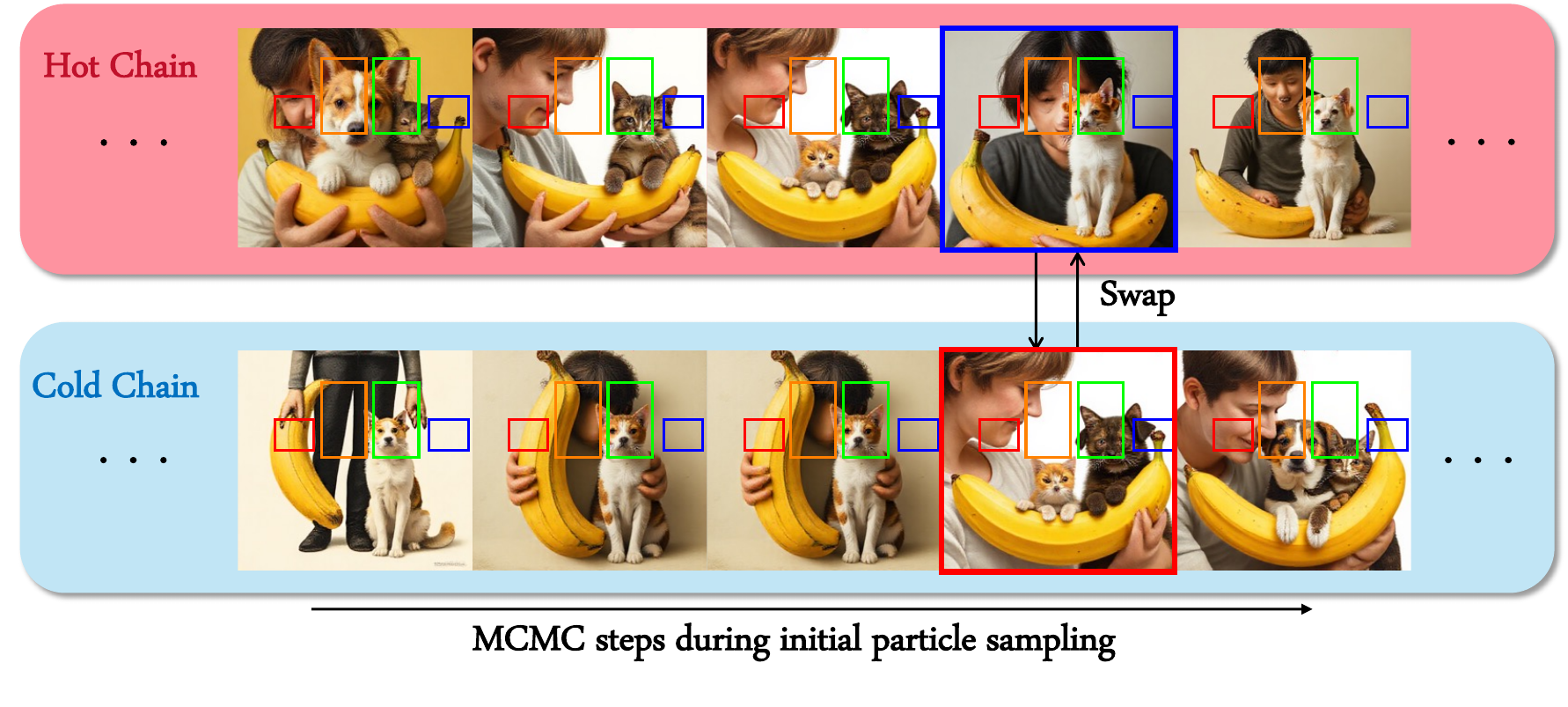}
    
    \caption{\textbf{Replica-exchange swap trajectories from PATHS 
    runs on layout-to-image instances (1/2).} Each row visualizes 
    the one-step Tweedie estimates $\hat{x}_0(x_1)$ of the hot 
    chain (top, $T_L{>}1$) and the cold chain (bottom, $T_1{=}1$) 
    across consecutive MCMC steps during initial particle 
    sampling; thick-bordered images mark the moment of an accepted 
    Metropolis swap. In each example the cold chain alone 
    converges to a layout that satisfies only a subset of the 
    target boxes, while the hot chain explores under a flattened 
    reward and discovers a state matching additional constraints. 
    The Metropolis swap then transfers this state to the cold 
    chain, which inherits it as a new starting point and continues 
    local refinement---all without additional reward-model 
    evaluations, since the swap reuses rewards already computed 
    for the within-chain pCNL updates.}
    \label{fig:swap_trajectories_1}
\end{figure}

% ============================================================
% Figure A.2 — Last two trajectories
% ============================================================
\begin{figure}[b]
    \centering
    
    % --- Trajectory 3 ---
    \footnotesize \textit{Prompt: "A \textcolor{red}{person} and \textcolor{orange}{airplane} over a \textcolor{green}{car} and under the \textcolor{blue}{chair}..."} \\
    \includegraphics[width=0.95\linewidth]{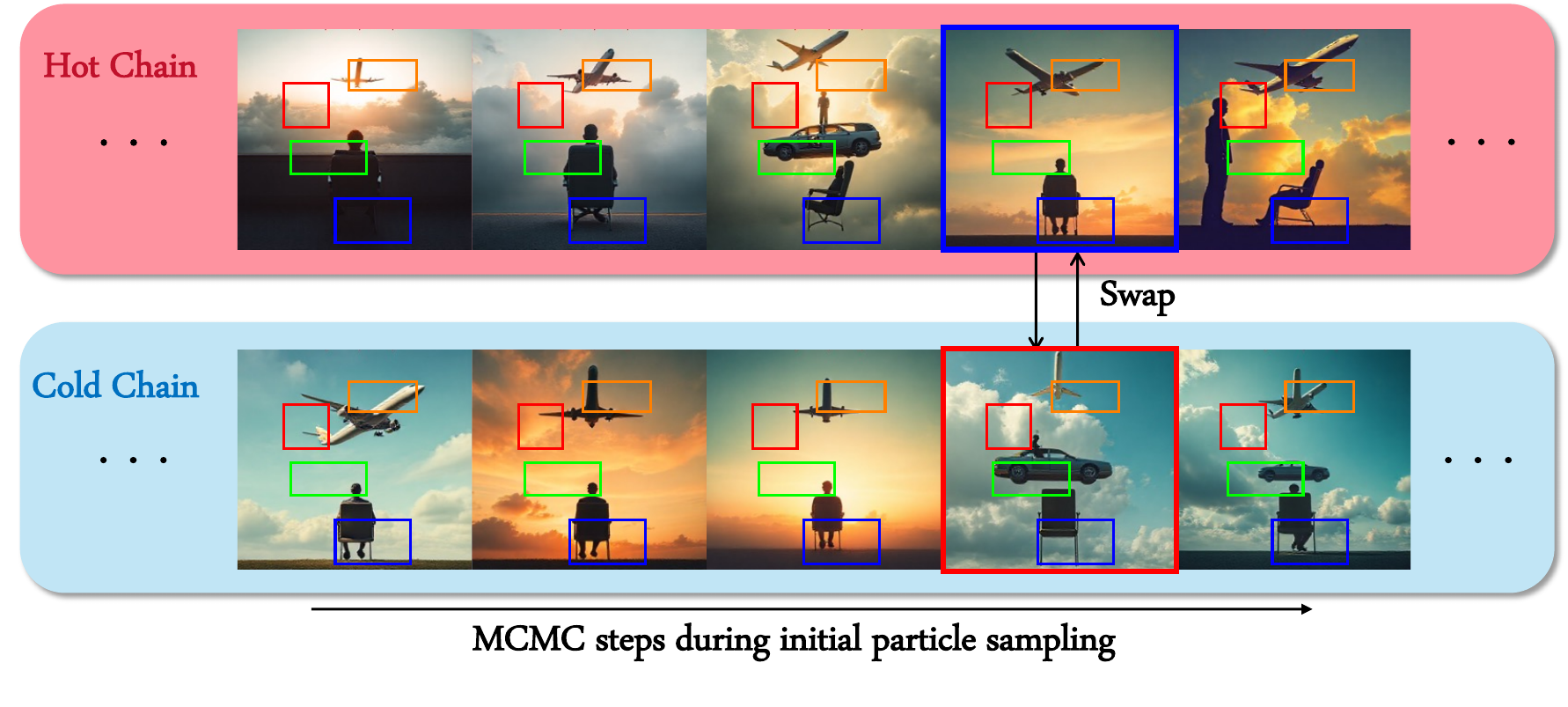}
    \vspace{0.5em}
    
    % --- Trajectory 4 ---
    \footnotesize \textit{Prompt: "A \textcolor{green}{person} and \textcolor{blue}{chair} beneath an \textcolor{orange}{airplane} and on a \textcolor{red}{horse}..."} \\
    \includegraphics[width=0.95\linewidth]{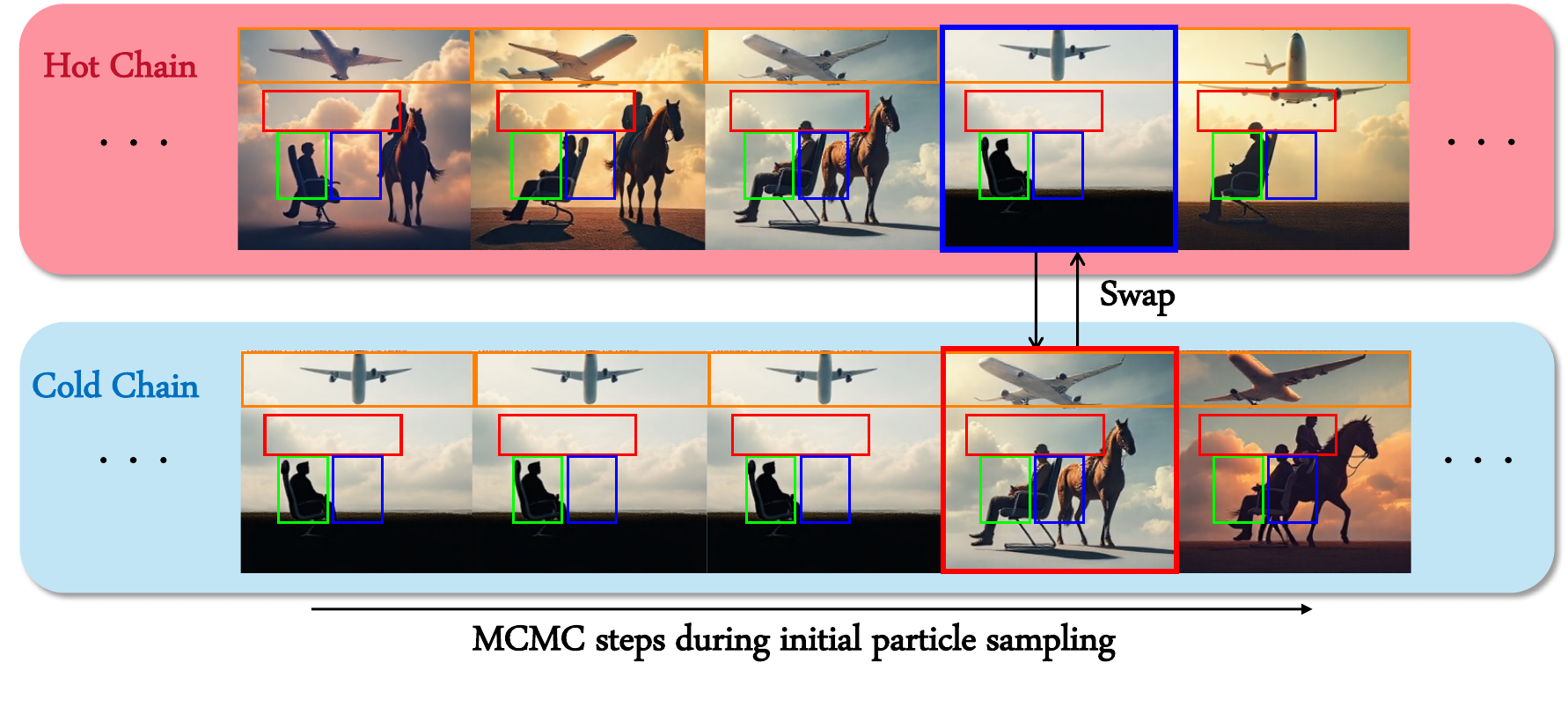}
    
    \caption{\textbf{Replica-exchange swap trajectories from PATHS 
    runs on layout-to-image instances (2/2).} Continuing 
    Figure~\ref{fig:swap_trajectories_1}: the same mechanism 
    operates across more complex spatial-relation prompts. The 
    cold chain alone gets trapped in layouts missing the car 
    (top row) or the horse (bottom row); the hot chain discovers 
    states satisfying these constraints; and the Metropolis swap 
    transfers them to the cold chain. As before, swaps reuse 
    cached reward evaluations and add no further reward-model 
    overhead.}
    \label{fig:swap_trajectories_2}
\end{figure}

\clearpage % 앞의 그림을 모두 출력한 뒤 다음으로 넘어감

\begin{figure}
  \begin{center}
    % 페이지 하단 잘림을 방지하기 위해 간격을 타이트하게 유지합니다.
    \setlength{\tabcolsep}{1.5pt} 
    \renewcommand{\arraystretch}{0.4} 

    \begin{tabular}{c c ccccc}
      %\toprule
      % 최상단 헤더 (Prior / Posterior 그룹화)
      %& & \multicolumn{2}{c}{\small \textbf{Sampling from Prior}} 
      %& \multicolumn{3}{c}{\small \textbf{Sampling from Posterior}} \\
      %\cmidrule(lr){3-4} \cmidrule(lr){5-7}
      
      % 모델별 헤더
      & & \footnotesize TDS & \footnotesize DAS & \footnotesize $\psi$-Sampler & \footnotesize Best-of-4 & \footnotesize \textbf{PATHS (Ours)} \\
      \midrule
      
      % --- Complex Group (3 Rows) ---
      % 3개의 이미지 행 + 3개의 캡션 행 = 총 6개 행을 병합
      \multirow{6}{*}{\raisebox{-3.0cm}{\rotatebox{90}{\textbf{Layout-to-Image}}}} & 
      \multirow{6}{*}{\raisebox{-3.0cm}{\rotatebox{90}{\small \textbf{Complex (= 4 obj.)}}}} & 
        \includegraphics[width=0.17\textwidth]{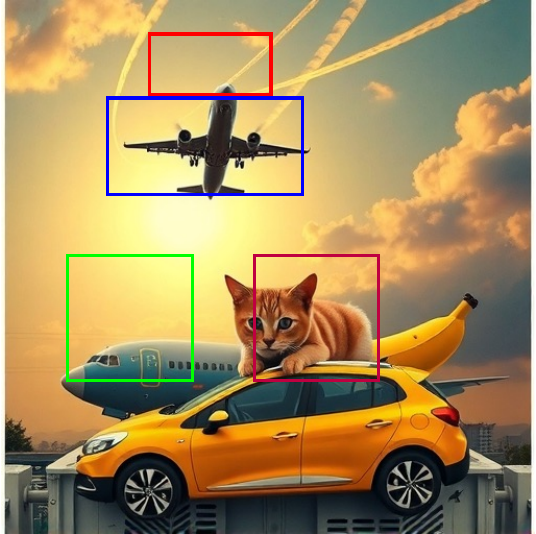} &
        \includegraphics[width=0.17\textwidth]{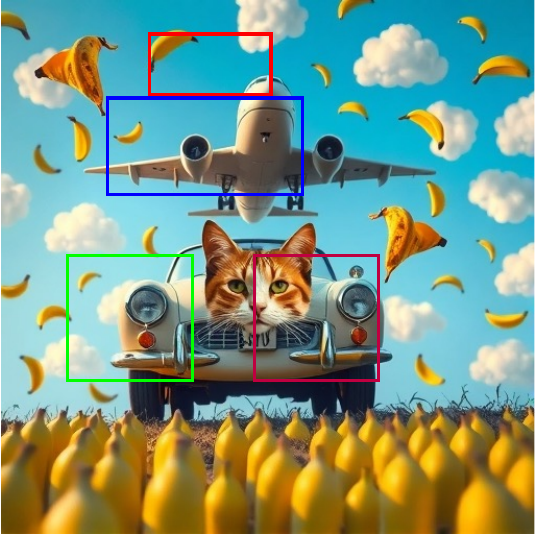} &
        \includegraphics[width=0.17\textwidth]{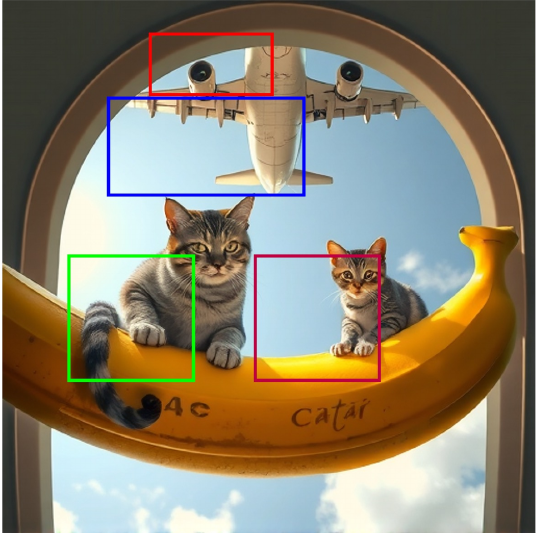} &
        \includegraphics[width=0.17\textwidth]{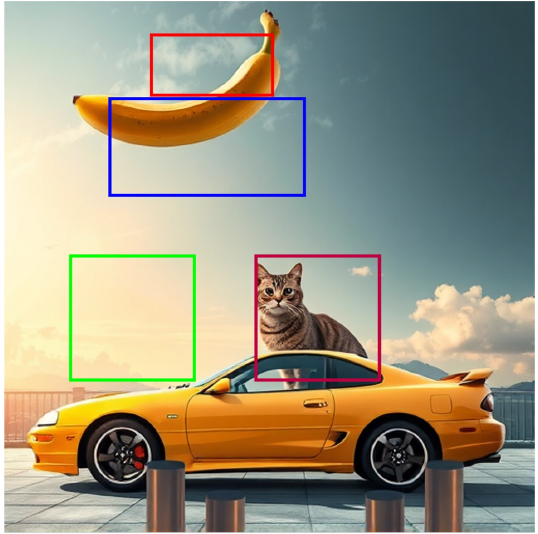} &
        \includegraphics[width=0.17\textwidth]{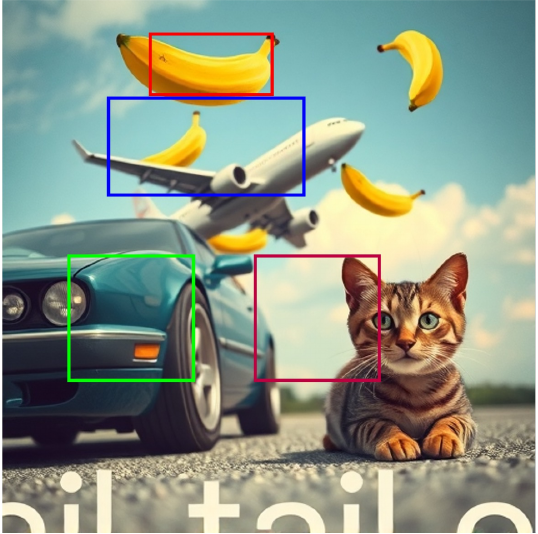} \\    
      & & \multicolumn{5}{c}{\textit{``A \textcolor{green}{car} and \textcolor{purple}{cat} beneath an \textcolor{blue}{airplane} and below a \textcolor{red}{banana}...''}} \\
      \addlinespace[3pt]
      
      & & \includegraphics[width=0.17\textwidth]{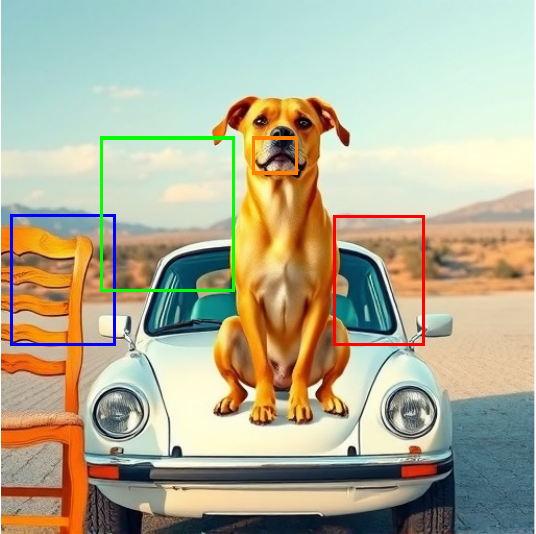} &
      \includegraphics[width=0.17\textwidth]{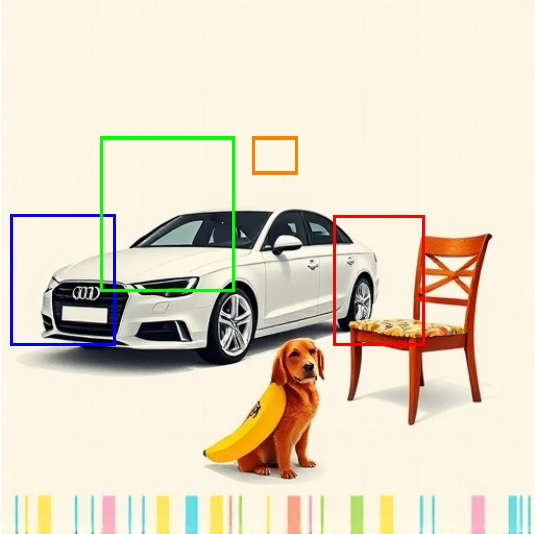} &
      \includegraphics[width=0.17\textwidth]{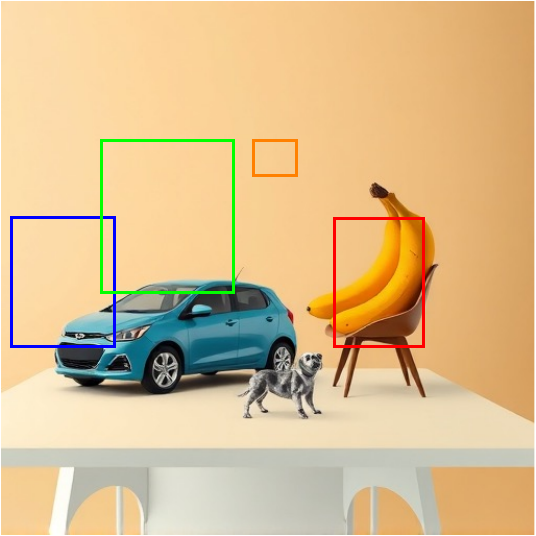} &
      \includegraphics[width=0.17\textwidth]{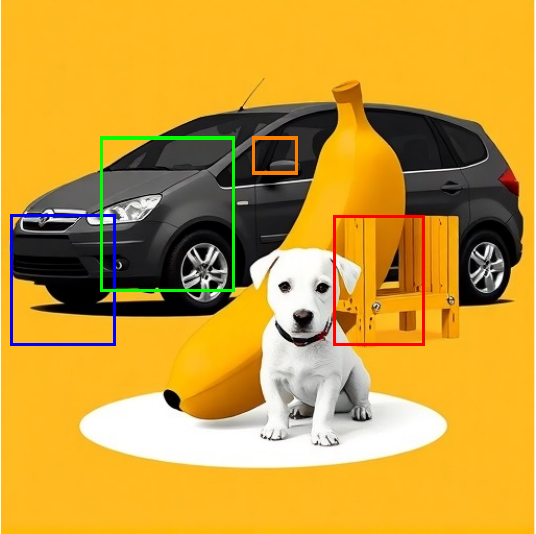} &
      \includegraphics[width=0.17\textwidth]{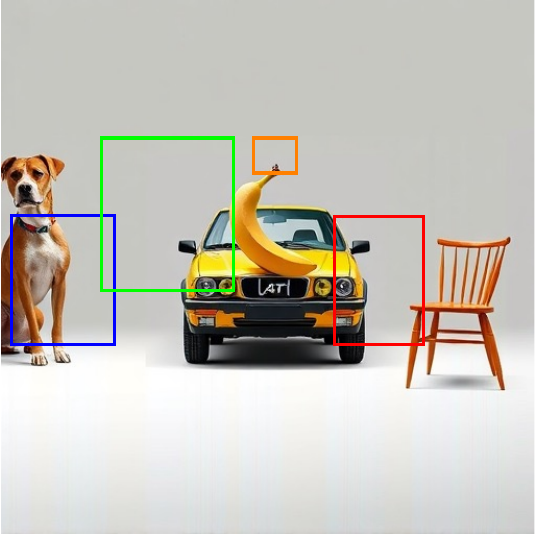} \\
      & & \multicolumn{5}{c}{\textit{``A \textcolor{green}{car} and a \textcolor{orange}{banana} in the middle of \textcolor{blue}{dog} and \textcolor{red}{chair}...''}} \\
      \addlinespace[3pt]

      & & \includegraphics[width=0.17\textwidth]{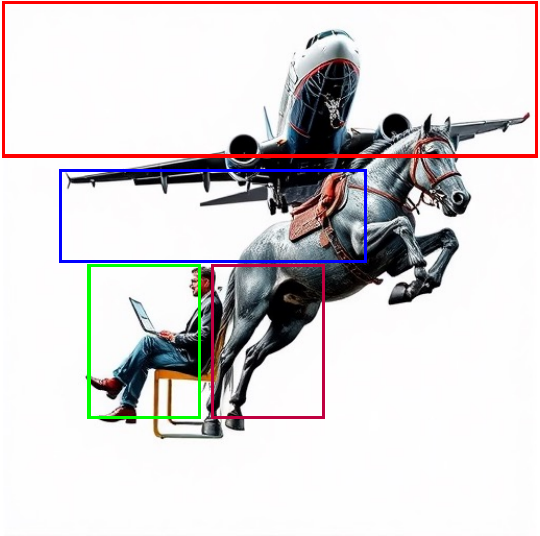} &
      \includegraphics[width=0.17\textwidth]{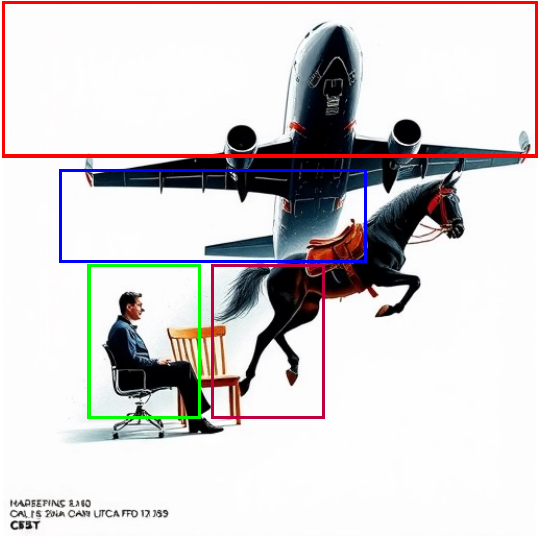} &
      \includegraphics[width=0.17\textwidth]{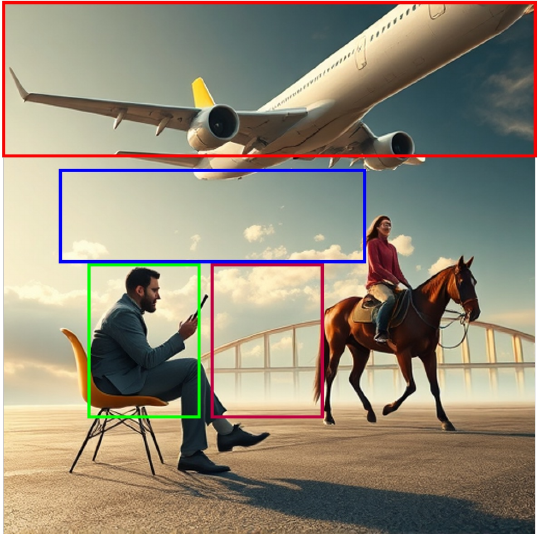} &
      \includegraphics[width=0.17\textwidth]{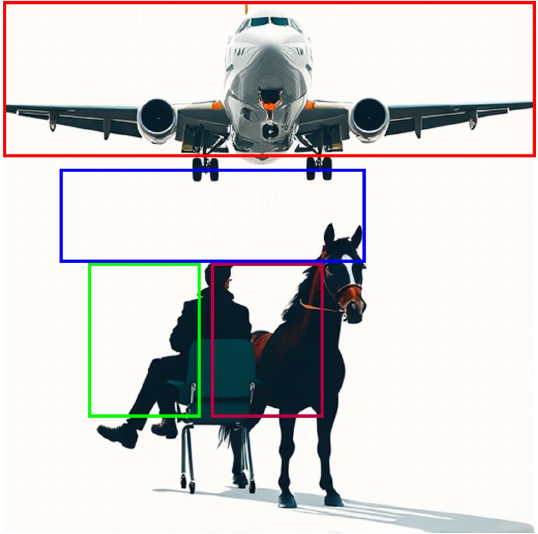} &
      \includegraphics[width=0.17\textwidth]{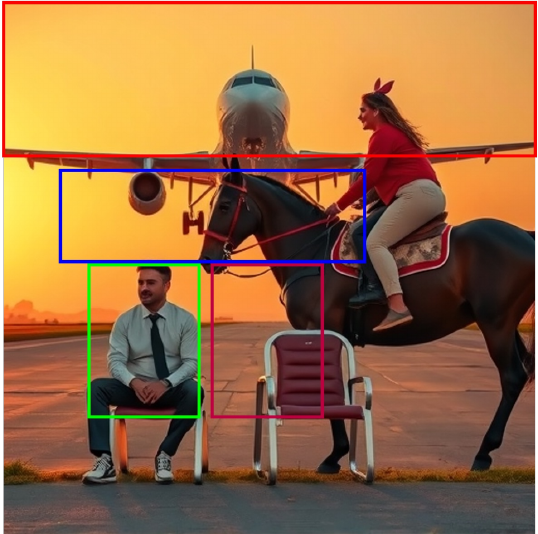} \\
      & & \multicolumn{5}{c}{\textit{``A \textcolor{green}{person} and \textcolor{purple}{chair} beneath an \textcolor{red}{airplane} and on a \textcolor{blue}{horse}...''}} \\
      
      \midrule % 구분선
      
      % --- Simple Group (3 Rows) ---
      \multirow{6}{*}{\raisebox{-3.0cm}{\rotatebox{90}{\textbf{Layout-to-Image}}}} & 
      \multirow{6}{*}{\raisebox{-3.0cm}{\rotatebox{90}{\small \textbf{Simple ($<$ 4 obj.)}}}} & 
      \includegraphics[width=0.17\textwidth]{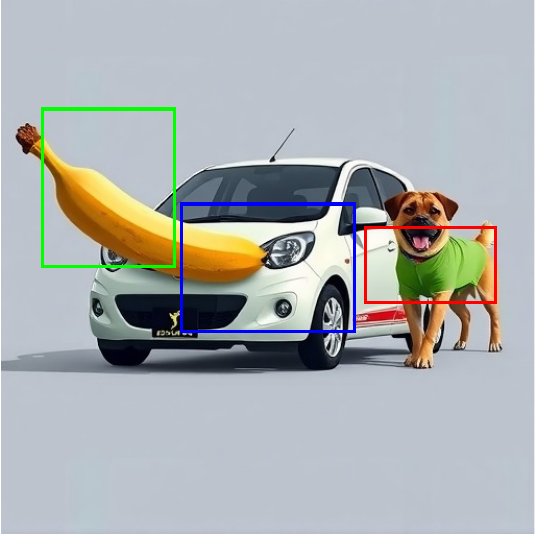} &
      \includegraphics[width=0.17\textwidth]{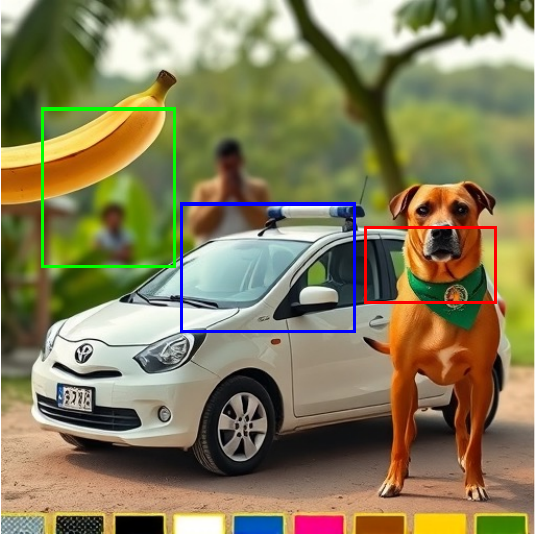} &
      \includegraphics[width=0.17\textwidth]{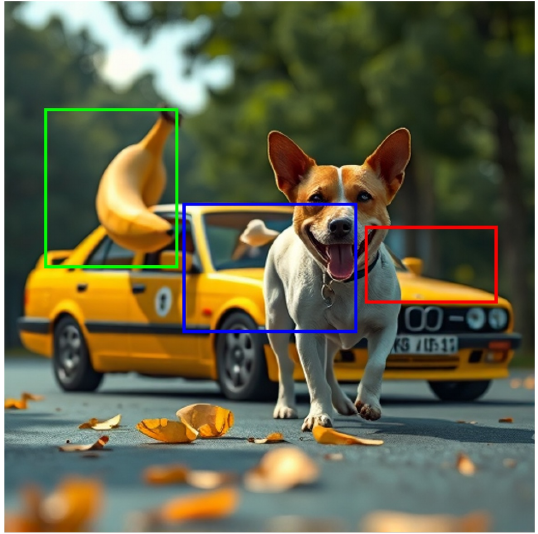} &
      \includegraphics[width=0.17\textwidth]{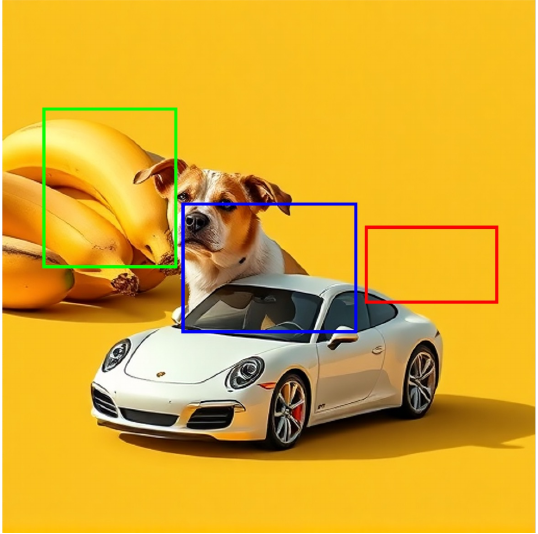} &
      \includegraphics[width=0.17\textwidth]{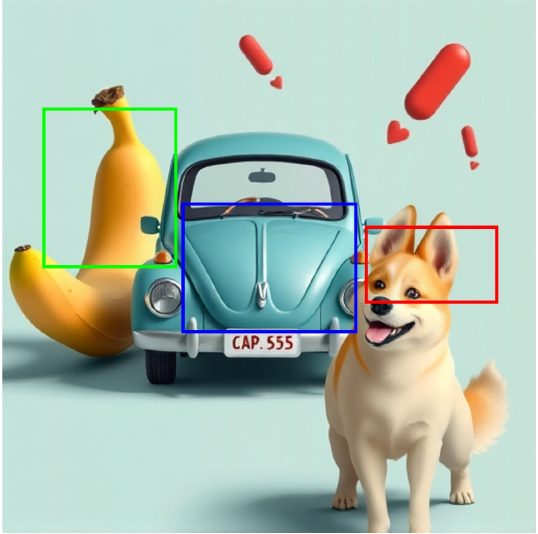} \\
      & & \multicolumn{5}{c}{\textit{``A \textcolor{blue}{car} among \textcolor{green}{banana} and \textcolor{red}{dog}....''}} \\
      \addlinespace[3pt]

      & & \includegraphics[width=0.17\textwidth]{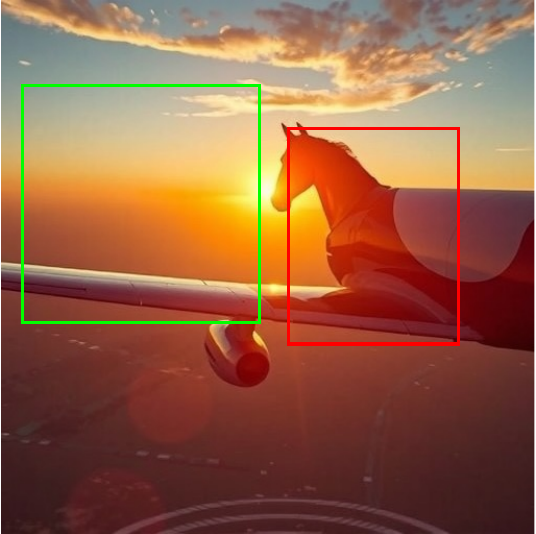} &
      \includegraphics[width=0.17\textwidth]{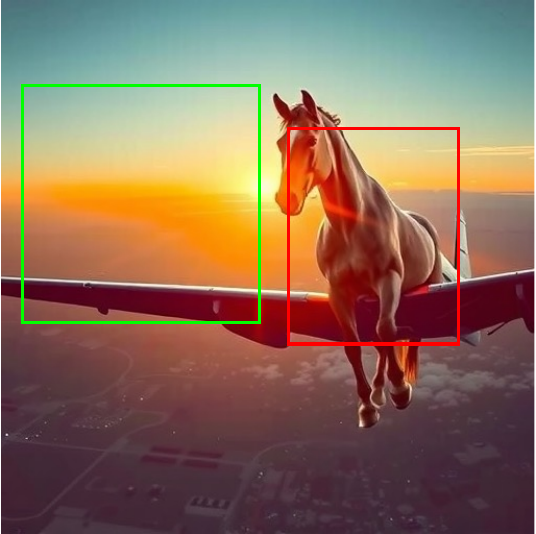} &
      \includegraphics[width=0.17\textwidth]{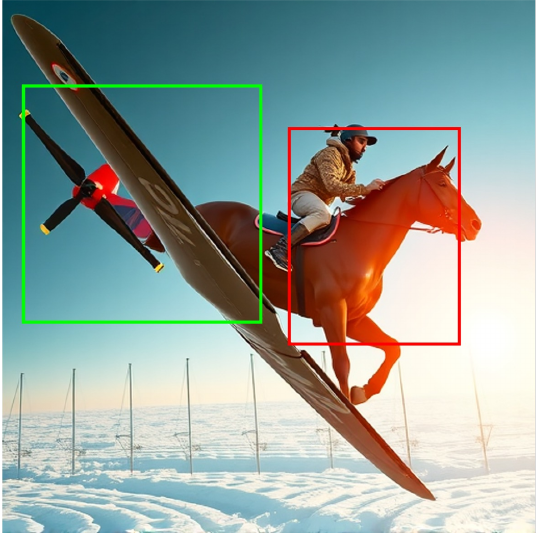} &
      \includegraphics[width=0.17\textwidth]{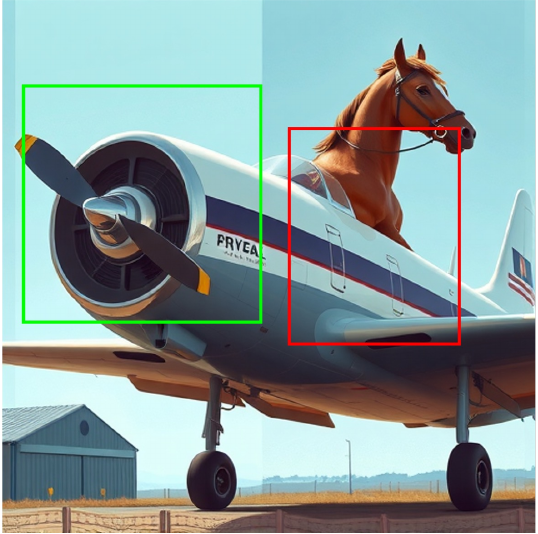} &
      \includegraphics[width=0.17\textwidth]{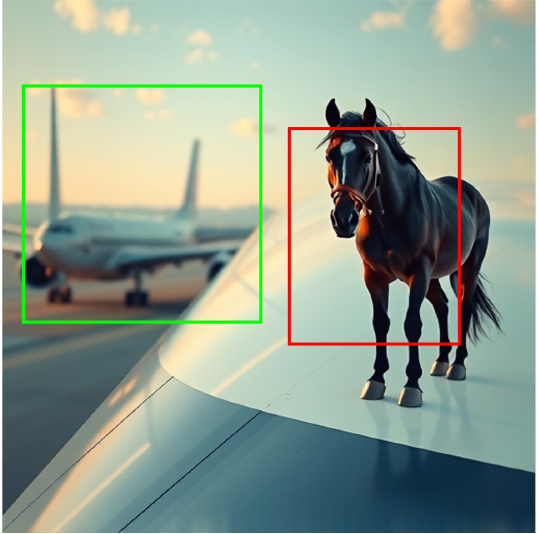} \\
      & & \multicolumn{5}{c}{\textit{``A \textcolor{red}{horse} on the right of an \textcolor{green}{airplane}....''}} \\
      \addlinespace[3pt]

      & & \includegraphics[width=0.17\textwidth]{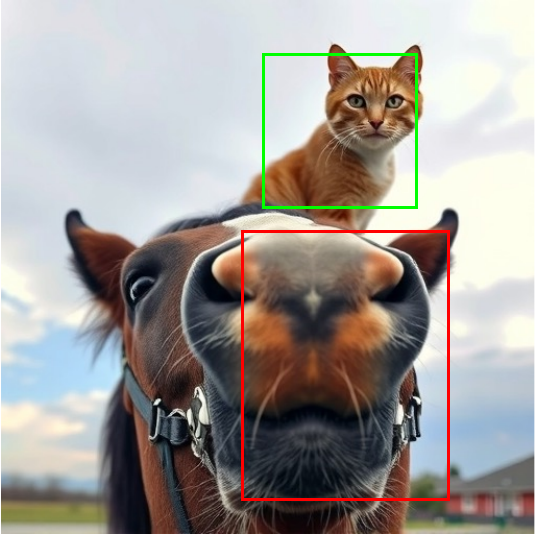} &
      \includegraphics[width=0.17\textwidth]{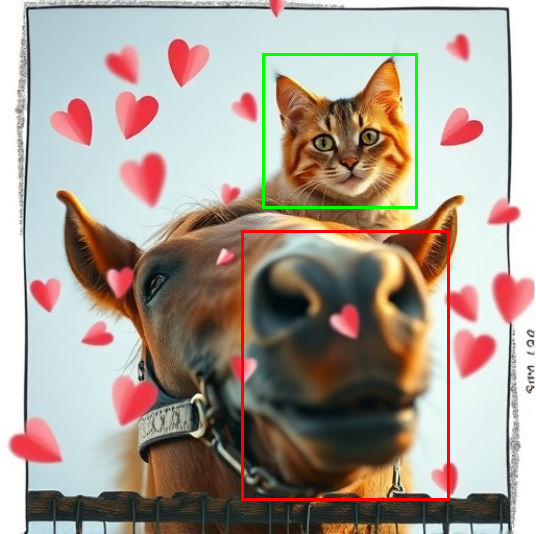} &
      \includegraphics[width=0.17\textwidth]{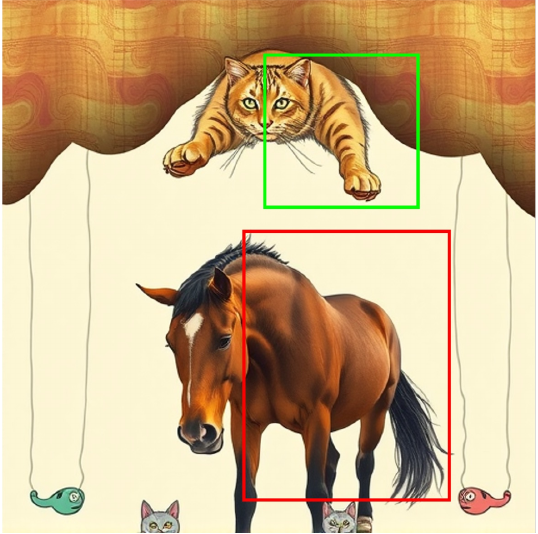} &
      \includegraphics[width=0.17\textwidth]{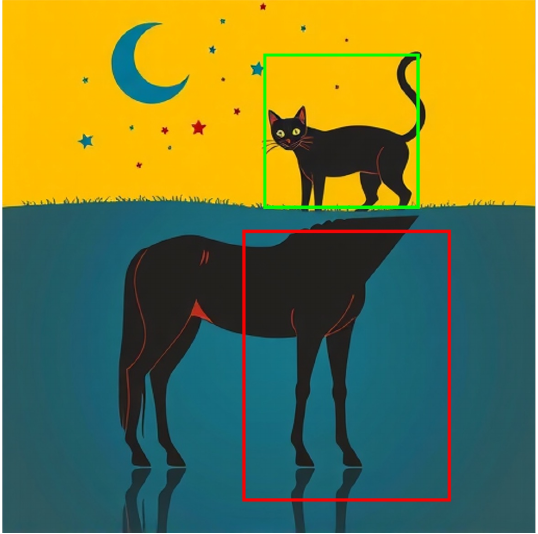} &
      \includegraphics[width=0.17\textwidth]{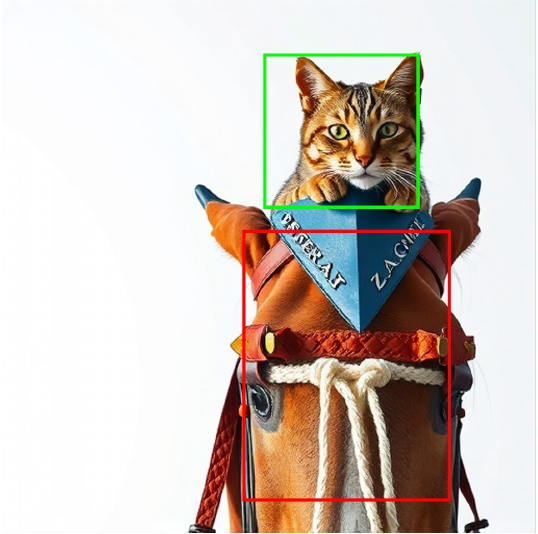} \\
      & & \multicolumn{5}{c}{\textit{``A \textcolor{red}{horse} beneath a \textcolor{green}{cat}...''}} \\
      \bottomrule

    \end{tabular}

    \caption{Qualitative comparison on complex and simple layouts. PATHS (Ours) demonstrates superior performance in aligning with spatial constraints across varying levels of difficulty.}
    \label{fig:qualitative_appendix}
  \end{center}
\end{figure}

\begin{figure}[t]
  \begin{center}
    % 페이지 하단 잘림을 방지하기 위해 간격을 타이트하게 유지합니다.
    \setlength{\tabcolsep}{1.5pt} 
    \renewcommand{\arraystretch}{0.4} 
    \begin{tabular}{c c ccccc}
      %\toprule
      % 최상단 헤더 (Prior / Posterior 그룹화)
      %& & \multicolumn{2}{c}{\small \textbf{Sampling from Prior}} 
      %& \multicolumn{3}{c}{\small \textbf{Sampling from Posterior}} \\
      %\cmidrule(lr){3-4} \cmidrule(lr){5-7}
      
      % 모델별 헤더
      & & \footnotesize TDS & \footnotesize DAS & \footnotesize $\psi$-Sampler & \footnotesize Best-of-4 & \footnotesize \textbf{PATHS (Ours)} \\
      \midrule
      
      % --- Complex Group (3 Rows) ---
       
      \multirow{6}{*}{\raisebox{-4.0cm}{\rotatebox{90}{\textbf{Quantity-Aware}}}} & 
      \multirow{6}{*}{\raisebox{-5.0cm}{\rotatebox{90}{\small \textbf{Complex ($>=$ 25 quantities.)}}}} & 
      \includegraphics[width=0.17\textwidth]{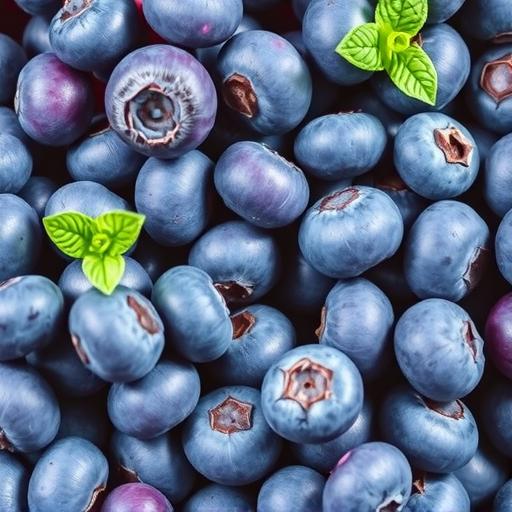} &
      \includegraphics[width=0.17\textwidth]{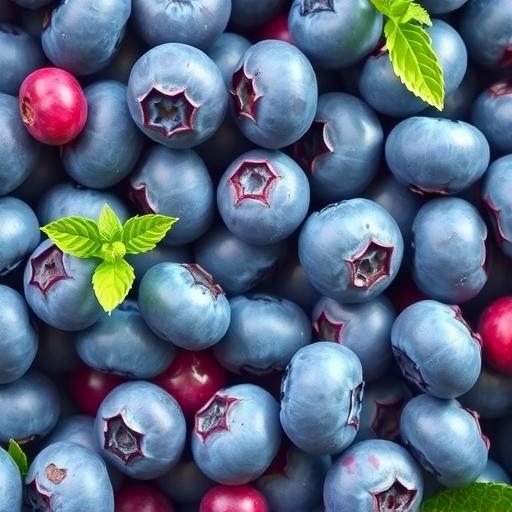} &
      \includegraphics[width=0.17\textwidth]{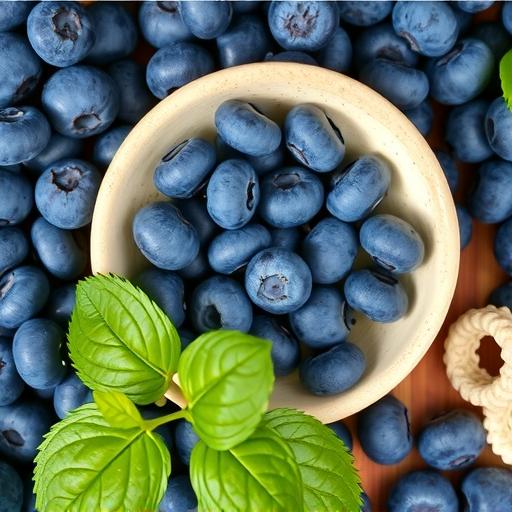} &
      \includegraphics[width=0.17\textwidth]{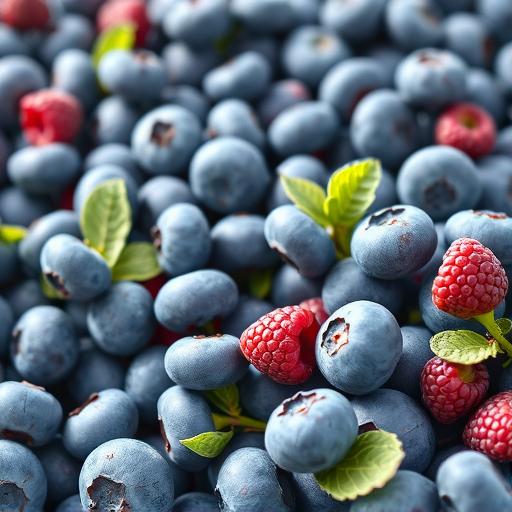} &
      \includegraphics[width=0.17\textwidth]{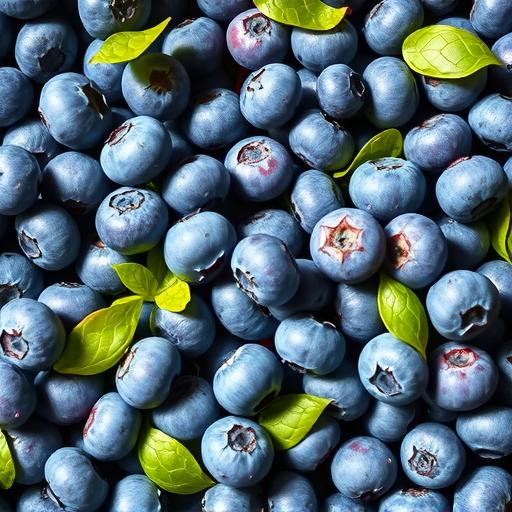} \\
      & & \multicolumn{5}{c}{\scriptsize \textit{``\textcolor{red}{82} blueberries''}} \\
      \addlinespace[3pt]

      & & \includegraphics[width=0.17\textwidth]{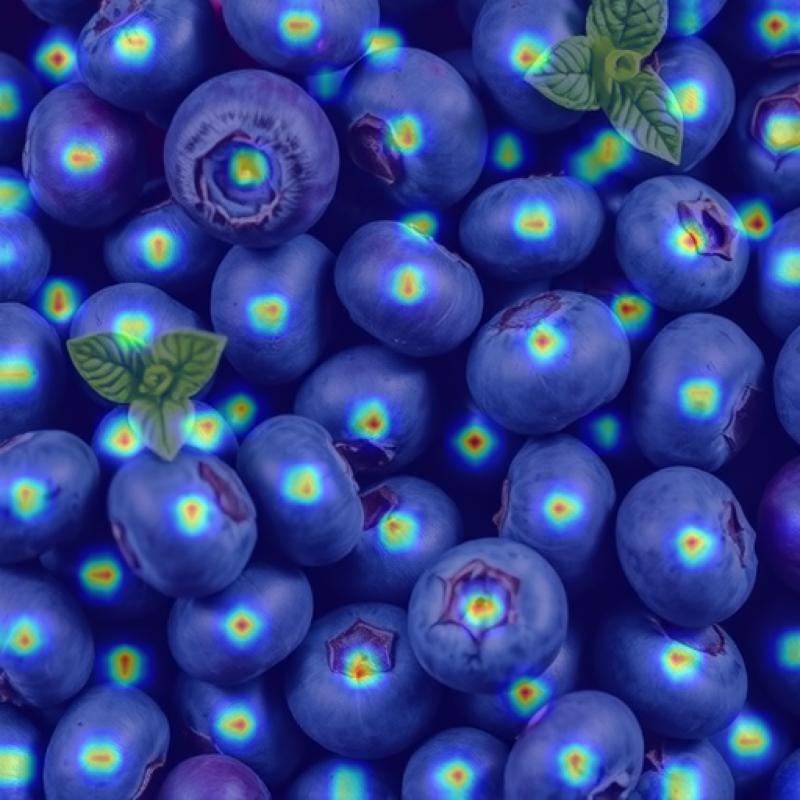} &
      \includegraphics[width=0.17\textwidth]{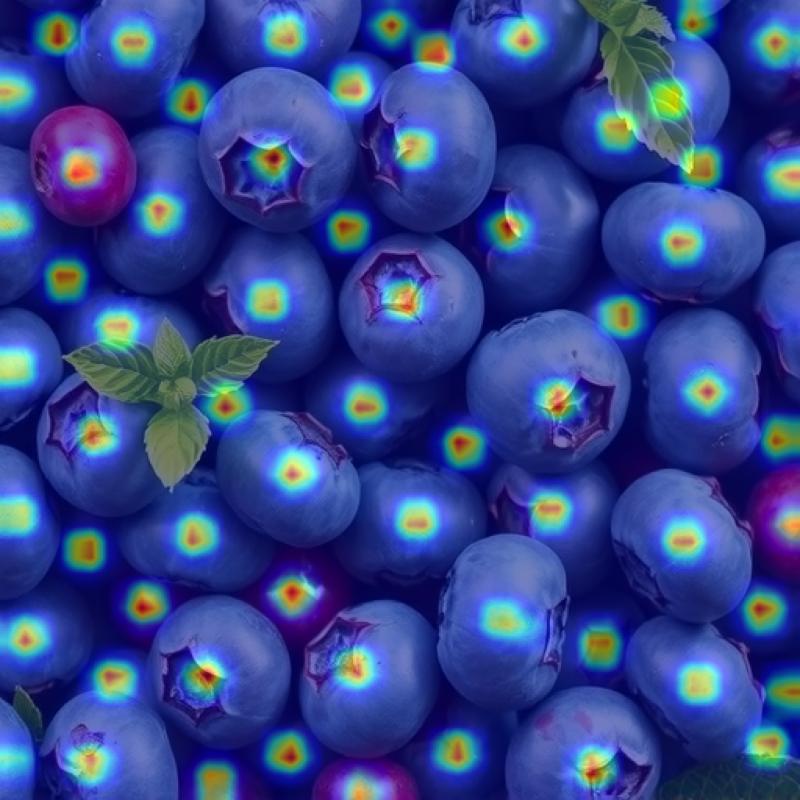} &
      \includegraphics[width=0.17\textwidth]{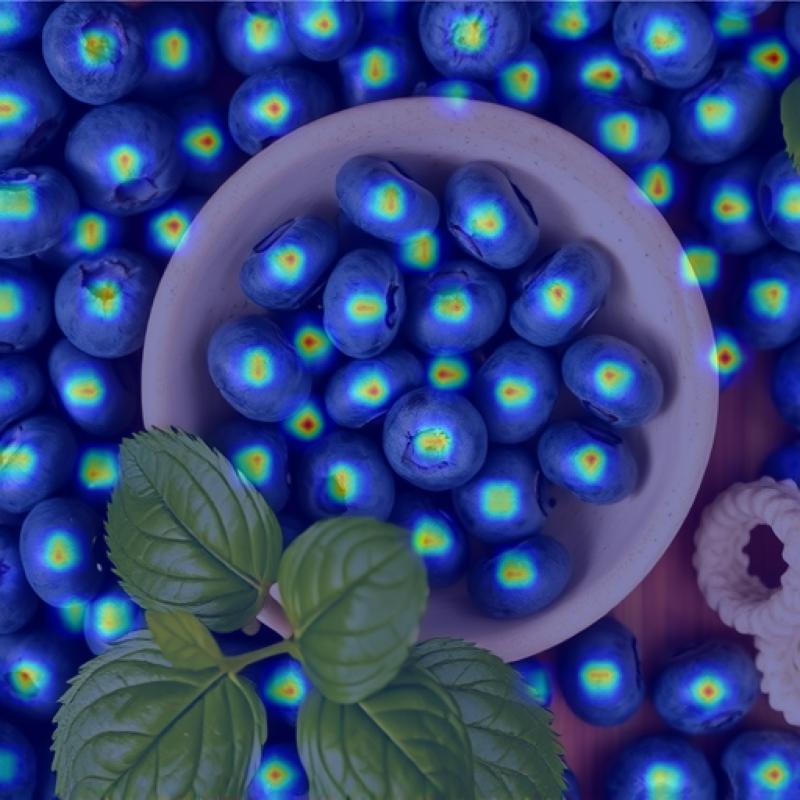} &
      \includegraphics[width=0.17\textwidth]{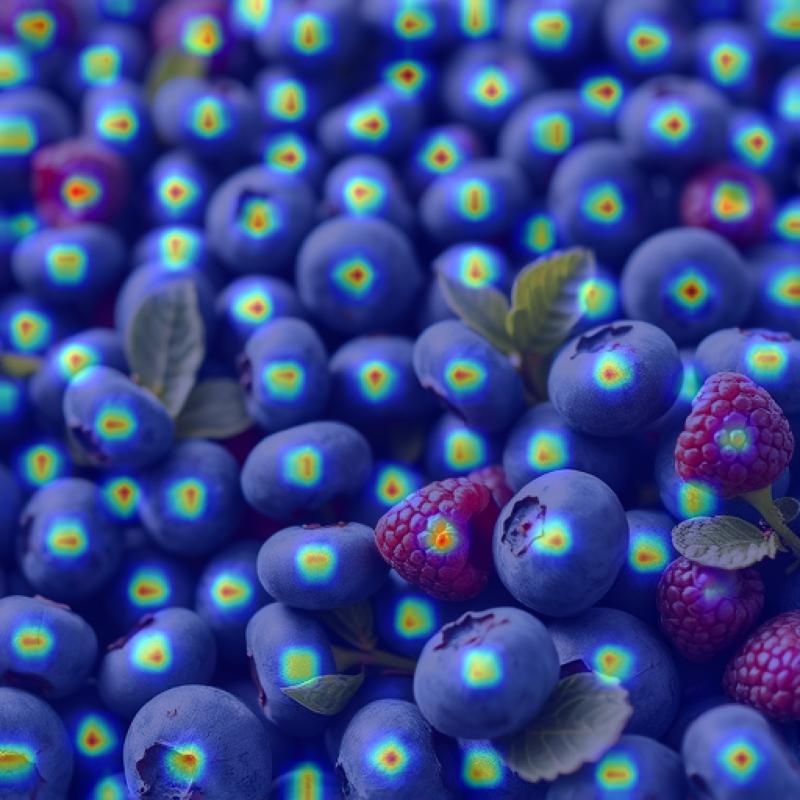} &
      \includegraphics[width=0.17\textwidth]{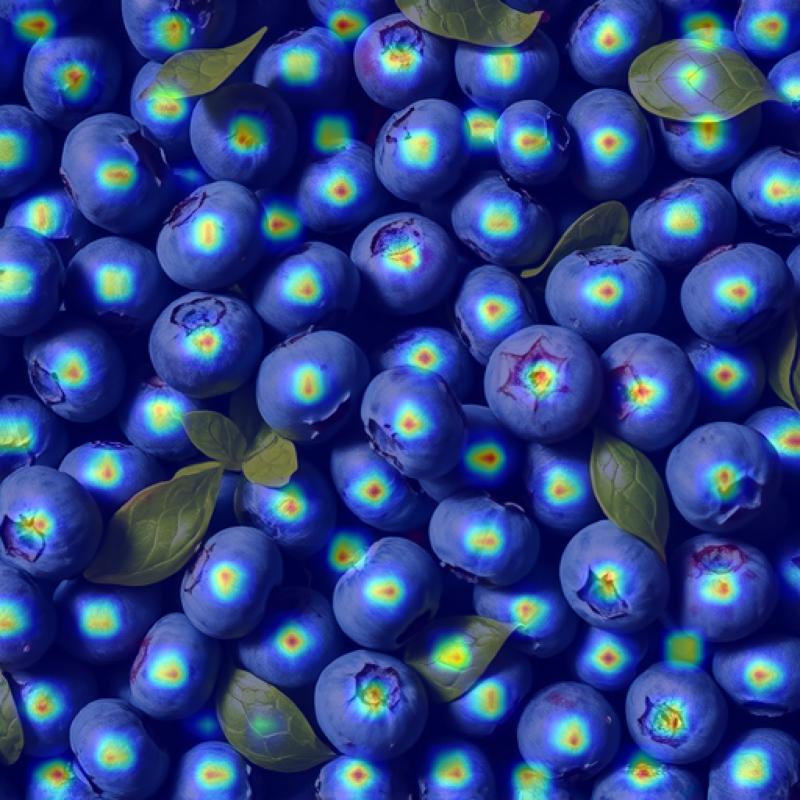} \\
      & & 57 \small($\Delta$25) & 62 \small($\Delta$20) & 63 \small($\Delta$19) & 83 \small($\Delta$-1) & 82 \small(\textcolor{blue}{$\Delta$0)} \\
      %\midrule 

      %\multirow{6}{*}{\raisebox{-3.0cm}{\rotatebox{90}{\textbf{Quantity-Aware}}}} & 
      %\multirow{6}{*}{\raisebox{-3.0cm}{\rotatebox{90}{\small \textbf{Complex ($>=$ 25 quantities.)}}}} 
      & & 
      \includegraphics[width=0.17\textwidth]{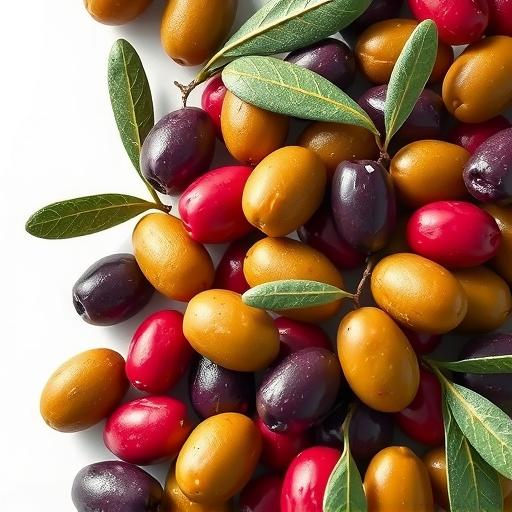} &
      \includegraphics[width=0.17\textwidth]{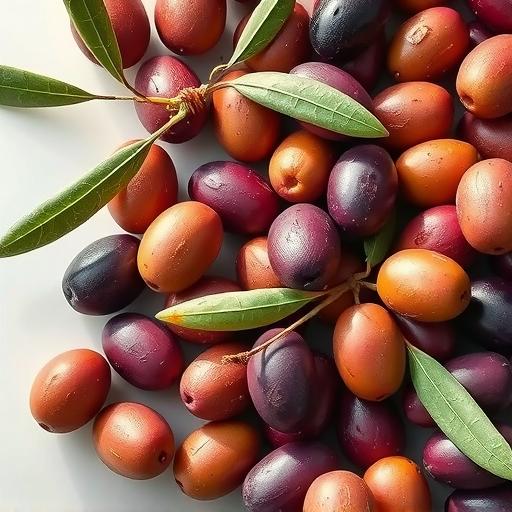} &
      \includegraphics[width=0.17\textwidth]{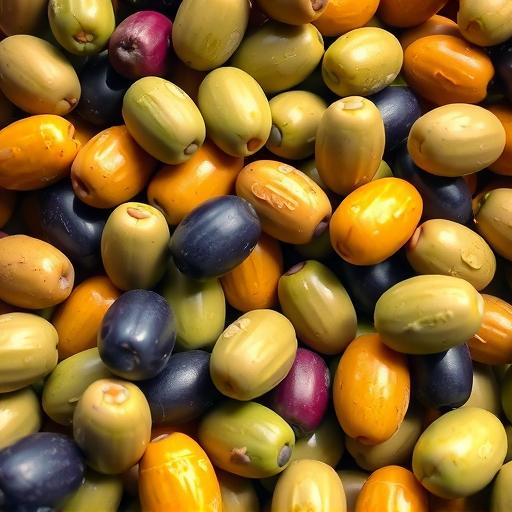} &
      \includegraphics[width=0.17\textwidth]{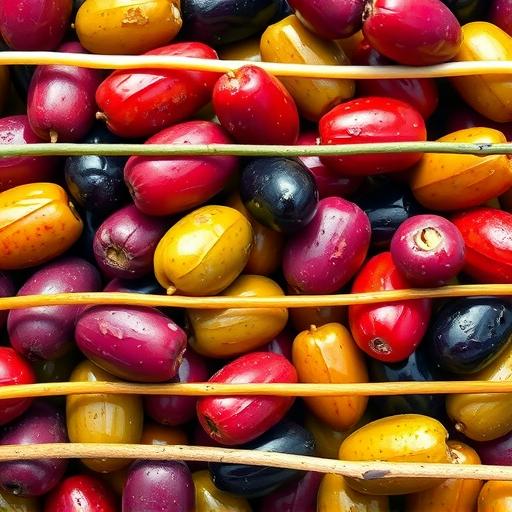} &
      \includegraphics[width=0.17\textwidth]{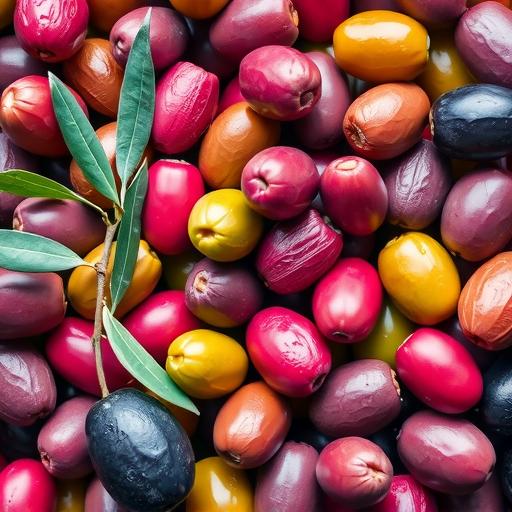} \\
      & & \multicolumn{5}{c}{\scriptsize \textit{``\textcolor{red}{63} olives''}} \\
      \addlinespace[3pt]

      & & \includegraphics[width=0.17\textwidth]{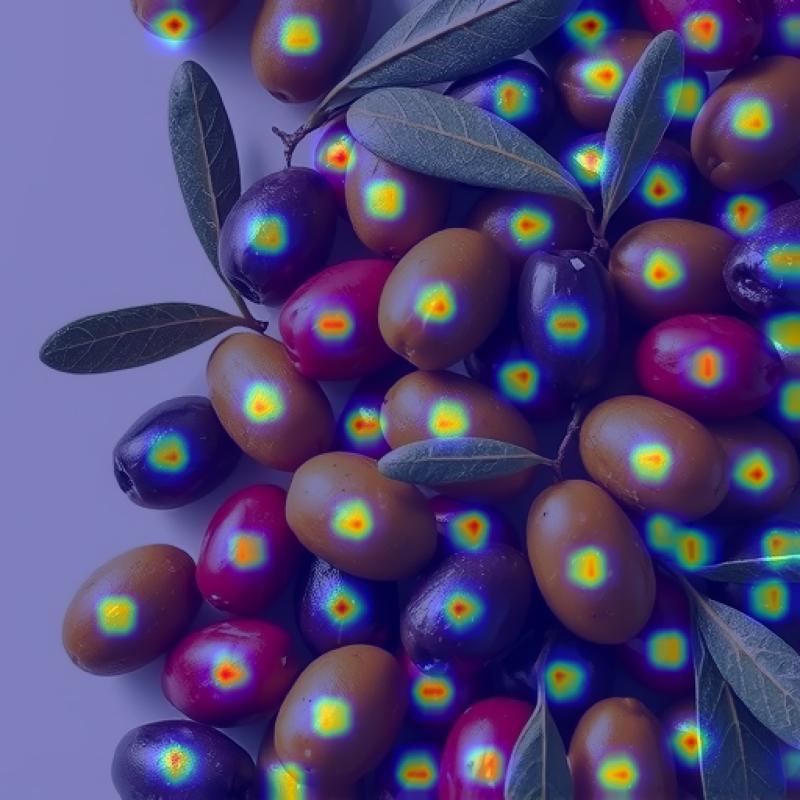} &
      \includegraphics[width=0.17\textwidth]{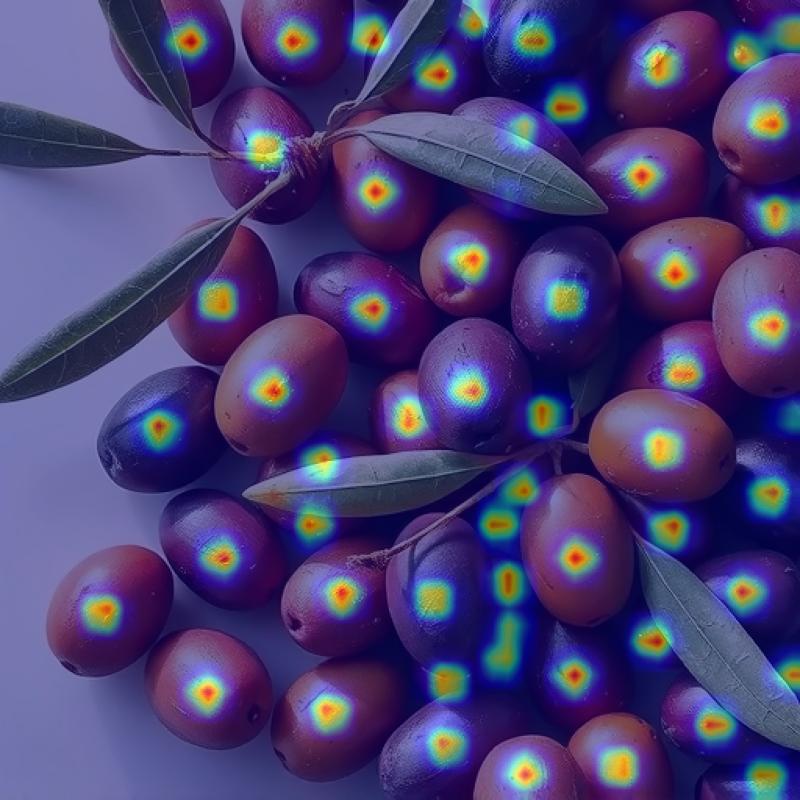} &
      \includegraphics[width=0.17\textwidth]{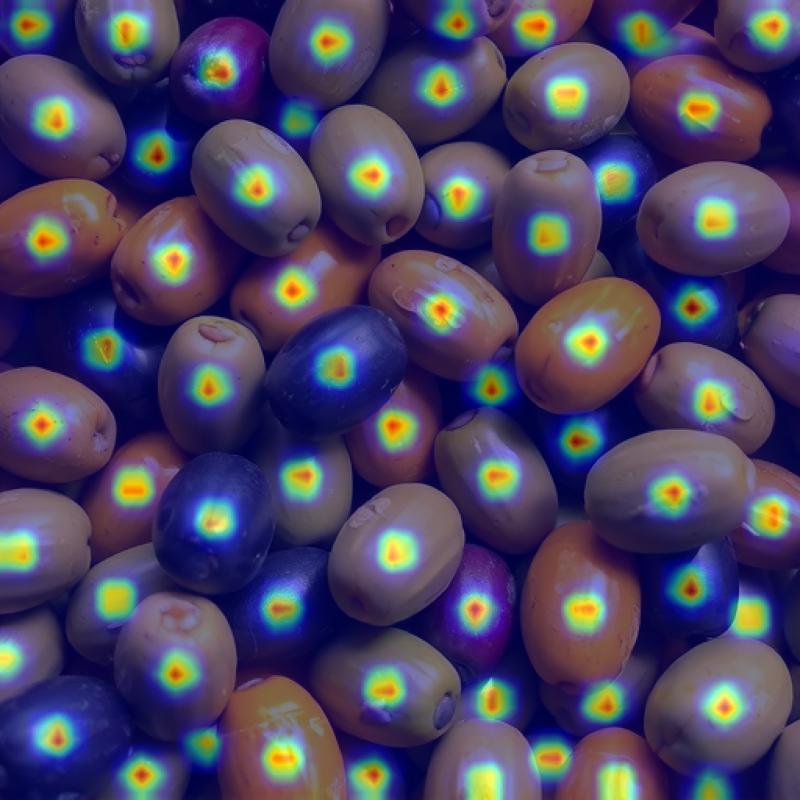} &
      \includegraphics[width=0.17\textwidth]{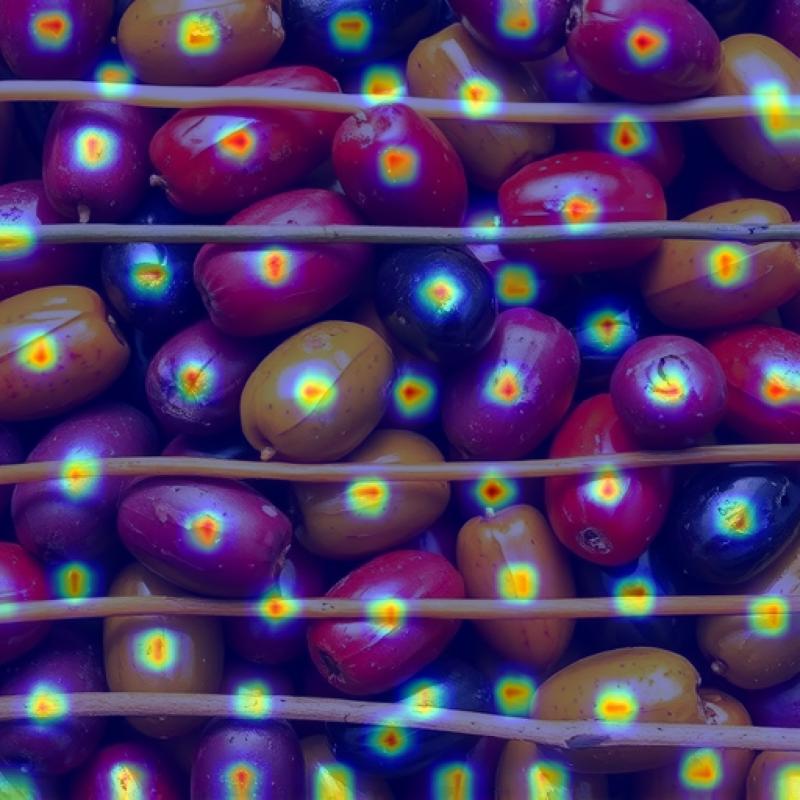} &
      \includegraphics[width=0.17\textwidth]{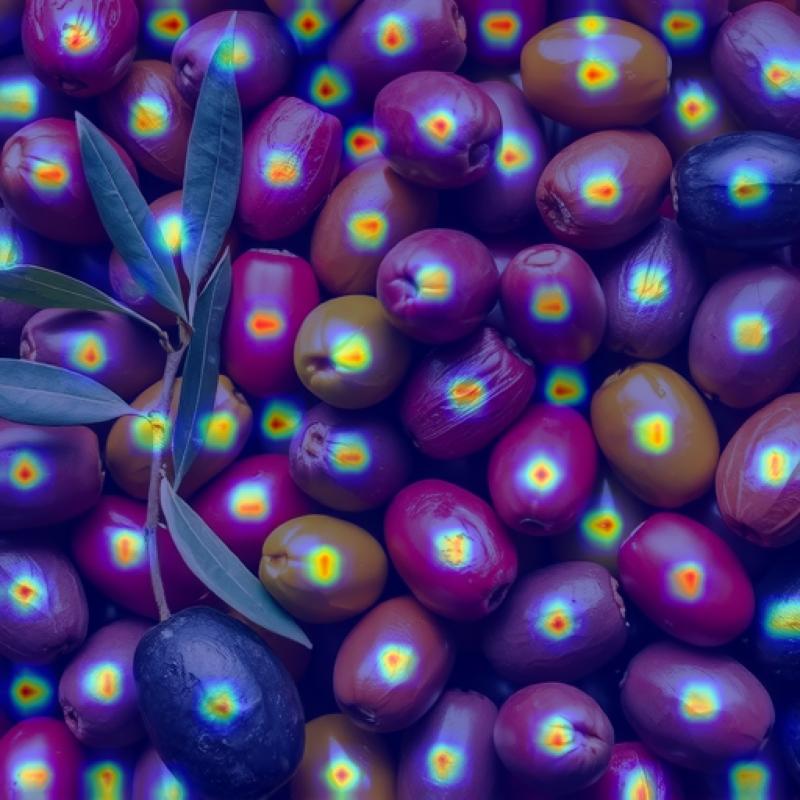} \\
      & & 51 \small($\Delta$12) & 53 \small($\Delta$10) & 62 \small($\Delta$1) & 56 \small($\Delta$7) & 63 \small(\textcolor{blue}{$\Delta$0)} \\
      \midrule % 구분선
      
      % --- Simple Group (3 Rows) ---
      \multirow{6}{*}{\raisebox{-4.0cm}{\rotatebox{90}{\textbf{Quantity-Aware}}}} & 
      \multirow{6}{*}{\raisebox{-5.0cm}{\rotatebox{90}{\small \textbf{Simple ($<$ 25 quantities.)}}}} & 
      \includegraphics[width=0.17\textwidth]{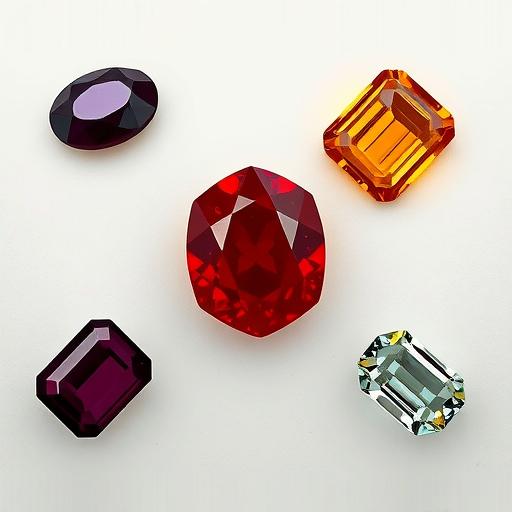} &
      \includegraphics[width=0.17\textwidth]{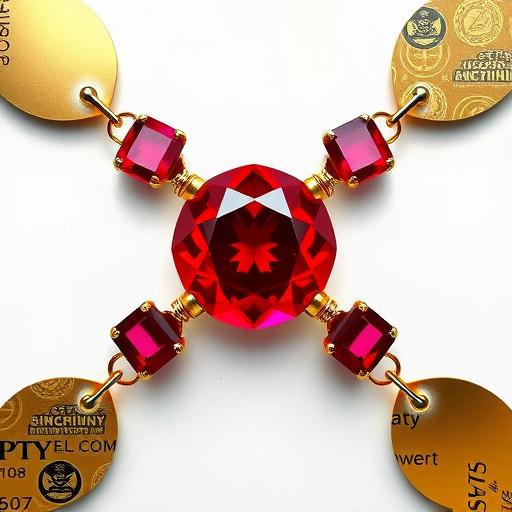} &
      \includegraphics[width=0.17\textwidth]{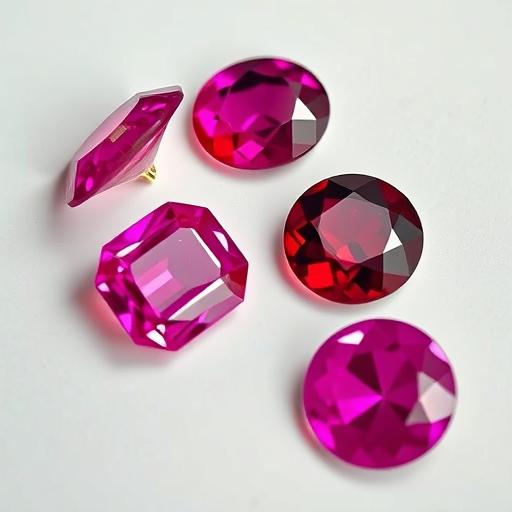} &
      \includegraphics[width=0.17\textwidth]{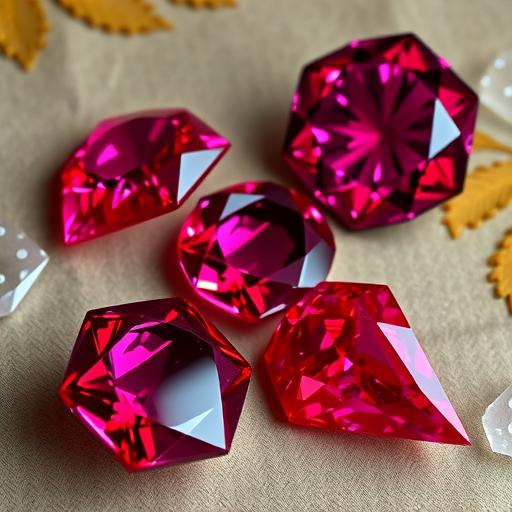} &
      \includegraphics[width=0.17\textwidth]{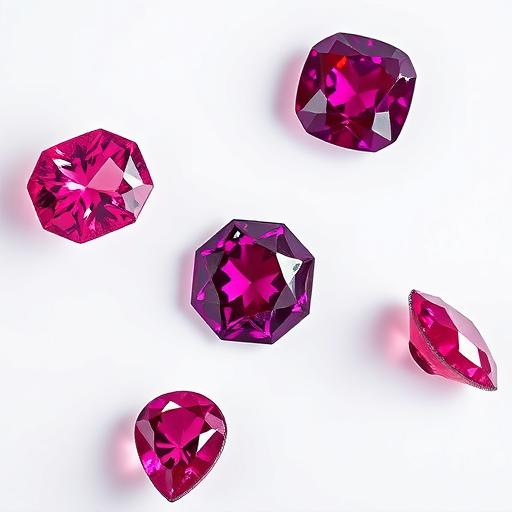} \\
      & & \multicolumn{5}{c}{\scriptsize \textit{``\textcolor{red}{5} rubies''}} \\
      \addlinespace[3pt]

      & & \includegraphics[ width=0.17\textwidth]{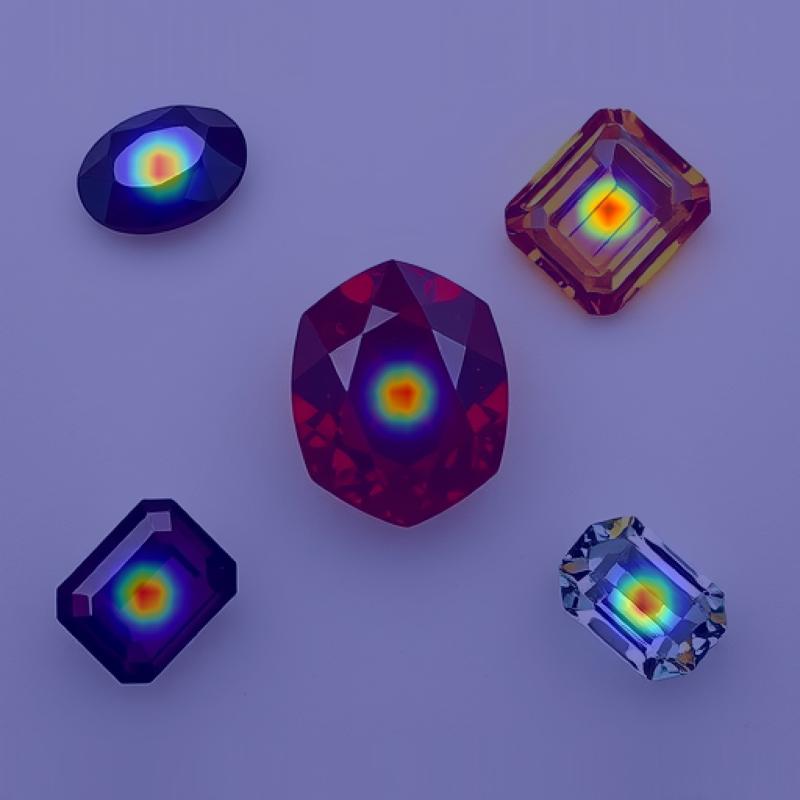} &
      \includegraphics[ width=0.17\textwidth]{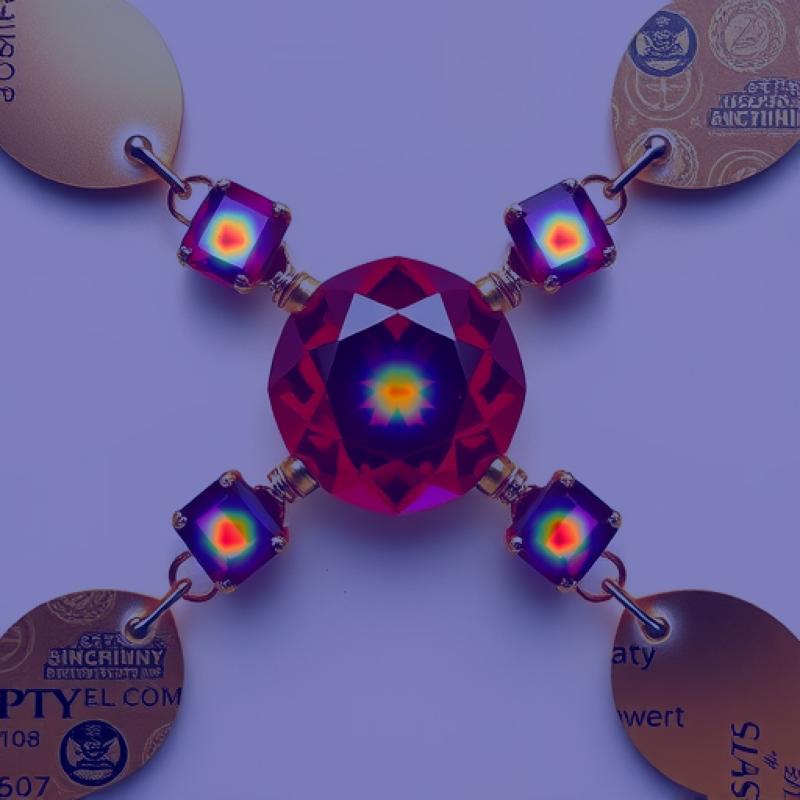} &
      \includegraphics[width=0.17\textwidth]{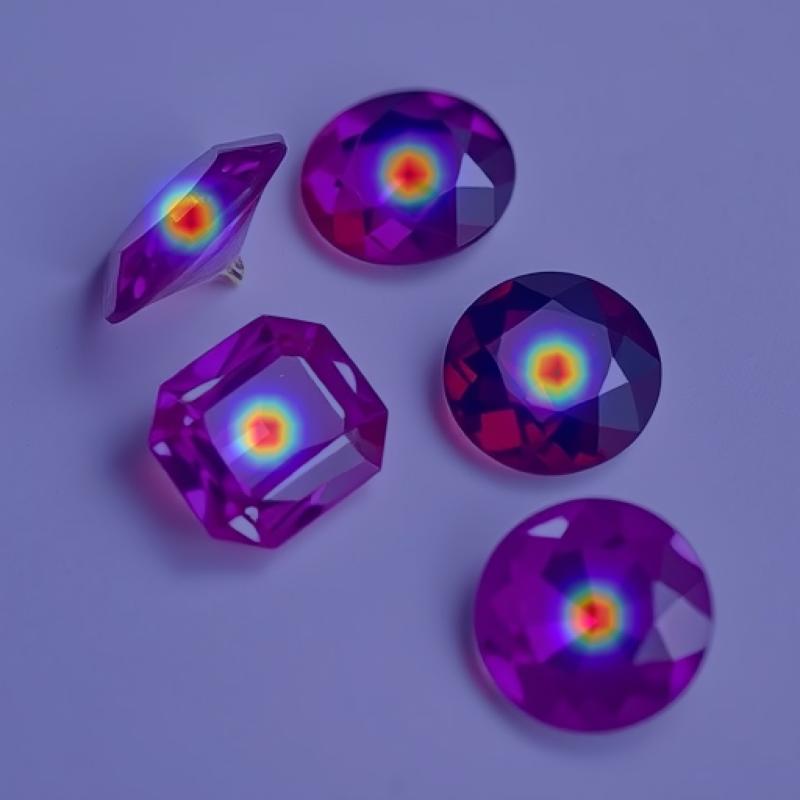} &
      \includegraphics[width=0.17\textwidth]{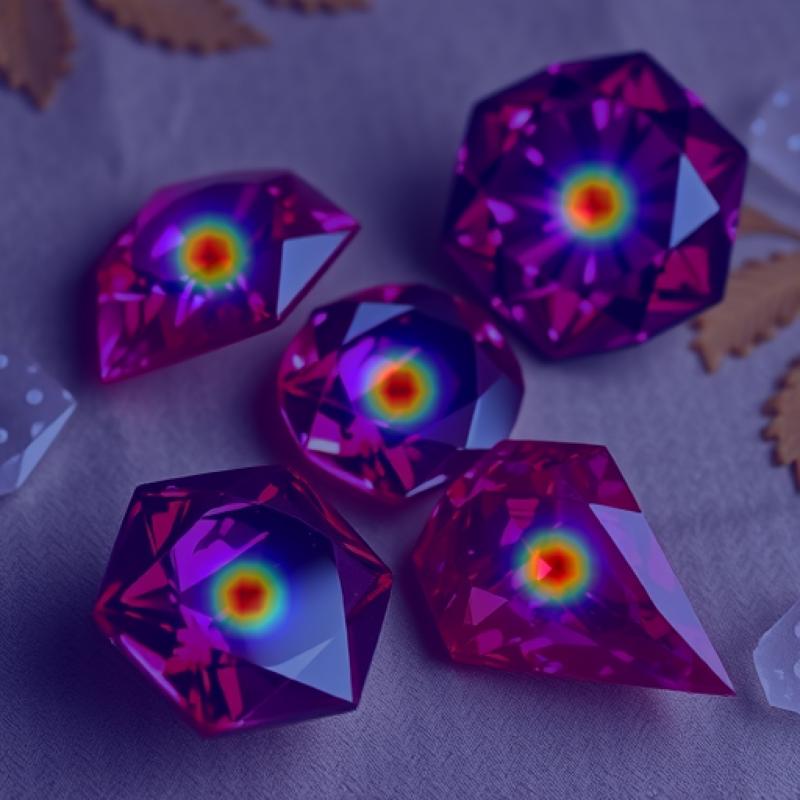} &
      \includegraphics[width=0.17\textwidth]{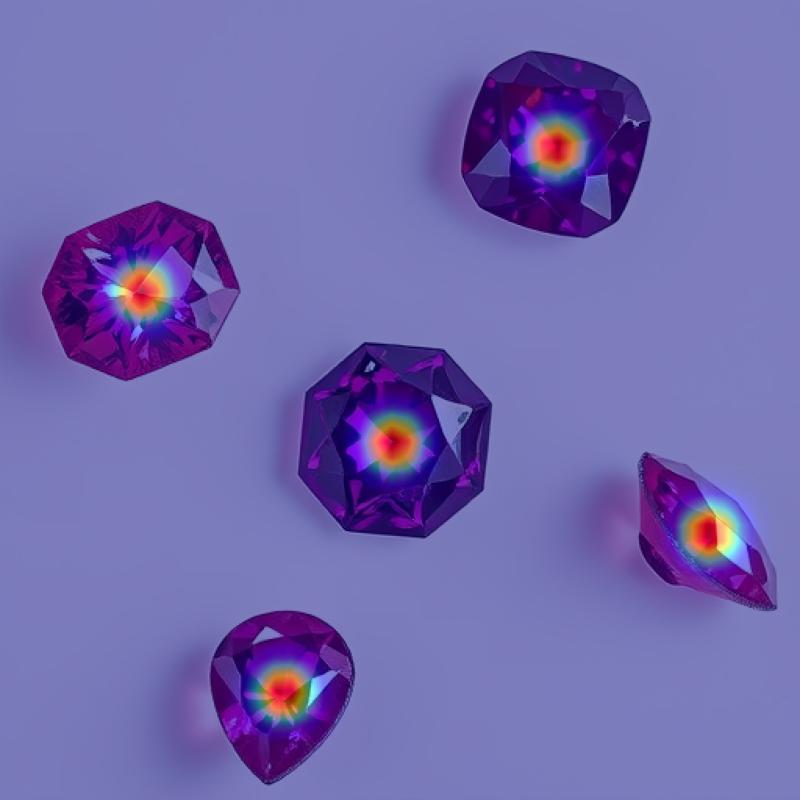} \\
      & & 5 \small($\Delta$0) & 5 \small($\Delta$0) & 5 \small($\Delta$0) & 5 \small($\Delta$0) & 5 \small(\textcolor{blue}{$\Delta$0}) \\
      & & 
      \includegraphics[width=0.17\textwidth]{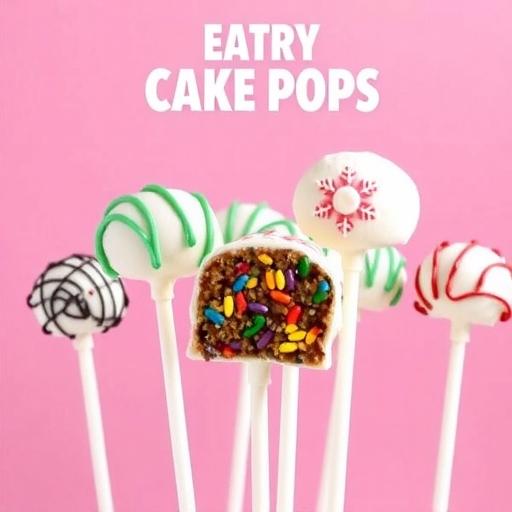} &
      \includegraphics[width=0.17\textwidth]{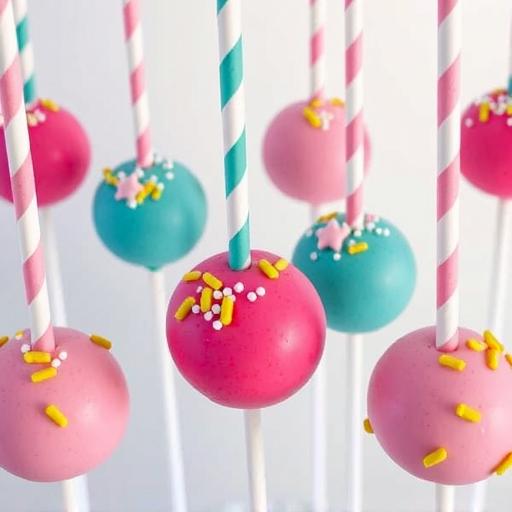} &
      \includegraphics[width=0.17\textwidth]{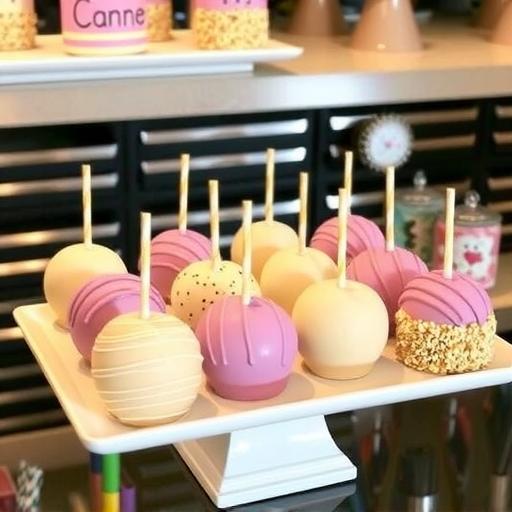} &
      \includegraphics[width=0.17\textwidth]{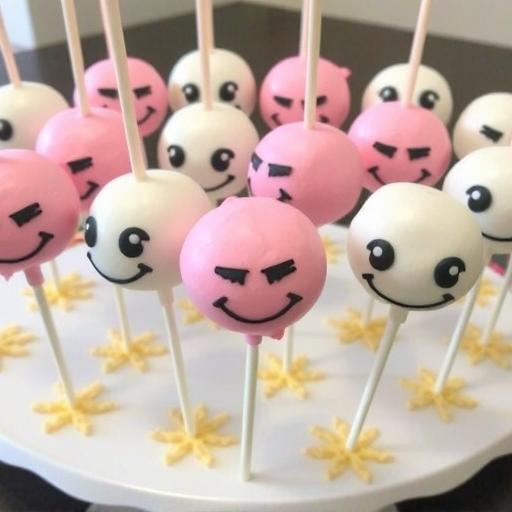} &
      \includegraphics[width=0.17\textwidth]{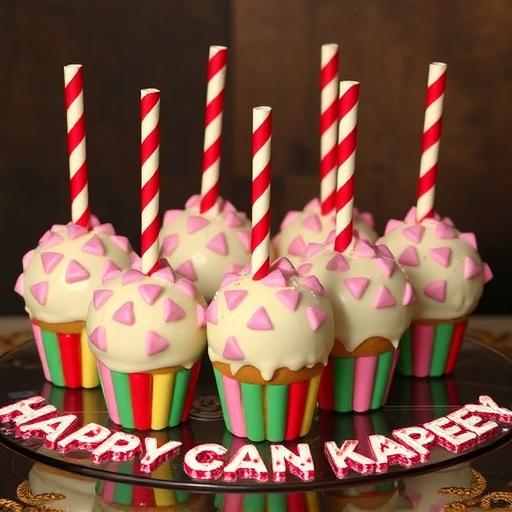} \\
      & & \multicolumn{5}{c}{\scriptsize \textit{``\textcolor{red}{14} cake pops''}} \\
      \addlinespace[3pt]

      & & \includegraphics[width=0.17\textwidth]{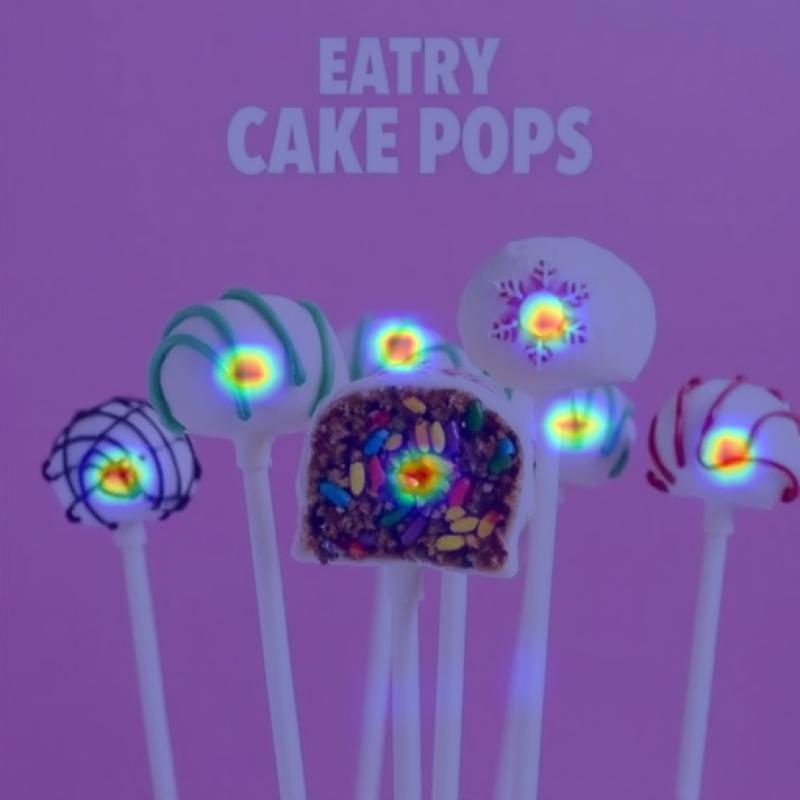} &
      \includegraphics[width=0.17\textwidth]{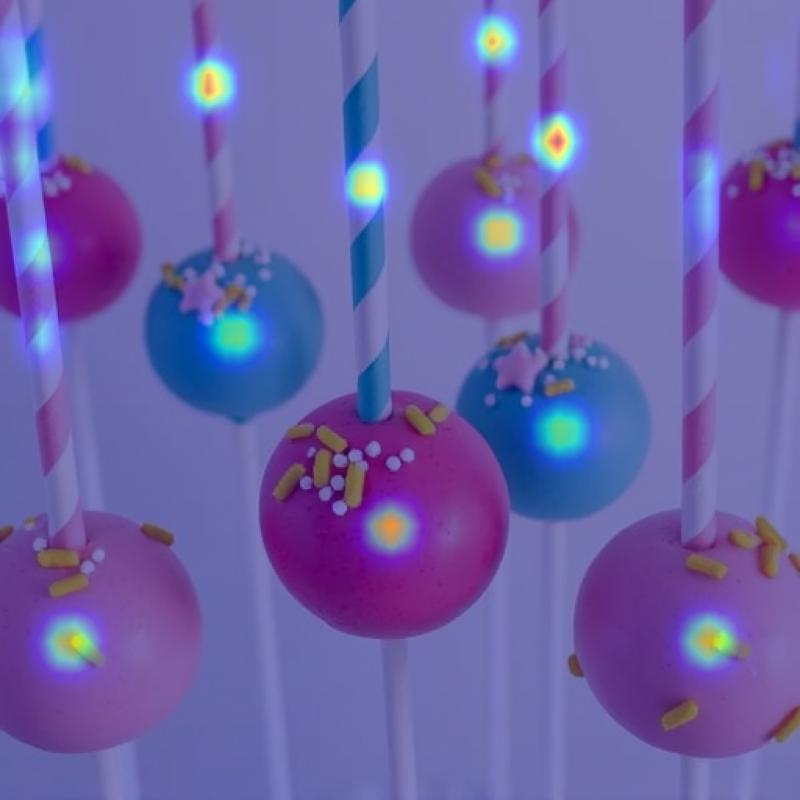} &
      \includegraphics[width=0.17\textwidth]{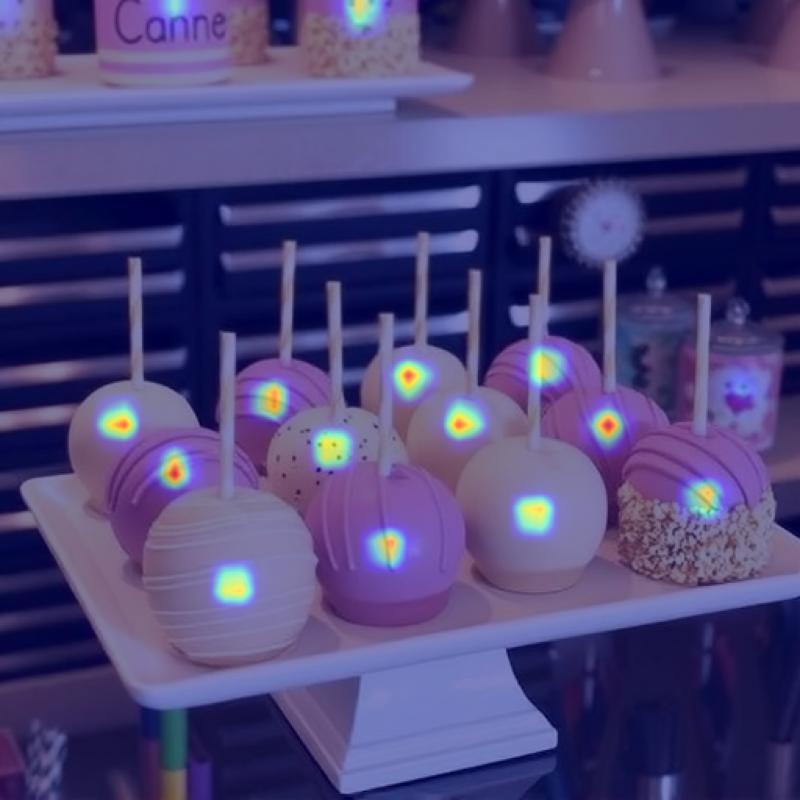} &
      \includegraphics[width=0.17\textwidth]{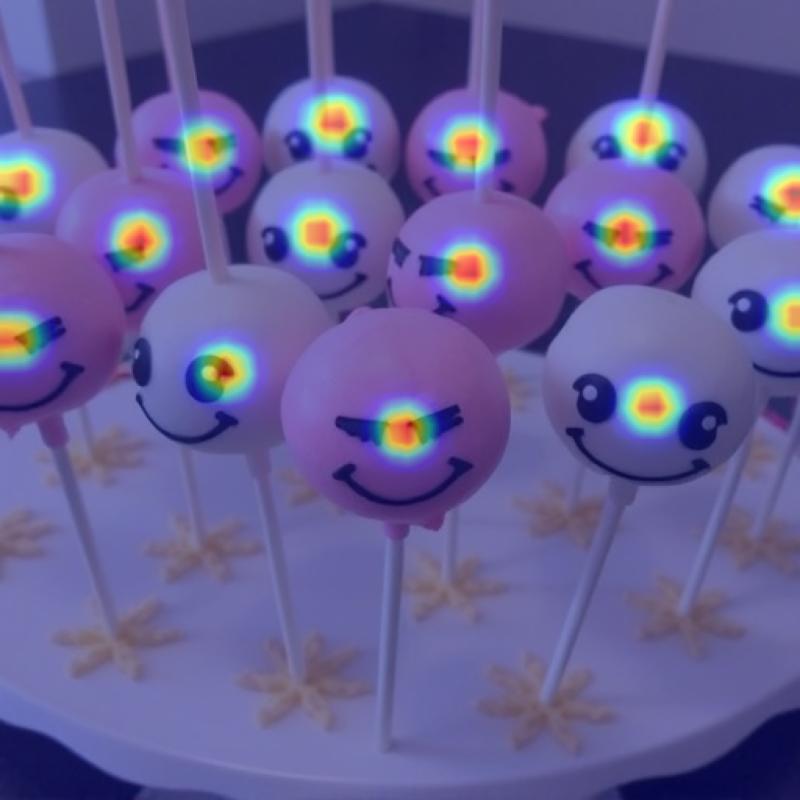} &
      \includegraphics[width=0.17\textwidth]{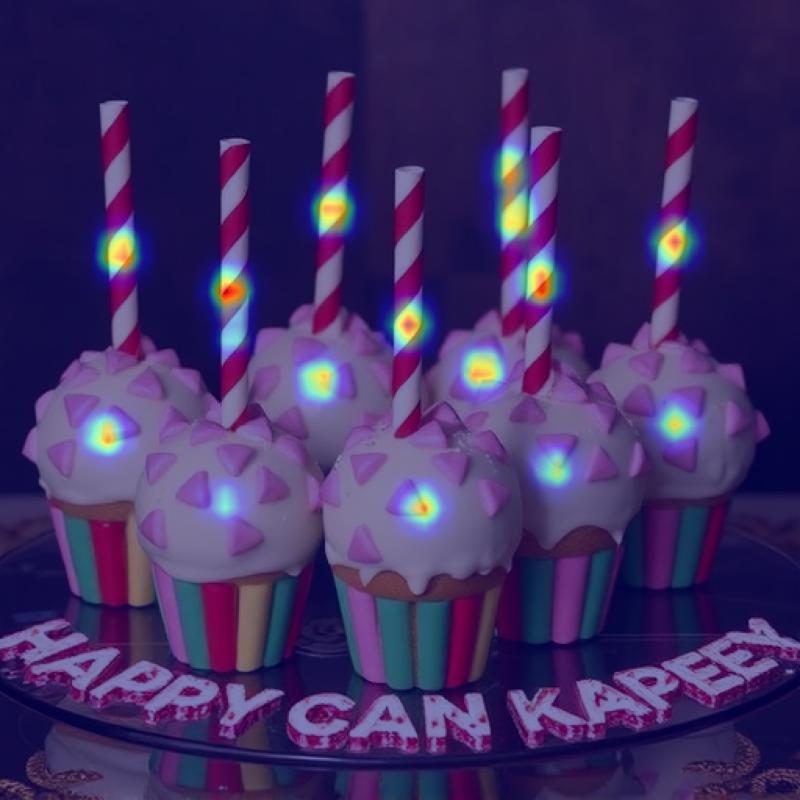} \\
      & & 8 \small($\Delta$6) & 14 \small($\Delta$0) & 14 \small($\Delta$0) & 14 \small($\Delta$0) & 14 \small(\textcolor{blue}{$\Delta$0}) \\
      
      \bottomrule
    
    \end{tabular}

    \caption{Qualitative comparison on complex and simple quantity-aware tasks. PATHS (Ours) demonstrates superior performance in satisfying quantity constraints across varying levels of difficulty.}
    \label{fig:qualitative_appendix_quantity}
  \end{center}
\end{figure}

%\scriptsize 

%%%%%%%%%%%%%%%%%%%%%%%%%%%%%%%%%%%%%%%%%%%%%%%%%%%%%%%%%%%%

\end{document}